\documentclass{amsart}

\usepackage[utf8]{inputenc} % allow utf-8 input
\usepackage[T1]{fontenc}    % use 8-bit T1 fonts
\usepackage[colorlinks=true, citecolor=myblue, linkcolor=black]{hyperref}
\usepackage{url}            % simple URL typesetting
\usepackage{booktabs}       % professional-quality tables
\usepackage{amsfonts}       % blackboard math symbols
\usepackage{nicefrac}       % compact symbols for 1/2, etc.
\usepackage{microtype}      % microtypography
\usepackage{xcolor}         % colors
\usepackage{mathtools}
\usepackage{mathabx}
\usepackage{booktabs}
\usepackage[T1]{fontenc}
\usepackage{amsmath,amsfonts,mathrsfs,amssymb,amsthm}
\usepackage{enumerate}
\usepackage{bm}
\usepackage{dsfont}
\usepackage{stmaryrd}
\usepackage{wrapfig,xfrac}
\usepackage{tikz}
\usepackage{tikz-3dplot}
\usetikzlibrary{arrows.meta,decorations.pathreplacing,positioning,calc}
\usepackage{pgfplots}
\pgfplotsset{compat=1.18}
\definecolor{myblue}{rgb}{0.21, 0.34, 0.74}
\definecolor{mygrey}{rgb}{0.55, 0.57, 0.67}
\definecolor{myred}{rgb}{0.79, 0.0, 0.09}
\definecolor{mygreen}{rgb}{0.05, 0.5, 0.06}

\DeclareMathAlphabet{\mathscrbf}{OMS}{mdugm}{b}{n}
\usepackage[scr=boondoxo]{mathalpha}
\usepackage{cleveref}
\usepackage{subcaption}
\DeclareCaptionLabelFormat{empty}{}
\usepackage{comment}

\usepackage{wrapfig}
\usepackage{upgreek}

\usepackage{units}

\renewcommand{\S}{\textsection}

\newcommand{\de}{\mathrm{d}}

\newcommand{\Id}{\mathrm{Id}}

\newcommand{\proj}{\mathbf{P}}
\newcommand{\dist}{\mathrm{dist}}
\renewcommand{\S}{\mathbb{S}^{d-1}}

\newcommand{\bone}{\mathbf{1}}

\newcommand{\Xp}{\mathcal{X}}

\newcommand*\diff{\mathop{}\!\mathrm{d}}
\newcommand{\R}{{\rm I}\kern-0.18em{\rm R}}
\newcommand{\h}{{\rm I}\kern-0.18em{\rm H}}
\newcommand{\K}{{\rm I}\kern-0.18em{\rm K}}
\newcommand{\p}{{\rm I}\kern-0.18em{\rm P}}
\newcommand{\E}{{\rm I}\kern-0.18em{\rm E}}

\newcommand{\1}{{\rm 1}\kern-0.24em{\rm I}}
\newcommand{\N}{{\rm I}\kern-0.18em{\rm N}}

\newcommand{\x}{\mathcal{X}}

\newcommand{\hh}{{\sf h}}

\newcommand{\att}{\mathsf{\mathop{ATT}}}

\newcommand{\Np}{\mathcal{N}}

\numberwithin{equation}{section}

\usepackage[font=small,labelfont=bf]{caption}

\theoremstyle{plain}
\newtheorem{theorem}{Theorem}[section]
\newtheorem{assumption}{Assumption}

\newtheorem{proposition}[theorem]{Proposition}
\newtheorem{lemma}[theorem]{Lemma}
\newtheorem{corollary}[theorem]{Corollary}

\newtheorem{remark}[theorem]{Remark}

\begin{document}

\title{Scaling Limits of Long-Context Transformers}

%    General info
% \subjclass{Primary: 34D05, 34D06, 35Q83; Secondary: 52C17}

\author[G. Bruno]{Giuseppe Bruno}
%    Address of record for the research reported here
\address{(GB) Departement Mathematik und Statistik, University of Bern, Sidlerstrasse 5, 3012 Bern, Switzerland} %    Current address
\email{giuseppe.bruno@unibe.ch}

\author[S. Chen]{Shi Chen}
%    Address of record for the research reported here
\address{(SC) Department of Mathematics, Massachusetts Institute of Technology, 77 Massachusetts Ave, 02139 Cambridge MA, USA} %    Current address
\email{schen636@mit.edu}

\author[Z. Lin]{Zhengjiang Lin}
%    Address of record for the research reported here
\address{(ZL) Department of Mathematics, Massachusetts Institute of Technology, 77 Massachusetts Ave, 02139 Cambridge MA, USA} %    Current address
\email{linzj@mit.edu}

\author[Y. Polyanskiy]{Yury Polyanskiy}
\address{(YP) Department of Electrical Engineering and Computer Science, Massachusetts Institute of Technology, 77 Massachusetts Ave, 02139 Cambridge MA, USA}
\email{yp@mit.edu}

\author[P. Rigollet]{Philippe Rigollet}
%    Address of record for the research reported hereSYNCHRONIZATION
\address{(PR) Department of Mathematics, Massachusetts Institute of Technology, 77 Massachusetts Ave, 02139 Cambridge MA, USA} %    Current address
\email{rigollet@math.mit.edu}

\date{}

\keywords{}

\begin{abstract}
We study the long-context limit of softmax self-attention with a fixed query and a random context of $n$ i.i.d.\ keys on the sphere, viewing the inverse temperature $\beta_n$ as the scaling parameter that decides whether attention degenerates into uniform averaging or collapses onto the single closest key. We show that the critical scale at which selectivity emerges is determined by the local exponent of the distance-to-query distribution near zero rather than by global features of the context, and scales like $\beta_n^\ast \asymp n^{2/(d-1)}$ for uniform keys on $\S$. Furthermore, we characterize the limiting laws of the ordered attention weights and of the attention output across all regimes of $\beta_n$: a subcritical regime in which the output reduces to a local average around $q$ with explicit deterministic bias and Gaussian fluctuations; a critical regime in which a finite collection of nearest keys retains macroscopic mass without single-key collapse; and a supercritical regime in which all mass concentrates on the closest key. Of notable interest is the subcritical case with identity value matrix where the attention map approximately implements a backward heat equation.
\end{abstract}

\maketitle

\setcounter{tocdepth}{2}
\makeatletter
\def\l@subsection{\@tocline{2}{0pt}{2.8pc}{5pc}{}}

\tableofcontents

%-------------------------------------------------

%%%%%%%%%%%%%%%%%%%%%%%%%%%%%%%%%%%%%%%

\section{Introduction}\label{sec:introduction}

The push toward ever longer contexts has placed transformers on a trajectory in which the context length $n$ is no longer a fixed architectural detail but a scaling parameter in its own right~\cite{dai2019transformer,beltagy2020longformer,zaheer2020big,gemini}. From this vantage point, a finite-context attention head is naturally understood as a finite-sample approximation of an idealized \emph{infinite-context} limit. Context length then plays the role of a resource controlling how faithfully the finite system resolves this idealization, and its $n\to\infty$ scaling limit becomes the natural object to study.

Self-attention~\cite{vaswani2017attention} with Post-LayerNorm\footnote{We show how PostLN attention with general Key and Query matrices reduces to this model in Section~\ref{sec:main model and assumptions}.} routes a query $q\in\S$ against a context of keys $x_1,\dots,x_n\in\S$ through the softmax attention weights
\begin{equation}\label{eq:attention-weights-intro}
A_j \;=\; \frac{\exp\bigl(\beta_n \langle q,x_j\rangle\bigr)}{\sum_{l=1}^n \exp\bigl(\beta_n \langle q,x_l\rangle\bigr)},
\qquad j=1,\dots,n,
\end{equation}
which form a probability distribution over the context and are then used to aggregate the $x_j$. The inverse temperature $\beta_n$ is a classical device employed in long-context transformers~\cite{bai2023qwen,peng2023yarn, zhang2024selective, anson2025scale,bruno2025multiscale,nakanishi2025scalable,puvvada2025swan,chen2025critical,giorlandino2025failuremodesdeeptransformers,qu2026tabiclv2} that tunes how sharply $A=(A_1,\dots,A_n)$ separates keys with highest scores from the bulk.   In all settings,  $\beta_n$ is set to grow with the context length $n$. With no rescaling ($\beta_n=1$), even a key whose alignment $\langle q,x_j\rangle$ exceeds the bulk by a constant margin still produces a weight $A_j = \Theta(1/n)$: the exponential gap such a key introduces is dwarfed by the $n$ competing terms in the softmax denominator. The head loses the ability to distinguish relevant keys from the bulk---a phenomenon now commonly called \emph{attention dilation} or \emph{attention fading}~\cite{velivckovic2024softmax,nakanishi2025scalable}---and $\beta_n$ must grow with $n$ at the right rate to preserve selectivity. 

A first wave of theoretical work has examined the scaling limits as $n \to \infty, \beta_n\to \infty$ through simple, fully tractable models in which the fluctuations across keys are suppressed by design. These analyses connect attention sharpness to the contraction properties of self-attention layers~\cite{geshkovski2023mathematical,chen2025quantitative} and identify a logarithmic critical scale~\cite{nakanishi2025scalable,chen2025critical}: when $\beta_n \asymp \log n$, attention concentrates on $\Theta(1)$ tokens while remaining content-adaptive. Such results pin down the right scale, but the price of tractability is that the random objects which actually live at this scale---the size of the largest weight, the gap to the runner-up, the effective number of attended tokens, and the corresponding fluctuations of the attention map---are by construction invisible. A finer description requires that the keys themselves be treated as random and that the entire random profile of attention be analyzed at large $n$.

\begin{figure}[t]
\centering

% -----------------------------------------------------------------------------
% Colors adapted from the uploaded ordered-weight histogram figure.
% -----------------------------------------------------------------------------
\definecolor{CBBlueFill}{RGB}{222,235,247}
\definecolor{CBBlueFillAlt}{RGB}{206,226,242}
\definecolor{CBBlueDraw}{RGB}{31,119,180}
\definecolor{CBOrangeFill}{RGB}{252,230,206}
\definecolor{CBOrangeDraw}{RGB}{255,127,14}
\definecolor{CBNeutralFill}{RGB}{245,246,248}
\definecolor{CBNeutralDraw}{RGB}{95,99,104}

\colorlet{OWSubFill}{CBBlueFill}
\colorlet{OWSubDraw}{CBBlueDraw}
\colorlet{OWCritFill}{CBNeutralFill}
\colorlet{OWCritDraw}{CBNeutralDraw}
\colorlet{OWSupFill}{CBOrangeFill}
\colorlet{OWSupDraw}{CBOrangeDraw}

% Compact histogram panels so they fit cleanly above the main plot.
\newcommand{\OWPanelScale}{1.00}

\newcommand{\OWSubcriticalPanel}{%
\begin{tikzpicture}[
  scale=\OWPanelScale,
  transform shape,
  >=Latex,
  panel/.style={draw=black!20, fill=white, rounded corners=2pt, line width=0.45pt, dashed},
  axis/.style={-{Latex[length=1.3mm]}, black!55, line width=0.42pt},
  tick/.style={black!40, line width=0.32pt},
  note/.style={font=\scriptsize, align=center},
  title/.style={font=\small\bfseries, align=center},
  bar/.style={line width=0.2pt, rounded corners=0.3pt}
]
  \filldraw[panel] (-0.2,-0.2) rectangle (3.3,1.45);
  \node[note, text=OWSubDraw] at (1.62,0.9) {$\beta_n\ll n^{2/(d-1)}$};

  \draw[axis] (0.10,0) -- (3.02,0) node[right=1pt, font=\scriptsize] {$i$};
  \draw[axis] (0.10,0) -- (0.10,1.02) node[above=1pt, font=\scriptsize] {$A_{(i)}$};
  \draw[tick] (0.05,0.82) -- (0.15,0.82) node[left=3pt, font=\tiny] {$1$};

  \foreach \x/\h in {
    0.22/0.11,0.42/0.106,0.62/0.101,0.82/0.097,
    1.02/0.092,1.22/0.088,1.42/0.083,1.62/0.079,
    1.82/0.075,2.02/0.071,2.22/0.067,2.42/0.063,
    2.62/0.059
  } {
    \filldraw[bar, fill=OWSubFill, draw=OWSubDraw] (\x,0) rectangle ++(0.13,\h);
  }

\end{tikzpicture}%
}

\newcommand{\OWCriticalPanel}{%
\begin{tikzpicture}[
  scale=\OWPanelScale,
  transform shape,
  >=Latex,
  panel/.style={draw=black!20, fill=white, rounded corners=2pt, line width=0.45pt, dashed},
  axis/.style={-{Latex[length=1.3mm]}, black!55, line width=0.42pt},
  tick/.style={black!40, line width=0.32pt},
  note/.style={font=\scriptsize, align=center},
  title/.style={font=\small\bfseries, align=center},
  bar/.style={line width=0.2pt, rounded corners=0.3pt}
]
  \filldraw[panel] (-0.2,-0.2) rectangle (3.3,1.45);
  \node[note, text=OWCritDraw] at (1.62,0.9) {$\beta_n\sim\gamma n^{2/(d-1)}$};

  \draw[axis] (0.10,0) -- (3.02,0) node[right=1pt, font=\scriptsize] {$i$};
  \draw[axis] (0.10,0) -- (0.10,1.02) node[above=1pt, font=\scriptsize] {$A_{(i)}$};
  \draw[tick] (0.05,0.82) -- (0.15,0.82) node[left=3pt, font=\tiny] {$1$};

  \foreach \x/\h in {
    0.28/0.45,0.56/0.28,0.84/0.18,1.12/0.12,
    1.40/0.08,1.68/0.05,1.96/0.032,2.24/0.020
  } {
    \filldraw[bar, fill=OWCritFill, draw=OWCritDraw] (\x,0) rectangle ++(0.18,\h);
  }

\end{tikzpicture}%
}

\newcommand{\OWSupercriticalPanel}{%
\begin{tikzpicture}[
  scale=\OWPanelScale,
  transform shape,
  >=Latex,
  panel/.style={draw=black!20, fill=white, rounded corners=2pt, line width=0.45pt, dashed},
  axis/.style={-{Latex[length=1.3mm]}, black!55, line width=0.42pt},
  tick/.style={black!40, line width=0.32pt},
  note/.style={font=\scriptsize, align=center},
  title/.style={font=\small\bfseries, align=center},
  bar/.style={line width=0.2pt, rounded corners=0.3pt}
]
  \filldraw[panel] (-0.2,-0.2) rectangle (3.3,1.45);
  \node[note, text=OWSupDraw] at (1.62,0.9) {$\beta_n\gg n^{2/(d-1)}$};

  \draw[axis] (0.10,0) -- (3.02,0) node[right=1pt, font=\scriptsize] {$i$};
  \draw[axis] (0.10,0) -- (0.10,1.02) node[above=1pt, font=\scriptsize] {$A_{(i)}$};
  \draw[tick] (0.05,0.82) -- (0.15,0.82) node[left=3pt, font=\tiny] {$1$};

  \foreach \x/\h in {
    0.28/0.82,0.62/0.035,0.88/0.022,1.14/0.014,
    1.40/0.010,1.66/0.007,1.92/0.005,2.18/0.0035
  } {
    \filldraw[bar, fill=OWSupFill, draw=OWSupDraw] (\x,0) rectangle ++(0.20,\h);
  }

\end{tikzpicture}%
}

\resizebox{0.95\textwidth}{!}{
\begin{tikzpicture}

% -----------------------------------------------------------------------------
% Concrete parameters for the scaling curve.
% The y-axis is placed at x=0, while the 1/beta_n curve starts at a small
% positive beta value so it does not intersect the axis or blow up.
% -----------------------------------------------------------------------------
\pgfmathsetmacro{\nval}{64}
\pgfmathsetmacro{\alphaval}{0.5}
\pgfmathsetmacro{\betaA}{pow(\nval,\alphaval/(1+\alphaval))}
\pgfmathsetmacro{\betaB}{pow(\nval,\alphaval)}
\pgfmathsetmacro{\midexp}{1 - 1/\alphaval}
\pgfmathsetmacro{\yA}{1/\betaA}
\pgfmathsetmacro{\yB}{pow(\nval,-\alphaval/2)}
\pgfmathsetmacro{\axisxmin}{0}
\pgfmathsetmacro{\curvexmin}{0.7}
\pgfmathsetmacro{\xmax}{12.8}
\pgfmathsetmacro{\ymax}{1.50}
\pgfmathsetmacro{\critLeft}{\betaB-0.06}
\pgfmathsetmacro{\critRight}{\betaB+0.06}

\begin{axis}[
    name=phaseplot,
    width=13.3cm,
    height=7.5cm,
    xmin=\axisxmin, xmax=\xmax,
    ymin=0, ymax=\ymax,
    axis lines=left,
    xlabel={$\beta_n$},
    ylabel={$\mathbb{E}\|\mathcal Y_n(q)\|$},
    clip=false,
    xtick={\betaA,\betaB},
    xticklabels={
      $n^{2/(d+1)}$,
      $n^{2/(d-1)}$
    },
    ytick={\yA,\yB},
    yticklabels={
      $n^{-2/(d+1)}$,
      $n^{-1/(d-1)}$
    },
    tick label style={font=\small},
    label style={font=\small},
    every axis x label/.style={at={(ticklabel* cs:1)}, anchor=west},
    every axis y label/.style={at={(ticklabel* cs:1)}, anchor=south},
]

% Background regions.
\fill[CBBlueFill!40]    (axis cs:\axisxmin,0) rectangle (axis cs:\betaA,\ymax-0.03pt);
\fill[CBBlueFillAlt!40] (axis cs:\betaA,0) rectangle (axis cs:\betaB,\ymax-0.03pt);
\fill[CBOrangeFill!35]  (axis cs:\betaB,0) rectangle (axis cs:\xmax-0.1pt,\ymax-0.03pt);
\fill[black!30]    (axis cs:\critLeft,0) rectangle (axis cs:\critRight,\ymax-0.03pt);

% Crossover lines.
\draw[densely dashed, black!55]
  (axis cs:\betaA,0) -- (axis cs:\betaA,\yA);
\draw[densely dashed, CBNeutralDraw]
  (axis cs:\betaB,0) -- (axis cs:\betaB,\yB);

% Horizontal guides.
\draw[densely dashed, black!35]
  (axis cs:\axisxmin,\yA) -- (axis cs:\betaA,\yA);
\draw[densely dashed, black!35]
  (axis cs:\axisxmin,\yB) -- (axis cs:\xmax,\yB);

% Actual piecewise curve. The left curve begins at curvexmin > 0.
\addplot[
    very thick,
    CBBlueDraw,
    domain=\curvexmin:\betaA,
    samples=200
] {1/x};

\addplot[
    very thick,
    CBNeutralDraw,
    domain=\betaA:\betaB,
    samples=200
] {pow(x-\betaA,2)*(pow(\nval*pow(\betaB,\midexp),-0.5)-pow(\nval*pow(\betaA,\midexp),-0.5))/pow(\betaB-\betaA,2)+pow(\nval*pow(\betaA,\midexp),-0.5)};

\addplot[
    very thick,
    CBOrangeDraw,
    domain=\betaB:\xmax,
    samples=2
] {\yB};

% Crossover points.
\fill[black] (axis cs:\betaA,\yA) circle (2pt);
\fill[black] (axis cs:\betaB,\yB) circle (2pt);

% Labels for the three scaling pieces.
\node[font=\footnotesize, rotate=-35] at (axis cs:2,0.61)
    {$\beta_n^{-1}$};
\node[font=\footnotesize, rotate=2, align=center] at (axis cs:5.95,0.21)
    {$n^{-1/2}\beta_n^{(d-3)/4}$};

% Region labels.
\node[font=\footnotesize\bfseries, align=center, text=CBBlueDraw]
    at (axis cs:2.0,0.08) {drift $\gg$ fluctuation};
\node[font=\footnotesize\bfseries, align=center, text=CBNeutralDraw]
    at (axis cs:5.9,0.08) {fluctuation $\gg$ drift};
    
% Anchor points for the external histogram panels.
\coordinate (subpaneltarget)  at (axis cs:2.10,0.84);
\coordinate (critpaneltarget) at (axis cs:\betaB,\yB);
\coordinate (suppaneltarget)  at (axis cs:9.15,0.69);

\end{axis}

% External histogram panels.
\node[inner sep=0pt, anchor=south] (subpanel)
  at ($(phaseplot.north west)+(3.80cm,-2.2cm)$) {\OWSubcriticalPanel};

\node[inner sep=0pt, anchor=south] (critpanel)
  at ($(phaseplot.north)+(1.40cm,-4.2cm)$) {\OWCriticalPanel};

\node[inner sep=0pt, anchor=south] (suppanel)
  at ($(phaseplot.north east)+(-2.2cm,-2.2cm)$) {\OWSupercriticalPanel};

\node[font=\footnotesize\bfseries] at (2.2,6) {Subcritical};
\node[font=\footnotesize\bfseries] at (5.3,6) {Subcritical};
\node[font=\footnotesize\bfseries] at (7.3,6) {Critical};
\node[font=\footnotesize\bfseries] at (9.5,6) {Supercritical};

% Connection arrows from panels to the corresponding regions.
\draw[densely dashed, -{Latex[length=2mm]}, black!85]
  ([xshift=2pt,yshift=-1pt]critpanel.south) -- (critpaneltarget);

\end{tikzpicture}
}

\caption{Unified schematic of the ordered-weight and attention-output regimes as the inverse temperature $\beta_n$ varies, with $\mathcal Y_n(q)=\att (n)-Vq$}
\label{fig:scaling-regime-map}
\end{figure}

Probability theory has a long tradition of answering such questions: take a random system indexed by a size $n$, find the rescaling under which it admits a non-degenerate limit, and identify the law that emerges. Familiar instances include Gaussian fluctuations at scale $\sqrt{n}$ in the central limit theorem, the Fisher--Tippett--Gnedenko classification of extremal laws, and the $\log n$-scale free-energy transition of the Random Energy Model~\cite{Derrida1981}, recently connected to long-context attention by~\cite{giorlandino2025failuremodesdeeptransformers}.

For a fixed query vector $q\in\S$ and context vectors $x_1,\dots,x_n\in\S$, define the \emph{attention weights} as in~\eqref{eq:attention-weights-intro}
and the corresponding \emph{attention output}
\[
\att(n)=V\sum_{j=1}^n A_j x_j,
\]
where $V\in\R^{d\times d}$ is a fixed \emph{value matrix}. The natural scalar coordinate for these objects is the \emph{distance-to-query}, or D2Q,
\(
T_i\coloneqq 1-\langle q,x_i\rangle,
\)
which is half the squared Euclidean distance from key $x_i$ to the query. Two qualitatively distinct failure modes flank the regime of interest: if $\beta_n$ is too small, the smallest D2Qs are blurred into the bulk, the weights $A_j$ are essentially uniform, and the attention map fails to distinguish relevant tokens; if $\beta_n$ is too large, the entire output collapses onto the single closest key, and the map ceases to integrate the context. The question addressed in this paper is:
\begin{quote}
\emph{As $n\to\infty$, which scalings of $\beta_n$ produce non-degenerate limiting laws for the ordered attention weights $\{A_j\}_j$ and the attention output $\att(n)$, and what are those laws?}
\end{quote}

To make the limit precise, we model the keys $x_1,\dots,x_n$ as i.i.d.\ samples from a common distribution on the sphere, which induces a distribution of the D2Qs $T_1,\dots,T_n$ on $[0,2]$. The behavior at large $n$ is then governed by the local behavior of this distribution near $0$---the distribution of close neighbors---through the same mechanism that controls right-tail extremes in classical extreme-value theory. Within this model, our results below paint a complete picture of the scaling limits of attention across regimes of $\beta_n$, with explicit limit theorems at and around the critical scale.

\medskip
\noindent{\bf Our contributions.} 
We answer the question above through a statistical scaling theory for the attention distribution of a fixed query $q$ over a random context. The unifying principle is that long-context behavior is controlled not by the global shape of the context distribution but by the small-$T$ tail of the D2Q---i.e., the local geometry of the context around $q$---and by the point process of the closest keys that this tail generates. Our main results are the following.

\begin{itemize}

    \item[\textbf{(C1)}] \emph{The critical scale is set by the small-D2Q tail.}
    We identify the inverse temperature $\beta_n^*$ at which softmax first resolves the closest keys to $q$ from the bulk of the context, and show that this scale is determined by the local exponent of the D2Q distribution near zero, rather than by global features of the context. For keys uniform on $\S$, this principle yields the polynomial law $\beta_n^*\asymp n^{2/(d-1)}$, and for more structured key distributions it singles out local dimension and density as the relevant quantities.

    \item[\textbf{(C2)}] \emph{Limit laws for the ordered attention weights.}
    We characterize the asymptotic law of the ordered profile $A_{(1)}\ge A_{(2)}\ge\cdots$ across the full range of $\beta_n$. Three regimes emerge within a single framework: a subcritical regime in which the weights flatten and the head approaches uniform averaging; a critical regime around $\beta_n^*$ at which the weights converge to a non-degenerate random profile, with a finite collection of influential neighbors retaining macroscopic mass; and a supercritical regime in which the weight on the single closest key tends to one. The critical limit is the principal object of the paper: it is the unique scale at which selectivity emerges without single-key collapse.

    \item[\textbf{(C3)}] \emph{Limit laws for the attention output.}
    We extend the analysis to the attention output $\att(n)=V\sum_j A_j x_j$. The subcritical phase is not merely a regime of vanishing individual weights: as long as $\beta_n$ still diverges, $\att(n)$ behaves as a local average of the keys around $q$, and we obtain its limiting law as the sum of a deterministic bias, a Gaussian fluctuation term, and a density-sensitive tangential drift. At and above the critical scale, the corresponding limits of $\att(n)$ follow from those of the weights in (C2).

    \item[\textbf{(C4)}] \emph{A heat-type interpretation of subcritical residual updates.}
    The deterministic vector field that drives the subcritical output in (C3) admits a transparent interpretation: in the isotropic case, its tangential component is the gradient of the log token density. Attention layers operating below the critical scale therefore push representations from low-density toward high-density regions of the context, which corresponds approximately to a backward heat evolution.
\end{itemize}

\paragraph{Related work.} Empirical work on long-context attention provides much of the motivation for the present study.  A number of recent methods modify the attention logits in order to preserve useful sparsity as the context grows, including Qwen-style scaling, YaRN, SWAN-GPT, scalable-softmax, selective attention, and scale-invariant attention~\cite{bai2023qwen,peng2023yarn,puvvada2025swan,nakanishi2025scalable,zhang2024selective,anson2025scale}.  These works show that attention sharpness is an important practical degree of freedom in long contexts.  Our goal is complementary: we do not propose a new implementation, but instead ask what limiting law a rescaled attention row should obey once the context itself is modeled statistically.

Theoretical explanations of such scaling laws are still limited.  The deterministic critical-scaling theory of~\cite{chiang2022overcoming,nakanishi2025scalable,chen2025critical} identifies a logarithmic scale $\beta_n\asymp\log n$, while the signal-propagation theory of~\cite{giorlandino2025failuremodesdeeptransformers} and the scale-invariant-attention framework of~\cite{anson2025scale} point, in different Gaussian settings, to $\sqrt{\log n}$-type score scales.  Our polynomial scale arises from a different score geometry: the relevant competitors are rare nearest neighbors to a fixed query vector, and the scale is set by the local lower tail of the D2Q.  Thus the contrast between logarithmic, square-root-logarithmic, and polynomial laws should be viewed as evidence that attention scaling depends on the geometry of the competing scores, rather than as a disagreement between models.

\paragraph{Notation.} We introduce the notation used throughout the paper.  
We denote the ordered attention scores as $A_{(1)} \ge A_{(2)} \ge \dots \ge A_{(n)}$.
Define the \emph{D2Q}
$T_i\coloneqq 1-\langle q,x_i\rangle\in[0,2]$, $i=1,\ldots,n$.
The attention weights can be written as
\begin{equation}
    \label{eq:weights_defect_form}
    A_i=\frac{e^{-\beta_nT_i}}{Z_n},
    \qquad
    Z_n\coloneqq\sum_{j=1}^n e^{-\beta_nT_j}.
\end{equation}
Let $T_{(1)}\le T_{(2)}\le\cdots\le T_{(n)}$ be the order statistics of the D2Qs.  
Finally, let the D2Q CDF be $F(t)\coloneqq \mathbb P(T_1\le t)$, $t\in[0,2]$. We set $\alpha\coloneqq \tfrac{2}{d-1}$ throughout the paper. 
We define the constant $C(q)$ to be the local intensity of near matches to the query $C(q) \coloneqq \lim_{t\to 0^+} F(t) t^{-1/\alpha}$.

\paragraph{Organization.} The remainder of the paper is organized as follows.
\Cref{sec:main model and assumptions} introduces the fixed-query attention model and the spherical i.i.d.\ baseline.
\Cref{sec:main results} establishes the subcritical, critical, and supercritical limits for the ordered attention weights.
\Cref{sec:attention_outputs} derives the corresponding attention-output limits, including the marked Poisson functional at criticality and the subcritical drift/fluctuation trichotomy.
\Cref{sec:numerical experiments} presents numerical experiments illustrating the predicted phase transitions.
The appendices collect the proofs and extend the framework to correlated tokens and RoPE.

\section{Setup}\label{sec:main model and assumptions}

We recall that a PostLN attention head~\cite{vaswani2017attention,geshkovski2023mathematical} 
with context $x_1, \ldots, x_n$ is parametrized by three $d\times d$ matrices $Q$, $K$,  
and $V$. It implements the map 
$$
x \in \S \mapsto V\sum_{j=1}^n A_j x_j,
$$
where the attention weights are defined by
\begin{equation*}
A_j \;=\; \frac{\exp\bigl(\beta \langle Qx,Kx_j\rangle\bigr)}
{\sum_{l=1}^n \exp\bigl(\beta \langle Qx,Kx_l\rangle\bigr)},
\qquad j=1,\dots,n\,.
\end{equation*}
Setting $q = K^\top Q x / \|K^\top Q x\| \in \S$ and 
$\beta' = \beta\,\|K^\top Q x\|$, we can rewrite
\begin{equation*}
A_j \;=\; \frac{\exp\bigl(\beta' \langle q, x_j\rangle\bigr)}
{\sum_{l=1}^n \exp\bigl(\beta' \langle q, x_l\rangle\bigr)},
\qquad j=1,\dots,n,
\end{equation*}
so that, up to rescaling $\beta$, the attention weights take the 
form~\eqref{eq:attention-weights-intro}.

Throughout the main theorems, we assume a random context $x_1, \ldots, x_n$ sampled i.i.d from some distribution on the sphere.

\begin{assumption}[i.i.d. spherical context]\label{a:iid}
The query $q\in\S$ is fixed.  The context directions $x_1,\ldots,x_n$ are i.i.d. with common law $\rho(x)\,\diff\sigma(x)$ on $\S$, where $\sigma$ is surface measure.  The density $\rho$ is positive at $q$ and is $C^2$.
\end{assumption}

A few remarks regarding \Cref{a:iid} should be noted.

\noindent{\it Regularity of $\rho$, intrinsic dimension, and off-support geometry.}
The $C^2$ regularity is used only for the attention-output expansions in \Cref{sec:attention_outputs}.  The scaling limits of the ordered weights in \Cref{sec:main results} require only continuity of $\rho$ at $q$ and $\rho(q)>0$. If $\rho$ is supported on a smooth $(d'-1)$-dimensional submanifold, with $d'-1<d-1$, through $q$ with positive local density and non-degenerate contact, then our results hold with $d$ replaced by $d'$. If $q$ is not contained in the support of $\rho$, the same analysis applies after subtracting the smallest attainable D2Q from all D2Qs. 

\noindent{\it Correlations and positional encoding.}
Independence is also not essential to our results.  
The proofs of the scaling limits use only two inputs: a one-point lower-tail law for the D2Q $T_i=1-\langle q,x_i\rangle$ and a Poisson, or more generally extremal, limit for close neighbors to the query $q$. 
Under stationary weak dependence with anti-clustering, the same exponent and Poisson softmax limit persist. We also discuss how positional encodings such as RoPE~\cite{su2024roformer} can also be handled with minor technical adjustments in \Cref{app:proof_correlated_critical}.

%%%%%%%%%%%%%%%%%%%%%%%%%%%%%%%%%%%%%%
\section{Scaling limits of attention weights}\label{sec:main results}

In this section, we consider the scaling limits for the ordered attention weights. We start with an informal derivation of the critical scaling $\beta_n ^{\ast}\asymp n^{2/(d-1)}$ (equivalently \(\beta_n ^{\ast} \asymp n^\alpha\) with \(\alpha=2/(d-1)\)).

\subsection{A heuristic explanation for the critical scaling} 

The critical scale \(\beta_n^* \asymp n^{2/(d-1)}\)   arises precisely at the balance point of two opposing effects. As the context length \(n\) grows, more random tokens from the input naturally fall close to the fixed query direction \(q\). At the same time, increasing the rescaling factor \(\beta_n\) narrows the effective softmax window: the factor \(e^{-\beta_n T_i}\) assigns comparable weight only to tokens whose distance-to-query (D2Q) values \(T_i\) differ on the scale \(1/\beta_n\).

For simplicity, first assume that the distribution \(\rho\) in \Cref{a:iid} is the uniform measure on the unit sphere \(\mathbb{S}^{d-1}\). Geometrically, a D2Q window of width \(1/\beta_n\) corresponds to a small ball around \(q\). Indeed, for tokens \(x_i\) sufficiently close to \(q\), we have the local approximation \(T_i \approx \frac12 \dist(q,x_i)^2\). Consequently, the active softmax window has radius of order \(\beta_n^{-1/2}\). Since this region is intrinsically \((d-1)\)-dimensional, the expected number of context tokens falling inside the window scales as
\[
n \cdot \operatorname{Vol}\!\bigl(B_{d-1}(q,\beta_n^{-1/2})\bigr)
\asymp n \beta_n^{-(d-1)/2}
= n \beta_n^{-1/\alpha}.
\]
Here, $\operatorname{Vol}\!\bigl(B_{d-1}(q,\beta_n^{-1/2})\bigr)$ is the volume of the geodesic ball around $q$ of radius $\beta_n^{-1/2}$. 
If \(\beta_n\) grows too rapidly with \(n\), the window shrinks faster than new tokens enter it and the expected count tends to zero: attention collapses onto the single nearest token. Conversely, if \(\beta_n\) grows too slowly, this number diverges and attention is effectively averaged over a diverging number of tokens. The critical regime occurs precisely when these two effects balance, i.e.,
\[
n \beta_n^{-1/\alpha} \asymp 1,
\qquad\text{or equivalently}\qquad
\beta_n \asymp n^\alpha.
\]

The same scaling holds for general distributions \(\rho\) satisfying \Cref{a:iid}. Indeed, \(\mathbb{P}(T_i < t) \asymp t^{1/\alpha}\) (see \Cref{lem:small_cap_asymptotics}), so the expected number of tokens lying in a D2Q window of width \(\beta_n^{-1}\) is
\[
n \mathbb{P}(T_i < \beta_n^{-1}) \asymp n \beta_n^{-1/\alpha}.
\]

The next three results make this intuitive picture precise. In each case, the main technical step is to pass from the tail law \(\mathbb{P}(T_i < t) \asymp t^{1/\alpha}\) to the corresponding limit law for the softmax denominator.

\subsection{Supercritical regime}
When $\beta_n\gg n^\alpha$, the temperature is high enough to distinguish the top D2Q from all its competitors. The attention function therefore becomes a
hard nearest-neighbor selector.

\begin{theorem}[Supercritical regime]\label{thm:intro super_critical}
    Under \Cref{a:iid}, choose $\beta_n$ in \eqref{eq:attention-weights-intro} such that $\beta_n n^{-\alpha}\to\infty$ as $n \to \infty$. We have that for every fixed $i\geq 2$,
    \begin{align}
    \lim_{n\to\infty} A_{(1)} = 1 , \qquad
    \lim_{n\to\infty} A_{(i)} = 0 ,
    \qquad\text{in probability}.
\end{align}
\end{theorem}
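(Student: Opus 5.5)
Since $A_{(1)}=e^{-\beta_n T_{(1)}}/Z_n$ and $A_{(i)}\le 1-A_{(1)}$ for every $i\ge 2$, it suffices to show $1-A_{(1)}\to 0$ in probability. Write
\[
\frac{1-A_{(1)}}{A_{(1)}}=\sum_{j\ge 2} e^{-\beta_n (T_{(j)}-T_{(1)})}=:R_n .
\]
The plan is to show $R_n\to 0$ in probability. Then $A_{(1)}=1/(1+R_n)\to1$, and $A_{(i)}\to 0$ follows for each fixed $i\ge2$.

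\textbf{Step 1: rescaling.} By \Cref{lem:small_cap_asymptotics}, $F(t)\sim C(q)t^{1/\alpha}$ as $t\to0^+$, with $C(q)>0$ because $\rho$ is continuous and positive at $q$. Set $U_i=F(T_i)$. These are i.i.d.\ uniform on $[0,1]$ (the law of $T_i$ has no atoms near $0$), and $T_{(j)}=F^{-1}(U_{(j)})$. The scaled variables $nU_{(j)}$ converge jointly to the points $\Gamma_j=E_1+\dots+E_j$ of a unit-rate Poisson process. Hence, on the scale $t\asymp n^{-\alpha}$, the points $n^{\alpha}T_{(j)}$ converge to $(\Gamma_j/C(q))^{\alpha}$. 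The consequence we need is a spacing bound: for every $\varepsilon>0$ there is $\delta>0$ with $\mathbb P\bigl(n^{\alpha}(T_{(2)}-T_{(1)})<\delta\bigr)<\varepsilon$ for all large $n$. This holds because $\Gamma_2^\alpha-\Gamma_1^\alpha>0$ almost surely, and the convergence in distribution passes to this continuous functional.

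\textbf{Step 2: bounding $R_n$.} Split the sum at a fixed radius $t_0$ small enough that $F(t)\le 2C(q)t^{1/\alpha}$ on $[0,t_0]$.

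For the far tokens ($T_j>t_0$): there are at most $n$ of them, and each contributes at most $e^{-\beta_n(t_0-T_{(1)})}$. Since $T_{(1)}=O_P(n^{-\alpha})$, this part is bounded by $n e^{-\beta_n t_0/2}$ with high probability. This tends to $0$ because $\beta_n\gg n^\alpha$ grows polynomially.

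For the near tokens: condition on $T_{(1)}$, or more simply bound directly. Work on the event $\{n^\alpha(T_{(2)}-T_{(1)})\ge\delta\}$. Then every $j\ge2$ satisfies $T_{(j)}-T_{(1)}\ge \delta n^{-\alpha}$. Write $s=\beta_n n^{-\alpha}\to\infty$. The near sum is bounded by
\[
\sum_{j\ge2,\;T_j\le t_0} e^{-\beta_n(T_j-T_{(1)})}\mathbf 1\{T_j-T_{(1)}\ge\delta n^{-\alpha}\}.
\]
To control its expectation, replace $T_{(1)}$ by a deterministic level. On the high-probability event $T_{(1)}\le Kn^{-\alpha}$, every summand has $T_j\ge T_{(1)}+\delta n^{-\alpha}$. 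Conditioning on the minimum, the remaining $n-1$ points are i.i.d.\ from $F$ conditioned on $T>T_{(1)}$. Using $dF(t)\le c\,t^{1/\alpha-1}dt$ near $0$, the conditional expectation is at most
\[
n\int_{T_{(1)}+\delta n^{-\alpha}}^{t_0} e^{-\beta_n(t-T_{(1)})}\,c\,t^{1/\alpha-1}\,dt .
\]
Substitute $t=T_{(1)}+u n^{-\alpha}$. Then $n\cdot n^{-1/\alpha\cdot\alpha}=1$, and the integral becomes
\[
\lesssim \int_\delta^\infty e^{-su}(K+u)^{1/\alpha-1}\,du,
\]
which tends to $0$ as $s\to\infty$. (If $1/\alpha<1$, use $(K+u)^{1/\alpha-1}\le u^{1/\alpha-1}$ instead, which is integrable against $e^{-su}$ on $[\delta,\infty)$.) Markov's inequality then gives that this part of $R_n$ tends to $0$ in probability. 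Letting $\varepsilon\to0$ completes the argument.

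\textbf{Main obstacle.} The delicate step is the conditioning on $T_{(1)}$ together with the non-uniformity of the density of $T$ near $0$. The Poisson limit in Step 1 must justify that the gap $T_{(2)}-T_{(1)}$ is of exact order $n^{-\alpha}$, not smaller. I would establish this via the explicit joint density of $(U_{(1)},U_{(2)})$ and the regular variation of $F^{-1}(u)\sim (u/C(q))^{\alpha}$. That regular variation, uniformly on compacts, is the input to take from \Cref{lem:small_cap_asymptotics}.
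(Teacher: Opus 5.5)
Your proof is correct, and its engine is the same as the paper's. It is a first-moment (Markov) bound on $S_n=\sum_{j\ge2}e^{-\beta_n(T_{(j)}-T_{(1)})}$. It uses the local density bound $f(t)\lesssim t^{1/\alpha-1}$ and the fact that, given the minimum, the other $n-1$ D2Qs are i.i.d.\ from $F$ conditioned on $T>T_{(1)}$. The paper's identity $\mathbb E[S_n]=n(n-1)\int_0^2 f(t)(1-F(t))^{n-2}\int_t^2 e^{-\beta_n(s-t)}f(s)\,\mathrm ds\,\mathrm dt$ is exactly your conditional expectation integrated against the law of $T_{(1)}$.

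The difference is in the packaging. The paper integrates over all values of the minimum and uses beta-function asymptotics to get $\mathbb E[S_n]\approx n\beta_n^{-1/\alpha}+n^{\alpha}\beta_n^{-1}\to0$, i.e.\ $L^1$ convergence with a rate. You instead localize to $\{T_{(1)}\le Kn^{-\alpha}\}$ and discard far tokens crudely. This gives only convergence in probability, but it needs the density bound only near $t=0$; it is essentially the paper's own proof of \Cref{lem:supercritical_nearest_reduction}.

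Your separate handling of $1/\alpha<1$ is also right. The paper's comparison $(s+t)^\eta\approx s^\eta+t^\eta$ holds only for $\eta\ge0$. Taken literally at $d=2$, it produces the divergent $\mathbf B(2-\alpha,n-1)=\mathbf B(0,n-1)$, whereas the one-sided bound $(s+t)^\eta\le s^\eta$ suffices.

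Finally, the spacing bound of Step~1, which you call the main obstacle, is unnecessary. Without the cutoff $\delta$ one still has $\int_0^\infty e^{-su}(K+u)^{1/\alpha-1}\,\mathrm du\lesssim K^{1/\alpha-1}s^{-1}+s^{-1/\alpha}$ for $d\ge3$, and $\le\sqrt{\pi/s}$ for $d=2$. This tends to $0$ as $s=\beta_nn^{-\alpha}\to\infty$ and matches the paper's rate. A small gap $T_{(2)}-T_{(1)}$ is harmless because the expected number of tokens within $1/\beta_n$ of the minimum already vanishes. So you can drop Step~1, including the R\'enyi and joint-density arguments. You only need $n^\alpha T_{(1)}=O_{\mathbb P}(1)$, which follows from $\mathbb P(n^\alpha T_{(1)}>K)\le\exp(-nF(Kn^{-\alpha}))$.
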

The proof of \Cref{thm:intro super_critical} is deferred to \Cref{app:proof sup_critical}.

\subsection{Critical regime}
At the critical scale $\beta_n\asymp n^\alpha$, the expected number of tokens in the active softmax window remains of order one.  
The correct limiting object is the point process of rescaled D2Qs.

\begin{theorem}[Critical regime]\label{thm:intro critical}
Under \Cref{a:iid}, choose $\beta_n$ in \eqref{eq:attention-weights-intro} such that $\beta_n n^{-\alpha} \to \gamma\in(0,\infty)$. Let $\Np$ be a Poisson point process on $\mathbb R_+$ with intensity measure
\(
    \Lambda([0,y])
    = C(q)(y/\gamma)^{1/\alpha}
\)
, for $y\geq0$. Write $0<Y_1<Y_2<\cdots$ for its atoms in increasing order.  Then, for every fixed $k\geq1$,
\begin{align}
    (A_{(1)},\dots,A_{(k)}) \rightarrow (W_{1},\dots,W_{k})  \qquad \text{in distribution,}
\end{align}
where 
\begin{align*}
    W_i = \frac{e^{-Y_i}}{\sum_{j=1}^\infty e^{-Y_{j}}},
\end{align*}
and $\{Y_i\}_{i\geq1}$ are the increasing atoms of the Poisson point process $\Np$, i.e., $Y_i$ is the $i$-th arrival of $\Np$.
\end{theorem}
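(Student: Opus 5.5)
Write $A_{(i)} = e^{-\beta_n T_{(i)}}/Z_n$ and rescale the D2Qs as $Y^{(n)}_i \coloneqq \beta_n T_{(i)}$, so that
\[
A_{(i)}=\frac{e^{-Y^{(n)}_i}}{\sum_{j=1}^n e^{-Y^{(n)}_j}}.
\]
The plan has three steps. First, establish convergence of the point process $\Np_n\coloneqq\sum_{j=1}^n\delta_{\beta_nT_j}$ to $\Np$. Second, deduce joint convergence of the first $k$ atoms. Third, control the denominator, which is an infinite sum and therefore not a continuous functional of the vague topology.

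\textbf{Step 1 (Poisson limit).} By \Cref{lem:small_cap_asymptotics}, $F(t)\sim C(q)t^{1/\alpha}$ as $t\to0^+$. Fix $y\ge0$. Since $\beta_n\to\infty$ and $\beta_n^{-1/\alpha}\sim \gamma^{-1/\alpha}n^{-1}$, we get
\[
nF(y/\beta_n)\sim nC(q)(y/\beta_n)^{1/\alpha}\to C(q)(y/\gamma)^{1/\alpha}=\Lambda([0,y]).
\]
Because the $T_j$ are i.i.d., the standard binomial-to-Poisson criterion applies: convergence of $n\,\mathbb P(\beta_nT_1\in B)$ for all intervals $B\subset[0,\infty)$ (Kallenberg's theorem, or Resnick's Prop.~3.21) gives $\Np_n\Rightarrow\Np$ vaguely on $[0,\infty)$. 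The intensity $\Lambda$ is diffuse and $\Lambda([0,y])\to\infty$, so the atoms are a.s.\ distinct and infinite in number. The point-process convergence then yields $(Y^{(n)}_1,\dots,Y^{(n)}_m)\Rightarrow(Y_1,\dots,Y_m)$ for every fixed $m$, via $\{Y^{(n)}_m> y\}=\{\Np_n[0,y]<m\}$.

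\textbf{Step 2 (truncation).} Fix $M>0$ and split $Z_n=Z_n^{\le M}+Z_n^{>M}$ according to whether $\beta_nT_j\le M$. The map $\mu\mapsto\int_{[0,M]}e^{-y}\mu(dy)$ is a.s.\ continuous at $\Np$, since $\Np$ a.s.\ has no atom at $M$. Hence $(Y^{(n)}_1,\dots,Y^{(n)}_k,Z_n^{\le M})$ converges jointly in distribution to $(Y_1,\dots,Y_k,\sum_{Y_j\le M}e^{-Y_j})$, by the continuous mapping theorem applied to $\Np_n$.

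\textbf{Step 3 (tail of the denominator), the main obstacle.} The goal is
\[
\lim_{M\to\infty}\limsup_{n\to\infty}\mathbb P\bigl(Z_n^{>M}>\varepsilon\bigr)=0,
\]
which I would prove through the first moment,
\[
\mathbb E Z_n^{>M}=n\,\mathbb E\bigl[e^{-\beta_nT}\mathbf 1\{\beta_nT>M\}\bigr].
\]
Split $T$ at a small fixed $\delta$. On $\{T\le\delta\}$, the bound $F(t)\le 2C(q)t^{1/\alpha}$ holds for $t\le\delta$. Integrating by parts gives
\[
n\int_{M/\beta_n}^{\delta}e^{-\beta_nt}\,dF(t)\lesssim n\beta_n^{-1/\alpha}\int_M^\infty e^{-u}u^{1/\alpha-1}\,du ,
\]
where $n\beta_n^{-1/\alpha}\to\gamma^{-1/\alpha}$ and the integral tends to $0$ as $M\to\infty$. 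On $\{T>\delta\}$ the contribution is at most $ne^{-\beta_n\delta}\to0$, since $\beta_n$ grows polynomially. The same argument with $M=0$ shows that $\sum_j e^{-Y_j}<\infty$ a.s., by Campbell's formula: $\int e^{-y}\,d\Lambda<\infty$.

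\textbf{Conclusion.} The limiting denominator is strictly positive, being at least $e^{-Y_1}$ with $Y_1<\infty$ a.s. A standard approximation argument (Billingsley's Theorem 3.2) then combines Steps 2 and 3 into
\[
(Y^{(n)}_1,\dots,Y^{(n)}_k,Z_n)\Rightarrow\Bigl(Y_1,\dots,Y_k,\sum_j e^{-Y_j}\Bigr).
\]
Applying the continuous map $(y,z)\mapsto(e^{-y_i}/z)_{i\le k}$ on $\{z>0\}$ gives $(A_{(1)},\dots,A_{(k)})\Rightarrow(W_1,\dots,W_k)$. Ties among the $T_j$ occur with probability zero since $F$ is continuous, so the ordering of the $A_{(i)}$ matches the ordering of the $T_{(i)}$.
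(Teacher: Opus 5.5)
Your proof is correct, but it takes a different route from the paper's. The paper does not use point-process convergence for this theorem. Instead it uses R\'enyi's representation (\Cref{lem:Renyi representation}) to couple all $n$ on one probability space, writing $\beta_nT_{(i)}\stackrel{d}{=}\beta_nF^{-1}(\Gamma_i/\Gamma_{n+1})$. The quantile asymptotics of \Cref{lem:inverse_cdf} then give almost sure convergence $\beta_nT_{(i)}\to\gamma(\Gamma_i/C(q))^\alpha$ for each fixed $i$. The denominator is handled pathwise rather than by truncation: the global bound $F^{-1}(u)\ge cu^\alpha$ yields the random dominating sequence $e^{-c'\Gamma_i^\alpha}$, which is a.s.\ summable by the strong law, so dominated convergence for series gives a.s.\ convergence of $Z_n$. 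The Poisson process is identified only at the end, as the image of the unit-rate process $\{\Gamma_i\}$ under $s\mapsto\gamma(s/C(q))^\alpha$.

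The coupling gives an almost sure statement on the coupled space. It needs no vague topology, no continuity-of-atoms argument and no Billingsley-type approximation, but it relies on the exact i.i.d.\ order-statistics structure. Your argument needs only convergence of the mean measure $n\,\mathbb P(\beta_nT_1\in\cdot)$ plus a first-moment bound on $n\,\mathbb E[e^{-\beta_nT}\mathbf 1\{\beta_nT>M\}]$. That makes it the version that survives dependence, non-identical marginals and marks. Indeed, the paper itself switches to essentially your scheme for the correlated/RoPE case (\Cref{thm:rope_critical}) and for the critical output (\Cref{lem:critical_output_softmax_functional}).

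One small slip: integrating by parts against $F(t)\le 2C(q)t^{1/\alpha}$ produces $\int_M^\infty e^{-u}u^{1/\alpha}\,du$, not the exponent $1/\alpha-1$. The latter comes from using the density bound $f_T(t)\le c_+t^{\eta}$ of \Cref{lem:small_cap_asymptotics} directly. Either tail vanishes as $M\to\infty$, so the argument is unaffected.
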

This is the selective regime: finitely many close neighbors retain nonzero mass, but no deterministic winner is selected.  The proof is given in \Cref{app:proof critical}.

\subsection{Subcritical regime}\label{sec:intro subcritical regime}
When $\beta_n\ll n^\alpha$, the active softmax window contains a diverging number of near-query tokens.  Define the active softmax window size by $m_n(q)\coloneqq C(q)n\beta_n^{-1/\alpha}$.
In the subcritical regime with $\beta_n\to\infty$, we have $m_n(q)\to\infty$.  
Hence no individual token can retain macroscopic
mass.  Nevertheless, after ordering by score, the top \(O(m_n(q))\) weights secure a
deterministic amount of total weight.

\begin{theorem}[Subcritical regime]\label{thm:intro sub_critical 2 new}
Under \Cref{a:iid}, choose $\beta_n$ in \eqref{eq:attention-weights-intro} such that $\beta_n n^{-\alpha}\to 0$ as $n \to \infty$. We have that for every $i\geq 1$,
    \begin{align}
        \lim_{n\to\infty} A_{(i)} = 0 \qquad \text{ in probability} \,.
    \end{align}
If $\beta_n \to \infty$, and let $\{k_n\}_{n \ge 1}$ be a sequence of integers such that 
\(
    \lim_{n \to \infty} \tfrac{k_n}{m_n(q)}  = x \geq 0,
\)
for some nonnegative constant $x$ (for instance $k_n = \lfloor x m_n(q) \rfloor$). We have that
    \begin{equation}
        \lim_{n\to\infty} \frac{A_{(k_n)}}{A_{(1)}} = \exp\left( -x^{\alpha} \right) \qquad \text{ in probability} ,
    \end{equation}
Indeed, we have that
    \begin{equation}
        \lim_{n\to\infty}     m_n(q) A_{(k_n)} =  \frac{\exp\left( - x^{\alpha} \right)}{\Gamma\left(\frac{1}{\alpha}+1\right)} \qquad \text{ in probability} ,
    \end{equation}
where $\Gamma(\cdot)$ is the gamma function. Moreover, 
\begin{equation}\label{eq:subcritical_cumulative_mass_limit}
    \lim_{n\to\infty} \sum_{i=1}^{k_n} A_{(i)}=
    \frac{\gamma_{\rm inc}\!\left(\tfrac1\alpha,x^\alpha\right)}
         {\Gamma\!\left(\tfrac1\alpha\right)} \ , \qquad \text{in probability} \ ,
\end{equation}
where
\(
    \gamma_{\rm inc}(s,z)
    \coloneqq \int_0^z u^{s-1}e^{-u}\,\mathrm{d}u
\)
is the lower incomplete gamma function.
\end{theorem}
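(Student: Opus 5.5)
The plan is to reduce everything to the empirical law of the rescaled D2Qs $U_i \coloneqq \beta_n T_i$ and to a law of large numbers for the softmax denominator. Write $A_{(k)} = e^{-U_{(k)}}/Z_n$ with $Z_n=\sum_j e^{-U_j}$. Since $\beta_n\to\infty$, the local tail law $F(t)\sim C(q)t^{1/\alpha}$ from \Cref{lem:small_cap_asymptotics} gives, for each fixed $u>0$,
\[
n\,\mathbb P(U_1\le u) = nF(u/\beta_n)\sim C(q)\,n\beta_n^{-1/\alpha}u^{1/\alpha}=m_n(q)\,u^{1/\alpha}.
\]
Thus the rescaled empirical measure $\mu_n\coloneqq m_n(q)^{-1}\sum_j\delta_{U_j}$ has mean measure converging vaguely on $[0,\infty)$ to $\nu(\diff u)=\alpha^{-1}u^{1/\alpha-1}\diff u$. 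Because $m_n(q)\to\infty$, a second-moment (binomial variance) bound shows that $\mu_n([0,u])\to u^{1/\alpha}$ in probability for each $u$, and hence uniformly on compacts by monotonicity.

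The first step is the denominator. I claim that $Z_n/m_n(q)\to\int_0^\infty e^{-u}\nu(\diff u)=\Gamma(1/\alpha+1)$ in probability. On $[0,M]$ this follows from the convergence of $\mu_n$ by integrating $e^{-u}$ against it (Stieltjes integration by parts). For the tail I bound the expectation
\[
\mathbb E\sum_j e^{-U_j}\mathbf 1\{U_j>M\} = n\,\mathbb E\bigl[e^{-\beta_nT}\mathbf 1\{\beta_nT>M\}\bigr].
\]
I split this at $T\le\delta$, where $F(t)\le 2C(q)t^{1/\alpha}$ and an integration by parts gives $\lesssim m_n(q)\int_M^\infty e^{-u}u^{1/\alpha}\diff u$, and at $T>\delta$, which contributes at most $ne^{-\beta_n\delta}=o(m_n(q))$ because $\beta_n\to\infty$ polynomially dominates. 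Dividing by $m_n(q)$ and letting $M\to\infty$ gives the claim. This tail control, uniform in $n$ and using only continuity of $\rho$ at $q$, is the step I expect to be the main technical obstacle. The case $\beta_n n^{-\alpha}\to 0$ with $\beta_n$ bounded must be handled separately. There I would bound $A_{(1)}\le 1/Z_n$ and use $Z_n\ge ne^{-2\beta_n}\to\infty$ when $\beta_n\le c\log n$ for small $c$. In general it is cleaner to note that $A_{(i)}\le A_{(1)}\le 1/Z_n$ and that $Z_n\ge \#\{j: U_j\le 1\}e^{-1}$, where the count is binomial with mean $nF(1/\beta_n)$. Either $\beta_n\to\infty$, so the mean is $\sim m_n(q)\to\infty$, or along subsequences $\beta_n$ stays bounded, so the mean is $\ge cn$. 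In both cases the count diverges in probability, which gives the first claim $A_{(i)}\to0$.

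Next come the order statistics. From the uniform convergence $\mu_n([0,u])\to u^{1/\alpha}$ and the continuity and strict monotonicity of the limit, inverting gives $U_{(k_n)}\to x^\alpha$ in probability whenever $k_n/m_n(q)\to x$. For $x>0$ this is immediate from the quantile inversion. For $x=0$ I use $U_{(k_n)}\le U_{(\lceil\epsilon m_n\rceil)}\to\epsilon^\alpha$, and in particular $U_{(1)}\to0$. Hence
\[
\frac{A_{(k_n)}}{A_{(1)}} = e^{-(U_{(k_n)}-U_{(1)})}\to e^{-x^\alpha},
\]
and combining with the denominator limit gives $m_n(q)A_{(k_n)}=e^{-U_{(k_n)}}/(Z_n/m_n)\to e^{-x^\alpha}/\Gamma(1/\alpha+1)$.

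Finally, for the cumulative mass I write
\[
\sum_{i\le k_n}A_{(i)}=\frac{m_n(q)}{Z_n}\int_{[0,U_{(k_n)}]}e^{-u}\,\mu_n(\diff u),
\]
up to an error of at most one atom, which is $O(1/m_n)$. Since $U_{(k_n)}\to x^\alpha$ and $\mu_n$ converges uniformly on compacts, the integral tends to
\[
\int_0^{x^\alpha}e^{-u}\alpha^{-1}u^{1/\alpha-1}\diff u=\alpha^{-1}\gamma_{\rm inc}(1/\alpha,x^\alpha).
\]
Dividing by $\Gamma(1/\alpha+1)=\alpha^{-1}\Gamma(1/\alpha)$ yields \eqref{eq:subcritical_cumulative_mass_limit}.
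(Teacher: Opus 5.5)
Your proof is correct, but it is organized differently from the paper's.

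\textbf{Your route.} Everything comes from one law of large numbers for the rescaled counting function $\mu_n([0,u])=m_n(q)^{-1}\#\{j:\beta_nT_j\le u\}$. This count is binomial with variance at most its mean, and monotonicity upgrades pointwise convergence to uniform convergence on compacts. From it you get:
\begin{itemize}
\item the denominator, by integrating $e^{-u}$ against $\mu_n$ plus a tail truncation;
\item the order statistics, by inverting the counting function;
\item the cumulative mass, as a value-parametrized integral $\int_{[0,U_{(k_n)}]}e^{-u}\,\mu_n(\mathrm{d}u)$ with a random upper limit. That identity is in fact exact almost surely, since ties have probability zero.
\end{itemize}

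\textbf{The paper's route.} It handles the three pieces with separate tools.
\begin{itemize}
\item Denominator: it computes $\mathbb E[e^{-\beta T_1}]\sim C(q)\Gamma(\tfrac1\alpha+1)\beta^{-1/\alpha}$ and applies Chebyshev directly to $Z_n$ via the global bound $\mathrm{Var}(Z_n)\le n\,\mathbb E[e^{-2\beta_nT_1}]$. This needs no truncation, so it sidesteps the tail estimate you flagged as the main obstacle.
\item Order statistics: it obtains $\beta_nT_{(k_n)}\to x^\alpha$ from the Beta moments of uniform order statistics and the quantile asymptotics of $F$. It uses a binomial count only in the case $x=0$.
\item Cumulative mass: it writes the sum as the rank-parametrized integral $\frac{m_n}{Z_n}\int_0^{k_n/m_n}e^{-\beta_nT_{(\lceil s m_n\rceil)}}\,\mathrm{d}s$. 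This needs only pointwise-in-$s$ convergence in probability plus bounded convergence, not uniform convergence of an empirical CDF.
\end{itemize}

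\textbf{Comparison.} Your approach is more unified and makes the deterministic limiting intensity $\alpha^{-1}u^{1/\alpha-1}\,\mathrm{d}u$ explicit. This is the law-of-large-numbers counterpart of the Poisson limit used at criticality. The paper's approach is shorter for the partition function. For the first claim, your lower bound $Z_n\ge e^{-1}\#\{j:\beta_nT_j\le 1\}$ covers both subsequence cases with one argument. The paper instead uses $A_{(1)}\le e^{2M}/n$ when $\beta_n$ is bounded and its partition-function corollary when $\beta_n\to\infty$.
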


The proofs for \Cref{thm:intro sub_critical 2 new} is included in \Cref{app:proof sub_critical}.

\section{Scaling limits of attention outputs}\label{sec:attention_outputs}

\Cref{sec:main results} describes how the ordered attention weights $A_{(1)} \ge A_{(2)} \ge \dots \ge A_{(n)}$ are distributed. We now track the vector returned by the attention layer $\att(n)$.  Under \Cref{a:iid}, we show in \Cref{sec:sup and critical attn output} and \Cref{sec:subcritical attn output} that all three regimes with $\beta_n\to\infty$ concentrate on those tokens approaching $q$, so that the first-order limit of $\att(n)$ is always $Vq$.  We therefore study the  displacement
\begin{align}\label{eq:output_displacement_again}
    \mathcal Y_n(q)\coloneqq \att(n)-Vq
    =V\sum_{i=1}^n A_i(x_i-q).
\end{align}
The two relevant scales in \Cref{sec:sup and critical attn output} and \Cref{sec:subcritical attn output} are $\beta_n\asymp n^\alpha$, the ordered-weight transition, and $\beta_n\asymp n^{\alpha/(1+\alpha)}=n^{2/(d+1)}$, the drift/fluctuation transition inside the subcritical regime.

\subsection{Extreme-value output limits}\label{sec:sup and critical attn output}
In this subsection, we record attention outputs in the supercritical and the critical regimes in which only finitely many near-query tokens are
visible at the output scale. 

First, for the supercritical regime, we prove in the following \Cref{thm:output_supercritical} that the displacement $\mathcal Y_n(q)$ scales as $n^{-\alpha/2}$.
\begin{theorem}[Supercritical output]\label{thm:output_supercritical}
Under \Cref{a:iid}, assume that
\(
    \beta_n n^{-\alpha}\to\infty .
\)
Then for $a_n = (C(q)n)^{\alpha}$, we have
\[
    \lim_{n\to\infty}\sqrt{a_n}\,\mathcal Y_n(q)
    =
    V\Phi  \qquad \text{ in distribution} ,
\]
where the random vector $\Phi=\sqrt{2R_1}\,\Theta_1$, \(R_1\) is a Weibull random variable with
\[
    \mathbb P(R_1>r)=e^{-r^{1/\alpha}},\qquad r\ge0,
\]
and \(\Theta_1\) is uniformly distributed over 
\(
\mathbb S_q^{d-2}\coloneqq \{\theta\in T_q\mathbb S^{d-1}:\|\theta\|=1\}
\)
, and is independent of $R_1$.
In particular, \(\mathcal Y_n(q)\to0\) in probability, and hence \(\att(n)\to Vq\) in probability.
\end{theorem}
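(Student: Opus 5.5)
We split the displacement as
\[
\mathcal Y_n(q)=V(x_{(1)}-q)+V\sum_{i}A_i\bigl(x_i-x_{(1)}\bigr),
\]
where $x_{(1)}$ is the key with the smallest D2Q $T_{(1)}$. The claim then reduces to two facts: $\sqrt{a_n}\,(x_{(1)}-q)\to\Phi$ in distribution, and $\sqrt{a_n}\sum_i A_i(x_i-x_{(1)})\to0$ in probability. Slutsky's lemma and the continuous mapping $y\mapsto Vy$ finish the proof, and the consequence $\att(n)\to Vq$ is immediate since $a_n\to\infty$.

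\emph{Step 1: the nearest key.} By \Cref{lem:small_cap_asymptotics}, $F(t)\sim C(q)t^{1/\alpha}$ as $t\to0^+$. Hence
\[
\mathbb P\bigl(a_nT_{(1)}>r\bigr)=\bigl(1-F(r/a_n)\bigr)^n\to e^{-r^{1/\alpha}},
\]
because $nF(r/a_n)\sim C(q)n\,r^{1/\alpha}(C(q)n)^{-1}=r^{1/\alpha}$. So $a_nT_{(1)}\to R_1$ in distribution. For the direction, write $x=\cos\phi\, q+\sin\phi\,\theta$ with $\theta\in\mathbb S^{d-2}_q$, so that $T=1-\cos\phi$. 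Continuity of $\rho$ at $q$ implies that, conditionally on $T\le t$, the law of $\theta$ converges to the uniform law as $t\to0$, asymptotically independently of $T$. Since $x_{(1)}$ conditioned on $T_{(1)}=s$ is distributed as $x_1$ conditioned on $T_1=s$, the pair $(a_nT_{(1)},\theta_{(1)})$ converges in distribution to $(R_1,\Theta_1)$ with the two limits independent. Finally,
\[
x_{(1)}-q=(\cos\phi-1)q+\sin\phi\,\theta_{(1)},\qquad \sin\phi=\sqrt{2T_{(1)}}\,(1+o(1)),\qquad 1-\cos\phi=T_{(1)}=O_P(a_n^{-1}).
\]
Therefore $\sqrt{a_n}(x_{(1)}-q)\to\sqrt{2R_1}\,\Theta_1=\Phi$.

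\emph{Step 2: the residual mass.} Every key satisfies $\|x_i-x_{(1)}\|\le2$, so it suffices to show
\[
\sqrt{a_n}\,(1-A_{(1)})\to0 \quad\text{in probability}.
\]
This is stronger than \Cref{thm:intro super_critical}, which gives only $1-A_{(1)}\to0$, and it is the main obstacle. Writing $\delta_n=\beta_n/a_n\to\infty$, we have
\[
1-A_{(1)}\le\sum_{i\ge2}e^{-\beta_n(T_{(i)}-T_{(1)})}.
\]
We split this sum into the far keys, with $T_i\ge\eta$ for a fixed small $\eta$, and the near keys.

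For the far keys, the contribution is at most $n e^{-\beta_n\eta/2}$ once $T_{(1)}<\eta/2$, which holds with high probability. This is negligible against any polynomial, since $\beta_n\gg n^\alpha$.

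For the near keys, condition on $T_{(1)}=s$ with $s=O_P(1/a_n)$. The remaining $n-1$ D2Qs are then i.i.d.\ with law $F$ restricted to $(s,2]$. The conditional expectation of the near sum is at most
\[
n\int_{(s,\eta)}e^{-\beta_n(t-s)}\,\mathrm dF(t)\Big/(1-F(s)).
\]
Using $F(t)\le C t^{1/\alpha}$ for $t<\eta$ and integrating by parts, this is bounded by
\[
Cn\beta_n\int_s^\eta e^{-\beta_n(t-s)}t^{1/\alpha}\,\mathrm dt\lesssim n\bigl(s^{1/\alpha-1}\beta_n^{-1}+\beta_n^{-1/\alpha}\bigr),
\]
after splitting $t\in[s,2s]$ from $t>2s$. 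Substituting $s\asymp a_n^{-1}$ and $n\asymp a_n^{1/\alpha}$, the bound is of order $\delta_n^{-1}+\delta_n^{-1/\alpha}$ (the case $1/\alpha<1$ needs care near $t=s$, but the bound still holds there).

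This bound is $o(1)$, but it is not $o(a_n^{-1/2})$. So the crude estimate $\|x_i-x_{(1)}\|\le2$ is too weak, and we refine it. Near keys satisfy $\|x_i-q\|\lesssim\sqrt{T_i}$, so
\[
\sqrt{a_n}\,\|x_i-x_{(1)}\|\lesssim\sqrt{a_nT_i}+\sqrt{a_nT_{(1)}}.
\]
We therefore redo the near-key integral with the extra weight $\sqrt{a_n t}$. The Laplace-type integral still yields $O(\delta_n^{-\min(1,1/\alpha)})$ times a polylogarithmic factor at worst: the extra weight is $O(1)$ near $t\asymp1/a_n$, and it is controlled by the exponential factor when $\sqrt{a_nt}$ is large. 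This tends to zero, and Markov's inequality then gives Step 2.
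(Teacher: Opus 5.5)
Your architecture matches the paper's. You peel off the nearest key, show $(a_nT_{(1)},\Theta_{(1)})\Rightarrow(R_1,\Theta_1)$, and kill $D_n=\sum_{i\ne I_n}e^{-\beta_n(T_i-T_{(1)})}\sqrt{a_n}\|x_i-x_{(1)}\|$ by conditioning on $T_{(1)}=s$ and using $\|x_i-x_{(1)}\|\le\sqrt{2T_i}+\sqrt{2T_{(1)}}$. Your Step~1 is a valid and more elementary substitute for the marked Poisson limit of \Cref{lem:output_marked_ppp}. It combines the explicit radial tail with the observation that the tangent direction of $x_{(1)}$ given $T_{(1)}=s$ has the law of the tangent direction of $x_1$ given $T_1=s$, which tends to $\nu_q$ by continuity of $\rho$ at $q$.

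The gap is in the Step~2 estimate as you derive it. After integrating by parts you drop the boundary term $-F(s)$ and insert $F(t)\le Ct^{1/\alpha}$. But $t^{1/\alpha}\ge s^{1/\alpha}$ on $[s,\eta]$ (your cutoff $\eta$), so
\[
Cn\beta_n\int_s^\eta e^{-\beta_n(t-s)}t^{1/\alpha}\,\mathrm{d}t\;\ge\;C\,ns^{1/\alpha}\bigl(1-e^{-\beta_n(\eta-s)}\bigr).
\]
For $s=y/a_n$ this is of order $y^{1/\alpha}$, not $\delta_n^{-1}+\delta_n^{-1/\alpha}$. The true integral is governed by the increment $F(t)-F(s)$ over the window $(s,s+O(\beta_n^{-1}))$. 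That window is much narrower than $s$ for typical $s\asymp a_n^{-1}$, because $\beta_ns\to\infty$, and an upper bound on $F$ alone cannot control such increments. Redoing the computation with the weight $\sqrt{a_nt}$ by the same route inherits the same defect.

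What you need is the density bound $f_T(t)\le c_+t^{1/\alpha-1}$ from \Cref{lem:small_cap_asymptotics}, used directly. This bound, not your integration by parts, is what actually yields your displayed $n(s^{1/\alpha-1}\beta_n^{-1}+\beta_n^{-1/\alpha})$. Work on $\{a_nT_{(1)}\le M\}$, substitute $s=y/a_n$ and $t=s+v/a_n$, and use $na_n^{-1/\alpha}=C(q)^{-1}$. Then the conditional expectation of $D_n$ is at most a constant times
\[
\sup_{0\le y\le M}\int_0^\infty e^{-\delta_nv}\bigl(\sqrt{y+v}+\sqrt y\bigr)(y+v)^{1/\alpha-1}\,\mathrm{d}v .
\]
This is $O(\delta_n^{-1})$ for every $d\ge2$, with a constant depending on $M$ and no polylogarithmic loss. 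Markov's inequality, followed by $M\to\infty$, finishes the argument exactly as in the proof of \Cref{lem:supercritical_nearest_reduction}.
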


\Cref{thm:output_supercritical} says that, once the weights have collapsed to a single winner, the
only remaining randomness is the nearest-neighbor geometry: its radial D2Q has
a Weibull-type extreme-value law, while its angular mark is asymptotically uniform
in the tangent space.

We next show the attention outputs within the critical regime.
\begin{theorem}[Critical output]\label{thm:output_critical}
Under \Cref{a:iid}, assume that
\(
    \beta_n n^{-\alpha}\to \gamma\in(0,\infty).
\)
There exists a Poisson point process \(\mathcal M_\gamma\) on \([0,\infty)\times\mathbb S_q^{d-2}\) with intensity measure depending on $\gamma$ and $\rho(q)$ in Assumption \ref{a:iid}, such that as $n\to\infty$
\[
    \lim_{n\to\infty}\sqrt{\beta_n}\,\mathcal Y_n(q)
    =
    V\Xi  \qquad \text{ in distribution} .
\]
where
\begin{align}\label{eq:critical_output_functional_main}
    \Xi
    \coloneqq
    \frac{\displaystyle\int_{0} ^{\infty} \int_{\mathbb S_q^{d-2}} e^{-y}\sqrt{2y}\,\theta\,\mathcal M_\gamma(dy,d\theta)}
         {\displaystyle\int_{0} ^{\infty} \int_{\mathbb S_q^{d-2}} e^{-y}\,\mathcal M_\gamma(dy,d\theta)}.
\end{align}
In particular, \(\mathcal Y_n(q)\to0\) in probability, and hence \(\att(n)\to Vq\) in probability.
\end{theorem}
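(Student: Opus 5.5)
We prove \Cref{thm:output_critical} by lifting the Poisson convergence behind \Cref{thm:intro critical} to a \emph{marked} point process that records both the rescaled D2Q and the tangential direction of each near-query key. The plan has four steps.

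\textbf{Step 1 (polar coordinates about $q$).} Write each key as
\[
x_i=(1-T_i)q+\sqrt{T_i(2-T_i)}\,\Theta_i,
\qquad \Theta_i\in\mathbb S_q^{d-2}.
\]
Then
\[
x_i-q=-T_iq+\sqrt{2T_i}\,\Theta_i+O(T_i^{3/2}).
\]
Under \Cref{a:iid}, with $\rho$ continuous and $\rho(q)>0$, the joint law of $(T_i,\Theta_i)$ near $T=0$ factorises asymptotically:
\[
\mathbb P\bigl(T_i\le t,\ \Theta_i\in B\bigr)=F(t)\,\bigl(\sigma_{d-2}(B)+o(1)\bigr)
\quad\text{as } t\to0,
\]
uniformly in Borel sets $B$. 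Here $\sigma_{d-2}$ is the normalised uniform measure on $\mathbb S_q^{d-2}$. This holds because in geodesic polar coordinates the density is $\rho(q)+o(1)$ times the product of a radial factor and surface measure. By \Cref{lem:small_cap_asymptotics}, $F(t)\sim C(q)t^{1/\alpha}$.

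\textbf{Step 2 (marked Poisson limit).} Define
\[
\mathcal M_n=\sum_{i=1}^n\delta_{(\beta_nT_i,\Theta_i)}.
\]
For sets $[0,y]\times B$ we have
\[
n\,\mathbb P\bigl(\beta_nT_i\le y,\ \Theta_i\in B\bigr)\to C(q)(y/\gamma)^{1/\alpha}\,\sigma_{d-2}(B).
\]
The standard criterion for binomial processes (convergence of the mean measure on relatively compact sets) then gives $\mathcal M_n\Rightarrow\mathcal M_\gamma$ vaguely on $[0,\infty)\times\mathbb S_q^{d-2}$. The limit $\mathcal M_\gamma$ is Poisson with intensity $\Lambda\otimes\sigma_{d-2}$, where $\Lambda$ is as in \Cref{thm:intro critical}. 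Equivalently, its ground process is $\Np$ and it carries i.i.d.\ uniform marks.

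\textbf{Step 3 (rewrite the functional).} Multiplying numerator and denominator by $1$ and using Step 1,
\[
\sqrt{\beta_n}\,\mathcal Y_n(q)
=V\,\frac{\sum_i e^{-\beta_nT_i}\bigl(\sqrt{2\beta_nT_i}\,\Theta_i-\beta_n^{-1/2}\,\beta_nT_i\,q+\sqrt{\beta_n}\,O(T_i^{3/2})\bigr)}{\sum_i e^{-\beta_nT_i}}.
\]
In terms of $\mathcal M_n$, the main part is the ratio of $\int e^{-y}\sqrt{2y}\,\theta\,\mathcal M_n$ to $\int e^{-y}\,\mathcal M_n$. Both error terms vanish in probability: the bounded factor $\beta_n^{-1/2}y$ contributes $O(\beta_n^{-1/2})$, and the cubic term contributes $O(\beta_n^{-1}y^{3/2})$, each weighted by $e^{-y}$. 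Since the map $\mu\mapsto\bigl(\int f\,d\mu,\int g\,d\mu\bigr)$ is continuous for compactly supported continuous $f,g$, truncating at $y\le K$ and applying the continuous mapping theorem yields convergence to the truncated version of $\Xi$. The denominator limit is a.s.\ strictly positive because $\mathcal M_\gamma$ has atoms, so the ratio passes to the limit.

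\textbf{Step 4 (removing the truncation) — the main obstacle.} We must show that the tails
\[
\sum_i e^{-\beta_nT_i}\,(1+\sqrt{\beta_nT_i})\,\mathbf 1\{\beta_nT_i>K\}
\]
are small in probability uniformly in $n$ as $K\to\infty$. We will bound their expectation by
\[
n\,\mathbb E\Bigl[e^{-\beta_nT}\bigl(1+\sqrt{\beta_nT}\bigr)\mathbf 1\{\beta_nT>K\}\Bigr].
\]
We split into two ranges. On $T\le\delta$, use $F(t)\lesssim t^{1/\alpha}$ and integrate by parts; this gives a bound of order
\[
n\beta_n^{-1/\alpha}\int_K^\infty e^{-y}(1+\sqrt y)\,y^{1/\alpha-1}\,dy,
\]
which is $o_K(1)$ because $n\beta_n^{-1/\alpha}\to\gamma^{-1/\alpha}$. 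On $T>\delta$, the contribution is at most $n\,e^{-\beta_n\delta}(1+2\sqrt{\beta_n})\to0$. The same estimate for the limiting Poisson integrals, via Campbell's formula, shows that $\Xi$ is well defined and that the truncation error vanishes as $K\to\infty$. A standard approximation argument (Billingsley, Theorem 3.2) then concludes.

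Finally, since $\sqrt{\beta_n}\to\infty$ and $\sqrt{\beta_n}\,\mathcal Y_n(q)$ is tight, we get $\mathcal Y_n(q)\to0$ in probability.
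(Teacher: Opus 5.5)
Your proposal is correct and follows essentially the same route as the paper. That route has four parts: a marked Poisson limit for $(\beta_nT_i,\Theta_i)$ obtained from convergence of the one-point mean measure; truncation plus continuous mapping for the pair (tangent numerator, partition function); a tail bound from the small-cap estimate and Campbell's formula to remove the truncation; and control of the normal and cubic corrections to $\sqrt{\beta_n}(x_i-q)$. One point to tidy is the order of operations: take the ratio only after the untruncated pair $\bigl(\int e^{-y}\sqrt{2y}\,\theta\,\mathcal M_n,\int e^{-y}\,\mathcal M_n\bigr)$ has converged jointly, as the paper does, because the truncated limiting denominator $\int_{[0,K]}e^{-y}\,\mathcal M_\gamma$ is zero with positive probability.
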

The marked Poisson process in \Cref{thm:output_critical} is the output analogue of
the Poisson process governing the critical ordered weights in \Cref{thm:intro critical}.  Its scalar coordinate
records how far a token is from the optimal score, while the mark records the
local direction in which that token perturbs the output.  The denominator in
\eqref{eq:critical_output_functional_main} is the limiting softmax partition
function; the numerator is the corresponding tangent displacement.

\subsection{Subcritical local averaging}\label{sec:subcritical attn output}

When \(\beta_n\ll n^\alpha\), the number of relevant near-query tokens diverges.
The output is then governed by local averaging rather than by a finite extreme
process.  Define
\begin{equation}\label{eq:subcritical_drift_cov_main}
    \mathcal Y(q)
    \coloneqq
    \nabla_{\S}\log\rho(q)-\frac{d-1}{2}q,
    \qquad
    \Sigma(q)
    \coloneqq \frac{c_d}{\rho(q)}\proj_q,
    \qquad
    c_d\coloneqq 2^{-d}\pi^{-(d-1)/2},
\end{equation}
where \(\proj_q=\Id-qq^\top\) is the orthogonal projection onto the tangent space \(T_q\S\).  

\begin{theorem}[Subcritical output]
\label{thm:intro sub_critical 3}
Under \Cref{a:iid}, choose $\beta_n$ in \eqref{eq:attention-weights-intro} such that $\beta_n \to \infty$ and $\beta_n n^{-\alpha}\to 0$ as $n \to \infty$. 
\begin{enumerate}
    \item \textbf{Drift dominated.} If
    $\beta_n^{1 + \alpha} n^{-\alpha}\to 0$,
    then
    \begin{align*}
        \lim_{n \to \infty} \beta_n \mathcal{Y}_n(q) = V \mathcal{Y}(q) \qquad \text{ in probability} \,.
    \end{align*}

    \item \textbf{Mixed drift--fluctuation regime.} If
    $\beta_n^{1 + \alpha} n^{-\alpha}\to\tau\in(0,\infty)$,
    then
    \begin{align*}
        \lim_{n \to \infty} \beta_n \mathcal{Y}_n(q) = V \mathcal{Y}(q)+G_\tau \qquad \text{ in distribution} \,.
    \end{align*}
    where $G_\tau\sim\mathcal N(0,\tau^{1/\alpha} V\Sigma(q)V^{\top})$.
    \item \textbf{Fluctuation dominated.}
    If
    $\beta_n^{1 + \alpha} n^{-\alpha}\to \infty$,
    then
    \begin{align*}
        \lim_{n \to \infty} \sqrt{n\beta_n^{-1/\alpha + 1}}\, \mathcal{Y}_n(q) = G_1 \qquad \text{ in distribution} \,.
    \end{align*}
    where $G_1\sim\mathcal N(0,V\Sigma(q) V^{\top})$
\end{enumerate}
In all three regimes, we have that $\mathcal{Y}_n(q) \to 0$ in probability, and hence $\att(n)\to Vq$ in probability.
\end{theorem}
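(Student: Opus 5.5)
We write $\mathcal Y_n(q)=V\,N_n/D_n$ with
\[
N_n=\frac1n\sum_{i=1}^n e^{-\beta_nT_i}(x_i-q),\qquad D_n=\frac1n\sum_{i=1}^n e^{-\beta_nT_i},
\]
and treat numerator and denominator as i.i.d.\ sums, analysing each through its mean and fluctuation. The first step is a Laplace-type expansion of the expectations around $q$. We use geodesic normal coordinates $x=\exp_q(u)$ with $u\in T_q\S$, $|u|=r$. Then $T=1-\cos r=r^2/2-r^4/24+\dots$, the component of $x-q$ along $q$ is $-(1-\cos r)$, the tangential component is $\sin r\,\hat u$, and the volume element is $(\sin r/r)^{d-2}\,du$. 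We Taylor expand $\rho(\exp_q u)=\rho(q)+\langle\nabla\rho(q),u\rangle+O(r^2)$ using $C^2$. Integrating against $e^{-\beta_n r^2/2}$ (the region away from $q$ is exponentially small) should give
\[
\mathbb E D_n=\rho(q)(2\pi/\beta_n)^{(d-1)/2}\bigl(1+O(\beta_n^{-1})\bigr),
\qquad
\mathbb E N_n=(2\pi/\beta_n)^{(d-1)/2}\beta_n^{-1}\Bigl(\nabla_{\S}\rho(q)-\tfrac{d-1}{2}\rho(q)q+o(1)\Bigr).
\]
The tangential term comes from $\int u\langle\nabla\rho,u\rangle e^{-\beta r^2/2}$, which is $\beta^{-1}\nabla\rho$ times the Gaussian mass. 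The normal term comes from $-\int (r^2/2)e^{-\beta r^2/2}$, which is $-\frac{d-1}{2\beta}$ times the mass. Hence $\beta_n\,\mathbb E N_n/\mathbb E D_n\to \mathcal Y(q)$. I need to check that the curvature corrections of order $r^4$ and the Jacobian contribute only $O(\beta_n^{-2})$ relative terms. They do, by parity and order counting.

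Second, the fluctuations. We have
\[
\operatorname{Var}\bigl(e^{-\beta T}(x-q)\bigr)\approx \mathbb E\bigl[e^{-2\beta T}\,\proj_q(x-q)(x-q)^\top\proj_q\bigr]\approx \rho(q)(\pi/\beta_n)^{(d-1)/2}\,(2\beta_n)^{-1}\proj_q ,
\]
because the Gaussian $e^{-\beta r^2}$ has per-coordinate variance $1/(2\beta)$. The normal component is of lower order $\beta^{-2}$. The Lindeberg condition holds because the summands are bounded by $O(1)$ while the variance sum $n\cdot\beta_n^{-(d+1)/2}\to\infty$. That divergence is equivalent to $n\beta_n^{-1/\alpha-1}\to\infty$, which follows from $\beta_n n^{-\alpha}\to0$ and $\beta_n\to\infty$. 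Lindeberg--Feller then gives a CLT for $N_n-\mathbb EN_n$. For $D_n$, Chebyshev gives
\[
D_n/\mathbb E D_n\to1,
\]
since $\operatorname{Var}(D_n)/(\mathbb ED_n)^2\asymp (n\beta_n^{-1/\alpha})^{-1}=C(q)/m_n(q)\to0$. Slutsky's lemma then yields
\[
\beta_n\mathcal Y_n(q)=V\frac{\beta_n\,\mathbb E N_n}{\mathbb E D_n}+V\frac{\beta_n(N_n-\mathbb EN_n)}{\mathbb E D_n}+o_p(\cdot).
\]
The second term is Gaussian with covariance of order $\beta_n^2\cdot n^{-1}\beta_n^{-(d+1)/2}\beta_n^{d-1}\rho^{-1}=\beta_n^{1+1/\alpha}n^{-1}\,c\,\proj_q/\rho(q)$. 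This equals $(\beta_n^{1+\alpha}n^{-\alpha})^{1/\alpha}$ times the constant, which produces $\tau^{1/\alpha}\Sigma(q)$. We then match the constant $c_d=2^{-d}\pi^{-(d-1)/2}$, noting $(\pi/\beta)^{(d-1)/2}(2\beta)^{-1}/(2\pi/\beta)^{d-1}$. The three cases are now immediate. If the ratio tends to $0$, the fluctuation vanishes and the drift remains. At $\tau$ both terms are present. If the ratio tends to $\infty$, we renormalize by $\sqrt{n\beta_n^{1-1/\alpha}}$, and the drift, scaled by that factor over $\beta_n$, tends to $0$ precisely because $\beta_n^{1+\alpha}n^{-\alpha}\to\infty$.

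The main obstacle is controlling the ratio error in the fluctuation-dominated and critical-adjacent ranges. We must show that
\[
\frac{(D_n-\mathbb ED_n)}{\mathbb ED_n}\cdot\frac{\mathbb EN_n}{\mathbb ED_n}
\]
is negligible at the CLT scale, and that the $o(1)$ remainders in $\mathbb EN_n$ are smaller than the fluctuation scale. The first holds because $\mathbb E N_n/\mathbb ED_n=O(\beta_n^{-1})$, while $(D_n-\mathbb ED_n)/\mathbb ED_n=O_p(m_n^{-1/2})$, which is of smaller order than the numerator fluctuation relative to $\mathbb E D_n$, namely $m_n^{-1/2}\beta_n^{-1/2}$, divided by $\beta_n^{1/2}$. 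I would verify this by a direct order computation. For the remainder, I would make the Taylor expansion quantitative, using a uniform $C^2$ modulus on a shrinking cap of radius $\beta_n^{-1/2}\log\beta_n$.
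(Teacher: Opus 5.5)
Your plan follows the paper's route: Laplace expansion of the local first and second moments, Chebyshev for the partition function, a triangular-array CLT for the numerator, and Slutsky. Your constants for the drift $\mathcal Y(q)$, the covariance $c_d\rho(q)^{-1}\proj_q$ and the factor $\tau^{1/\alpha}$ are correct.

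\textbf{The gap: the Lindeberg verification is wrong.} You say Lindeberg holds because the summands are $O(1)$ while the variance sum $n\beta_n^{-(d+1)/2}=n\beta_n^{-1/\alpha-1}$ diverges. You also say this divergence follows from $\beta_n n^{-\alpha}\to0$ and $\beta_n\to\infty$. It does not. We have $n\beta_n^{-1/\alpha-1}=(\beta_n^{1+\alpha}n^{-\alpha})^{-1/\alpha}$, which
\begin{itemize}
\item diverges only in the drift-dominated regime (1);
\item tends to $\tau^{-1/\alpha}$ in regime (2);
\item tends to $0$ in regime (3).
\end{itemize}
A Gaussian limit is needed only in regimes (2) and (3); in regime (1), Chebyshev on the numerator suffices. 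In exactly those two regimes your argument gives nothing. Take $X_{n,i}=e^{-\beta_nT_i}(x_i-q)$, normalized as $r_nX_{n,i}/(n\,\mathbb E[D_n])$ with $r_n=\sqrt{n\beta_n^{1-1/\alpha}}$. In regime (3) the $O(1)$ bound only controls these by a constant times $(\beta_n^{1+1/\alpha}/n)^{1/2}$, which blows up.

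\textbf{The fix.} Use the sharper pointwise bound $\|X_{n,i}\|=\sqrt{2T_i}\,e^{-\beta_nT_i}\le(e\beta_n)^{-1/2}$. Equivalently, feed the paper's third-moment estimate \eqref{eq:subcritical_moment3} into Lyapunov. With this bound the normalized summands are uniformly
\begin{itemize}
\item $O(\beta_n^{-1/2})$ in regime (2), with $r_n=\beta_n$;
\item $O\bigl((n\beta_n^{-1/\alpha})^{-1/2}\bigr)=O(m_n(q)^{-1/2})$ in regime (3).
\end{itemize}
Lindeberg then holds trivially. What drives the CLT is the divergence of the effective window size $m_n(q)$, not of the unnormalized variance.

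\textbf{The ``main obstacle'' needs no order computation.} Since $r_nN_n/\mathbb E[D_n]$ converges in distribution and $\mathbb E[D_n]/D_n\to1$ in probability, Slutsky handles the random denominator directly; this is \Cref{lem:subcritical_ratio_reduction}. You only need $\beta_n\mathbb E[N_n]/\mathbb E[D_n]=\mathcal Y(q)+o(1)$, because in regime (3) the whole drift is killed by $r_n/\beta_n\to0$. No quantitative Taylor expansion on shrinking caps is required.
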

The condition \(\beta_n^{1+\alpha}n^{-\alpha}\asymp1\) is equivalent to \(\beta_n\asymp n^{\alpha/(1+\alpha)}=n^{2/(d+1)}\).  Thus the subcritical regime contains its own output-level phase transition: for $\beta_n$ at or below this critical scale, the deterministic local-density drift is present, while it is swamped by Gaussian fluctuations above it.
    
\subsection{Residual dynamics}

We now study how one attention layer changes the relative geometry of the tokens
in the drift-dominated subcritical regime, tracking
the cosine similarity between normalized tokens before and after one layer.  For
\(V=\Id\), the attention output can be written as
\[
    \att(n)=q+\mathcal Y_n(q),
\]
where, by \Cref{thm:intro sub_critical 3}(1),
\(\mathcal Y_n(q)\) is of order \(\beta_n^{-1}\).
Therefore the layer-normalized attention output can be approximated as
\[
    q'
    =
    \frac{\att(n)}{\|\att(n)\|}
    =
    \frac{q+\mathcal Y_n(q)}{\|q+\mathcal Y_n(q)\|}
    \simeq
    q+\frac{1}{\beta_n}\nabla_{\S}\log\rho(q).
\]
The proposition below formalizes this observation and makes this claim rigorous.

\begin{proposition}\label{prop:backward_heat}
Assume $V=\Id$. Let $q \in\mathbb S^{d-1}$ satisfy $\rho(q)>0$, and define $q'$ as 
\begin{align}
\label{eq:backward_heat}
    q' \coloneqq \frac{q+\gamma\mathcal Y_n(q)}{\|q+\gamma\mathcal Y_n(q)\|},
\end{align}
for some fixed $\gamma>0$. In the drift-dominated regime where $\beta_n^{1+\alpha}n^{-\alpha}\to0$ and $\beta_n\to\infty$, it holds
\begin{align}
\label{eq:heatapprox}
    \beta_n (q ' - q) \to \gamma \nabla_{\mathbb S^{d-1}}\log\rho(q) ,
\end{align}
in probability.
\end{proposition}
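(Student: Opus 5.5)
The plan is to reduce everything to \Cref{thm:intro sub_critical 3}(1) and then do a deterministic first-order Taylor expansion of the normalization map. Set $\epsilon_n \coloneqq \mathcal Y_n(q)$. With $V=\Id$, the drift-dominated hypothesis $\beta_n^{1+\alpha}n^{-\alpha}\to0$, $\beta_n\to\infty$ gives
\[
\beta_n\epsilon_n \to \mathcal Y(q)=\nabla_{\S}\log\rho(q)-\tfrac{d-1}{2}q
\]
in probability. In particular $\epsilon_n=O_P(\beta_n^{-1})\to0$.

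Next, consider the map $N(v)=v/\|v\|$, which is smooth near $q$ because $\|q\|=1$. Its differential at $q$ is $DN(q)h=h-\langle q,h\rangle q=\proj_q h$. Since $N(q)=q$, Taylor's theorem gives, for $\|h\|\le 1/2$,
\[
N(q+h)=q+\proj_q h+R(h),\qquad \|R(h)\|\le C\|h\|^2 .
\]
One can also check this directly by writing $\|q+h\|^{-1}=1-\langle q,h\rangle+O(\|h\|^2)$. We apply this with $h=\gamma\epsilon_n$, working on the event $\{\|\gamma\epsilon_n\|\le1/2\}$, whose probability tends to one.

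On that event,
\[
\beta_n(q'-q)=\gamma\,\proj_q(\beta_n\epsilon_n)+\beta_n R(\gamma\epsilon_n).
\]
The remainder is bounded by $C\gamma^2\beta_n\|\epsilon_n\|^2=C\gamma^2\|\beta_n\epsilon_n\|^2/\beta_n$. Since $\|\beta_n\epsilon_n\|$ is tight and $\beta_n\to\infty$, this term vanishes in probability. For the main term, $\proj_q$ is a fixed linear map, so the continuous mapping theorem gives $\proj_q(\beta_n\epsilon_n)\to\proj_q\mathcal Y(q)$ in probability. Here $\proj_q\mathcal Y(q)=\nabla_{\S}\log\rho(q)$: the spherical gradient is tangent, so it is fixed by $\proj_q$, while the normal term $-\tfrac{d-1}{2}q$ is killed. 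Combining the two terms via Slutsky yields \eqref{eq:heatapprox}.

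The only genuine input is \Cref{thm:intro sub_critical 3}(1), which is assumed. The main point needing care is the remainder: a crude bound like $O(\|h\|)$ would not suffice. The quadratic bound is what makes $\beta_n R(\gamma\epsilon_n)$ negligible, and it uses $\epsilon_n=O_P(\beta_n^{-1})$ rather than merely $\epsilon_n\to0$. A secondary point is the observation that the normal component of the drift, of size order $\beta_n^{-1}$, is annihilated at first order by the normalization, so it does not appear in the limit.
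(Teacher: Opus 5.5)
Your proposal is correct and follows the paper's proof essentially step for step. You invoke \Cref{thm:intro sub_critical 3}(1) to get $\mathcal Y_n(q)=O_{\mathbb P}(\beta_n^{-1})$, expand the normalization as $(q+h)/\|q+h\|=q+\proj_q h+O(\|h\|^2)$, and use $\proj_q\mathcal Y(q)=\nabla_{\S}\log\rho(q)$; your explicit restriction to the event $\{\|\gamma\mathcal Y_n(q)\|\le 1/2\}$ and your bound showing $\beta_n R(\gamma\mathcal Y_n(q))\to0$ in probability simply make precise what the paper records as $R_n=O_{\mathbb P}(\beta_n^{-2})$.
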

The above proposition indicates that attention, together with PostLN, approximately implements the update
$$
q \mapsto q + \frac{\gamma}{\beta_n} \nabla_{\mathbb S^{d-1}} \log \rho(q),
$$
which is precisely a one-step Euler discretization of $\dot q =  \nabla_{\mathbb S^{d-1}} \log \rho(q)$. In turn, it is a celebrated result of Jordan, Kinderlehrer, and Otto~\cite{jordan1998variational} that this Wasserstein gradient flow of the entropy functional~\cite{CheNilRig25} implements the reverse heat equation.

The proof of \Cref{prop:backward_heat} is postponed to Appendix~\ref{app:attention_outputs}.

%%%%%%%%%%%%%%%%%%%%%%%%%%%%%%%%%%%%%%%
\section{Numerical Experiments}\label{sec:numerical experiments}
The following experiments illustrate three consequences of the scaling theory from the main results on the model introduced in
\eqref{eq:attention-weights-intro} and \eqref{eq:output_displacement_again}.

\textbf{Output field on the sphere.}
We first visualize the output vector fields from
\Cref{thm:output_supercritical,thm:output_critical,thm:intro sub_critical 3}.
We take $d=3$, so $\alpha=1$, set $V=\Id$, and draw $n=10^4$ context tokens from
\[
    \rho(x)\propto \exp(x_1x_2),\qquad x\in\mathbb S^2.
\]
For each query direction $q$ we compute
\(
    \mathcal Y_n(q)=\sum_i A_i(x_i-q)
\)
and plot the first coordinate of $\proj_q\mathcal Y_n(q)$ after the
corresponding scaling.  With this choice of parameters the critical scaling is
\(\beta_n\asymp n\), while the subcritical drift/fluctuation scaling from
\Cref{thm:intro sub_critical 3} is
\(\beta_n\asymp n^{1/2}\).  Thus the columns of
\Cref{fig:output_field_regimes} show the supercritical regime
(\(\beta_n=n^{5/4}\)), the critical regime (\(\beta_n=n\)), the
fluctuation-dominated subcritical regime (\(\beta_n=n^{3/4}\)), the mixed
drift--fluctuation regime (\(\beta_n=n^{1/2}\)), and the drift-dominated
subcritical regime (\(\beta_n=n^{1/4}\)).

The first two empirical columns are visibly rough: in the supercritical regime the
output is essentially determined by one nearest token, while at criticality only a
finite number of close neighbors contributes.
Moving into the subcritical regime, more tokens enter the local average
and the field becomes smoother.  The
fluctuation-dominated panel still shows sampling noise; the mixed panel begins to
show the large-scale drift; and the drift-dominated panel has the same structure
as the deterministic density-gradient field \(\proj_q\mathcal Y(q)\) from
\eqref{eq:subcritical_drift_cov_main}, shown in the last column.

\begin{figure}[hbt!]
    \centering
\includegraphics[width=0.95\linewidth]{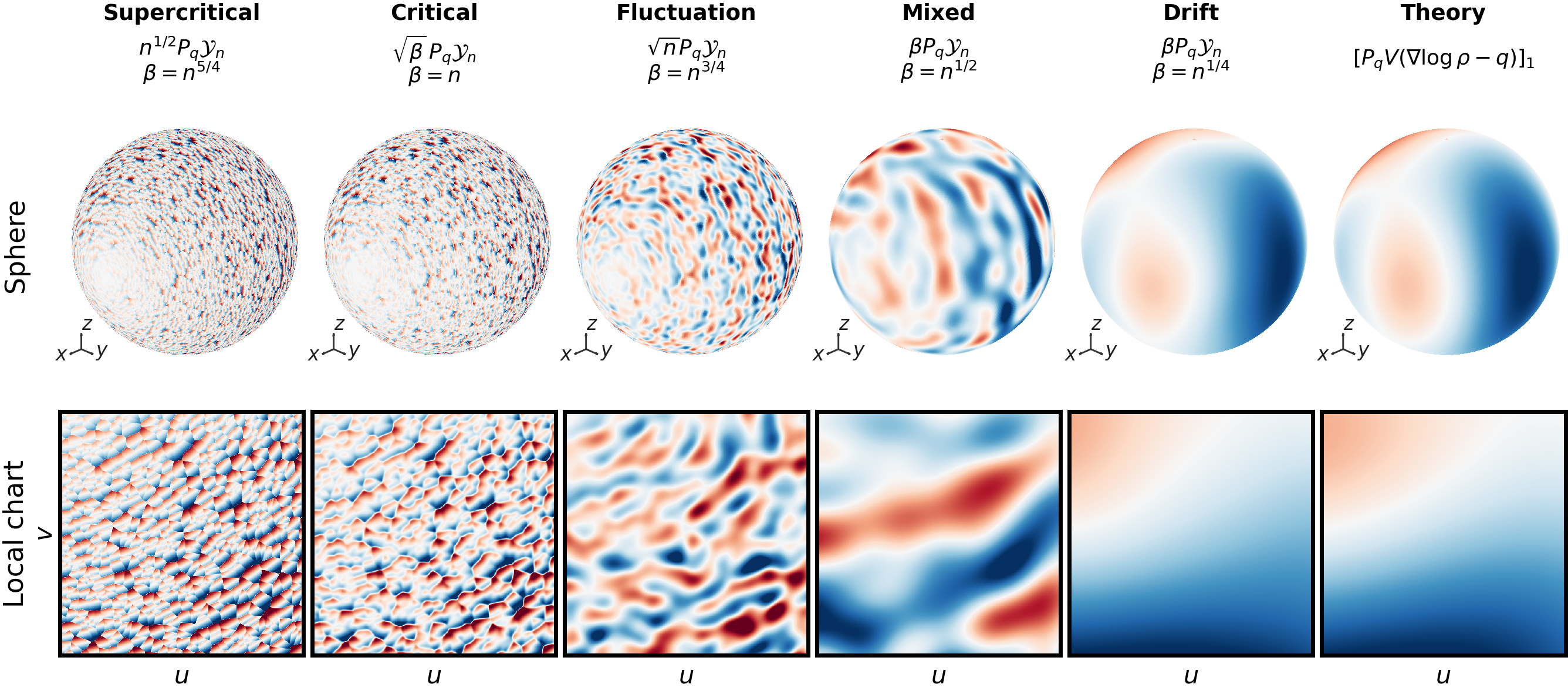}
    \caption{Rescaled output displacement on $\mathbb S^2$ for $n=10^4$, $V=\Id$, and
    $\rho(x)\propto\exp(x_1x_2)$.  The columns use $\beta_n=n^{5/4},n,n^{3/4},n^{1/2},n^{1/4}$, with the regime-dependent
    scalings shown above the panels, and the last column is the deterministic
    drift field.  The bottom row is a geodesic chart centered at
    $(1,1,1)/\sqrt3$.}
    \label{fig:output_field_regimes}
\end{figure}

\textbf{Ordered weights.} We next check the ordered attention weight statements in
\Cref{thm:intro super_critical,thm:intro critical,thm:intro sub_critical 2 new}.
Contexts are uniform on $\mathbb S^4$, so $d=5$ and $\alpha=1/2$.
The left panel of \Cref{fig:weight_numerics} estimates \(A_{(1)}\) over a
logarithmic grid in \((n,\beta_n)\), averaging over \(100\) trials.  In agreement
with \Cref{thm:intro super_critical,thm:intro sub_critical 2 new}, the largest
weight is small below the critical scaling and approaches one above it; the
transition occurs at \(\beta_n\asymp n^\alpha\).
The right panel fixes \(n=1000\), \(q=e_1\), and
\(\beta_n=n^{\alpha/4}\).  It plots the ratios \(A_{(k)}/A_{(1)}\) against the
rescaled rank \(x=k/m_n(q)\), and
compares them with the prediction \(e^{-x^\alpha}\) in
\Cref{thm:intro sub_critical 2 new}.

\begin{figure}[hbt!]
    \centering
    \begin{subfigure}[t]{0.4\textwidth}
        \centering
        \includegraphics[width=\textwidth]{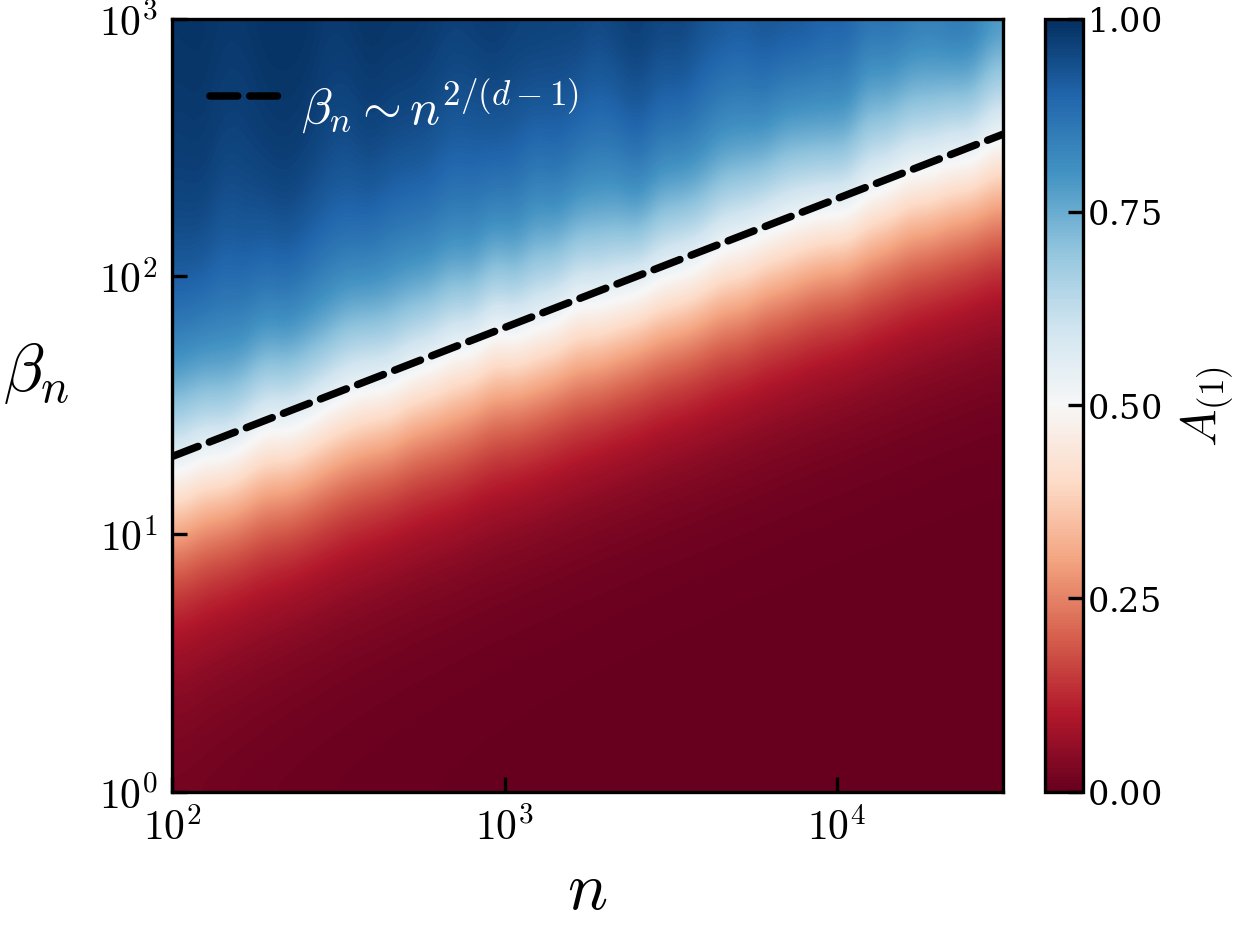}
    \end{subfigure}
    \hfill
    \begin{subfigure}[t]{0.4\textwidth}
        \centering
        \includegraphics[width=\textwidth]{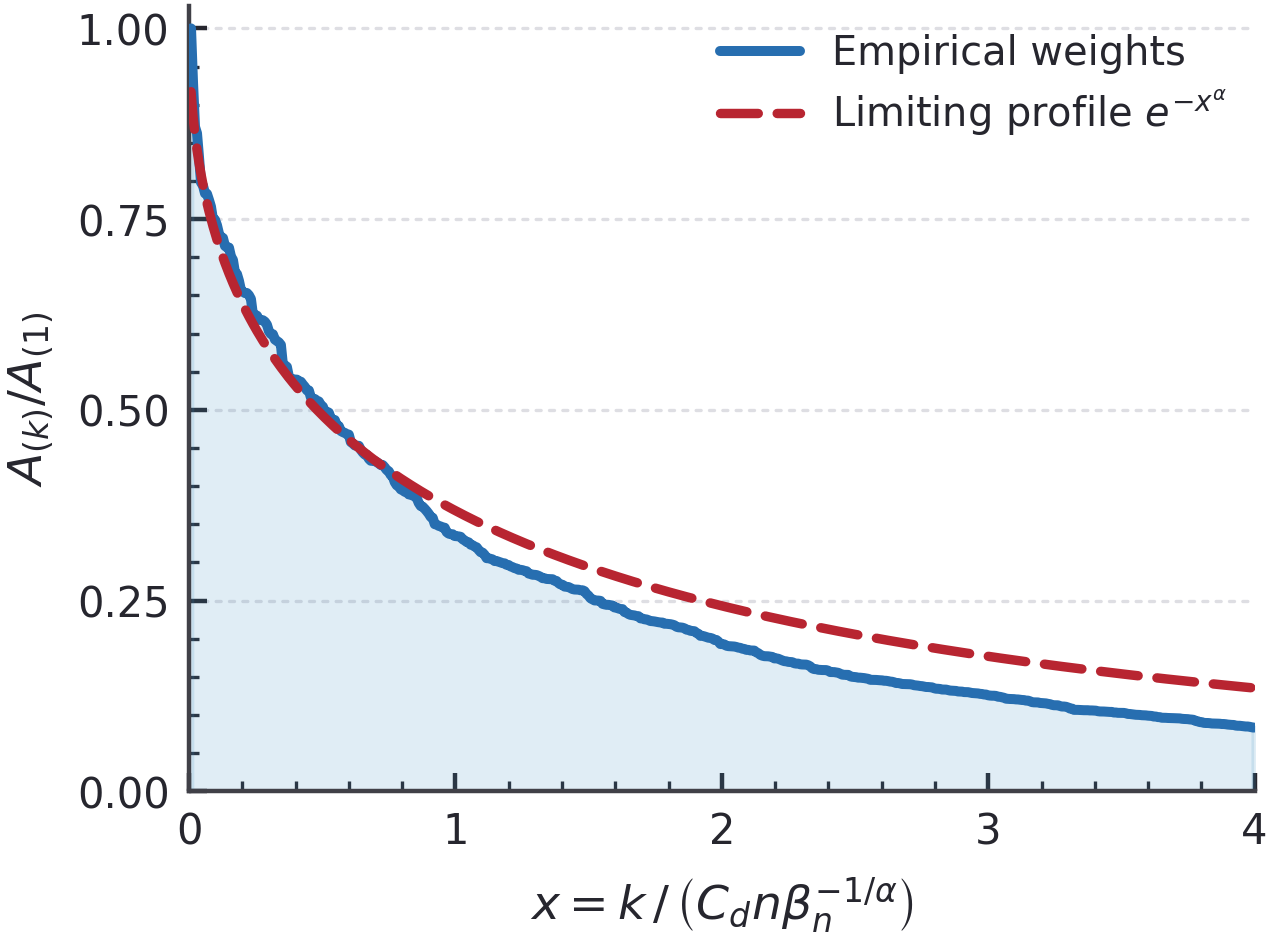}
    \end{subfigure}
    \caption{Ordered attention weights for uniform contexts on $\mathbb S^4$.  Left: the heatmap shows the empirical mean of \(A_{(1)}\)
    over \(100\) trials; the dashed curve \(\beta_n=2n^\alpha\) marks the
    critical scaling.  Right: with \(\beta_n=n^{\alpha/4}\), the rescaled ordered
    weights follow the subcritical prediction \(e^{-x^\alpha}\), where
    \(x=k/m_n(q)\).}
    \label{fig:weight_numerics}
\end{figure}

%------------------------------------------------
%------------------------------------

\section{Conclusion}\label{sec:conclusion}

We introduced a tractable probabilistic model to better understand attention scaling in the long-context regime. Within this framework we obtained a complete classification of the scaling limits for both the ordered attention weights and the resulting attention outputs. The central result is that the optimal rescaling takes the form $\beta_n \asymp n^{2/(d-1)}$, governed by the local extreme-value tail of the distance-to-query distribution $\mathbb{P}(T_i < t)$. To the best of our knowledge, this is the first scaling prescription for attention that exhibits an explicit dependence on the embedding dimension~$d$, reflecting the intrinsic geometry of the key space. Below this threshold every attention weight vanishes; above it the mechanism collapses to a single-winner selector; only at the critical scale do the ordered weights converge to a non-degenerate Poisson point process.

%------------------------------------

%-------------------------------------

\newpage

\begin{appendix}
    
\phantomsection
\addcontentsline{toc}{section}{Appendices}
\addtocontents{toc}{\protect\setcounter{tocdepth}{-1}}

% \begin{appendix}

\section{Introduction to the Appendix, Notation, and Shared Lemmas}\label{app:appendix_preliminaries}

The appendices are organized as follows. Appendix~\ref{app:proof sup_critical} proves the supercritical single-winner regime. Appendix~\ref{app:proof critical} proves the critical Poisson point process limit and the induced limiting ordered weights. Appendix~\ref{app:proof sub_critical} proves the subcritical results in the same order as the main text: scalar softmax estimates and ordered weights first, then the local moment estimates and central limit argument for the attention output, and finally the backward heat-type behavior of long-context attention. Appendix~\ref{app:proof_correlated_critical} treats the RoPE extension with correlated tokens.

We use the following notation throughout the appendices. The query $q\in\S$ is fixed, $x_1,x_2,\ldots$ are drawn according to the density $\rho$ in \Cref{a:iid}. The D2Qs and their CDF are
\[
    T_i\coloneqq 1-\langle q,x_i\rangle\in[0,2],
    \qquad
    F(t)\coloneqq \mathbb P(T_i\le t).
\]
We write
\[
    T_{(1)}\le T_{(2)}\le\cdots\le T_{(n)}
\]
for the order statistics of the D2Qs and
\[
    A_i=\frac{e^{-\beta_nT_i}}{Z_n},
    \qquad
    Z_n\coloneqq \sum_{j=1}^n e^{-\beta_nT_j},
    \qquad
    A_{(1)}\ge A_{(2)}\ge\cdots\ge A_{(n)}
\]
for the corresponding attention weights and their decreasing rearrangement. We also set
\[
    \alpha\coloneqq \frac{2}{d-1},
    \qquad
    \eta\coloneqq \frac{1}{\alpha}-1=\frac{d-3}{2},
    \qquad
    \proj_q\coloneqq \Id-qq^\top.
\]
We use $\Theta(h)$ to denote a quantity comparable to $h$, i.e., there are constants $C_1$, $C_2$ depending on $\rho$ as introduced in \Cref{a:iid} such that $C_1 h \leq \Theta(h) \leq C_2 h$. Similarly, we use $x \approx y$ if there are two positive constants $C_1,C_2$, which depend at most on $d$ and $\rho$ in \Cref{a:iid}, such that $C_1 x \leq y \leq C_2 x$.
The symbol $o_{\mathbb P}(1)$ denotes a random sequence converging to $0$ in probability. More generally, $R_n=o_{\mathbb P}(a_n)$ means $R_n/a_n\to0$ in probability, and $R_n=O_{\mathbb P}(a_n)$ means that $R_n/a_n$ is tight.

The CDF $F(t)$ is of order $\Theta(t^{1/\alpha})$, because
\[
\begin{aligned}
F(t) \coloneqq \mathbb{P}\left(T_i<t\right) = \mathbb{P}\left(\dist(q,x)<2 \arcsin\left(t^{1/2} 2^{-1/2}\right)\right) = \Theta\left(t^{\tfrac{d-1}{2}}\right) = \Theta\left(t^{1/\alpha}\right).
\end{aligned}
\]
Here, $\dist(q,x)$ is the geodesic distance between $q$ and $x$ on the unit sphere $\S$, and hence exactly equals the angle between $q$ and $x$.
In particular, we denote
\[
C(q) \coloneqq \lim_{t\to 0^+} F(t) t^{-1/\alpha} \,.
\]
We summarize the asymptotics of $F(t)$ in the following two lemmas.
\begin{lemma}[CDF asymptotics]\label{lem:small_cap_asymptotics}
Under \Cref{a:iid}, the law of $T_i$ is absolutely continuous on $[0,2]$, and has a density $f_T(t) \coloneqq F'(t)$. Moreover, there are constants $0<c_-<c_+<\infty$ such that
\[
    c_-t^{1/\alpha}\le F(t)\le c_+t^{1/\alpha}, \quad c_-t^{\eta}\le f_T(t)\le c_+t^{\eta}
    \qquad \forall t \in [0,2].
\]
In particular, as $t\to0^+$,
\[
    F(t)=C(q)t^{1/\alpha}+o\bigl(t^{1/\alpha}\bigr),
    \text{ with }C(q) = \frac{2^{1/\alpha}\sigma_{d-2}}{d-1}\rho(q),
\]
where $\sigma_{d-2}$ denotes the surface area of $\mathbb S^{d-2}$.
\end{lemma}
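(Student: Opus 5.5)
The plan is to compute $F$ exactly through the pushforward of $\rho\,\diff\sigma$ under the map $x\mapsto T=1-\langle q,x\rangle$, using geodesic polar coordinates centered at $q$. Write $x=\cos\phi\, q+\sin\phi\,\theta$ with $\phi\in[0,\pi]$ and $\theta\in\mathbb S_q^{d-2}$. Then $\diff\sigma(x)=\sin^{d-2}\phi\,\diff\phi\,\diff\theta$ and $T=1-\cos\phi$. Substituting $t=1-\cos\phi$, so that $\diff t=\sin\phi\,\diff\phi$ and $\sin^2\phi=t(2-t)$, gives
\[
F(t)=\int_0^t \bigl(s(2-s)\bigr)^{\frac{d-3}{2}}\,\bar\rho(s)\,\diff s,\qquad \bar\rho(s)\coloneqq\int_{\mathbb S_q^{d-2}}\rho\bigl((1-s)q+\sqrt{s(2-s)}\,\theta\bigr)\,\diff\theta .
\]
This already shows that $T$ has a density on $[0,2]$, namely $f_T(s)=(s(2-s))^{\eta}\bar\rho(s)$ with $\eta=(d-3)/2$. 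The first claim of the lemma then follows directly.

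For the small-$t$ asymptotics I would use continuity of $\rho$ at $q$. As $s\to0$ the points in the integrand converge uniformly in $\theta$ to $q$, so $\bar\rho(s)\to\sigma_{d-2}\rho(q)$. Hence $f_T(s)=2^{\eta}\sigma_{d-2}\rho(q)s^{\eta}(1+o(1))$. Integrating, and using $\eta+1=1/\alpha=(d-1)/2$, gives
\[
F(t)=\frac{2^{\eta}\sigma_{d-2}\rho(q)}{\eta+1}\,t^{1/\alpha}(1+o(1)).
\]
The constant here is $\frac{2^{(d-3)/2}\cdot 2}{d-1}\sigma_{d-2}\rho(q)$. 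Its power of $2$ is $2^{(d-1)/2}=2^{1/\alpha}$, which matches the stated $C(q)=\frac{2^{1/\alpha}\sigma_{d-2}}{d-1}\rho(q)$. I will double-check this factor of $2$ carefully, since it is where an error is most likely.

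The global two-sided bounds on all of $[0,2]$ are the step I expect to be the main obstacle. They do not follow from Assumption~\ref{a:iid} as literally stated: if $\rho$ vanishes on a neighborhood of $-q$, then $f_T$ vanishes near $t=2$ and no lower bound $c_-t^{\eta}$ can hold there. My first attempt will therefore be the following.
\begin{enumerate}
\item The upper bounds hold everywhere. Since $\rho$ is $C^2$ on a compact set, it is bounded, so $\bar\rho\le\sigma_{d-2}\|\rho\|_\infty$. Together with $(2-s)^{\eta}\le 2^{\eta}$ when $\eta\ge0$, this gives $f_T\le c_+s^{\eta}$. The case $d=2$, where $\eta=-1/2$, needs separate care near $s=2$.
\item The lower bounds hold on a neighborhood $[0,t_0]$, obtained from $\bar\rho(s)\ge\tfrac12\sigma_{d-2}\rho(q)$ for small $s$.
\item Beyond $t_0$, I expect only the monotone statement $F(t)\ge F(t_0)\ge c\,t^{1/\alpha}$ to survive, using $t\le 2$. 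This lower bound on $F$ is indeed global.
\end{enumerate}
A global lower bound on $f_T$ would require the extra assumption that $\rho$ is bounded below, for instance that $\rho>0$ everywhere, which then gives $c_-$ by compactness. I would state this caveat and note that downstream arguments only use the density bounds near $0$.

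Finally, the identification of $C(q)$ with $\lim_{t\to0^+}F(t)t^{-1/\alpha}$ is immediate from the asymptotic formula above.
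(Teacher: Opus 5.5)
Your route is the paper's route. The paper's only argument is the one-line estimate $F(t)=\mathbb P\bigl(\mathrm{dist}(q,x)<2\arcsin\sqrt{t/2}\bigr)=\Theta(t^{(d-1)/2})$ via geodesic-ball volumes, and the same polar change of variables $y=b_n(1-\cos r)$ reappears in its proof of the local marked point-process lemma. Your explicit pushforward density $f_T(s)=(s(2-s))^{\eta}\bar\rho(s)$ and your check of $C(q)=2^{1/\alpha}\sigma_{d-2}\rho(q)/(d-1)$ are correct, and more complete than what the paper writes. The paper gives no argument for the two-sided density bounds on all of $[0,2]$. You are right that they do not follow from the assumption, and in fact they are false in general, so no proof of the statement as written exists.

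Your proposed repair is wrong, however. The obstruction is not only zeros of $\rho$; it is the Jacobian factor $(2-s)^{\eta}$ in your own formula.

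For $d\ge4$ we have $\eta>0$, so $f_T(t)\to0$ as $t\to2$ even for the uniform density. No positive lower bound on $\rho$ can therefore give $c_-t^{\eta}\le f_T(t)$ near $t=2$.

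For $d=2$ the upper density bound is false, not merely delicate. The uniform law on the circle gives $f_T(t)=1/(\pi\sqrt{t(2-t)})$, which is unbounded at $t=2$. More generally it fails whenever $\bar\rho$ does not vanish at $2$.

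Only for $d=3$ do both global density bounds follow from $0<\inf\rho\le\sup\rho<\infty$.

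So the provable statement, and the one the later arguments actually use, has three parts. First, the two bounds on $F$ hold on all of $[0,2]$, exactly as in your steps (1)--(3). Second, both density bounds hold on some $[0,t_0]$. Third, the upper density bound holds on all of $[0,2]$ when $d\ge3$. The downstream uses are covered by this: the critical proof's $F^{-1}(u)\ge cu^{\alpha}$ comes from the global upper bound on $F$. The density bounds are used only inside integrals weighted by $e^{-\beta_n(s-t)}$, where contributions from $s$ bounded away from $0$ are exponentially small in $\beta_n$.
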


\begin{lemma}[Quantile asymptotics]\label{lem:inverse_cdf}
As $u\to0^+$,
\[
    F^{-1}(u)=\left(\frac{u}{C(q)}\right)^\alpha(1+o(1)).
\]
The same relation holds with $u$ replaced by any random sequence $U_n\to0$ in probability: 
\[
    F^{-1}(U_n)=\left(\frac{U_n}{C(q)}\right)^\alpha(1+o_{\mathbb P}(1)).
\]
\end{lemma}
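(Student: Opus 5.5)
The plan is to prove Lemma~\ref{lem:inverse_cdf} by inverting the regular-variation statement of Lemma~\ref{lem:small_cap_asymptotics}. First, $F$ is continuous and strictly increasing near $0$: it is absolutely continuous with density $f_T(t)\ge c_-t^{\eta}>0$ on $(0,2]$. Hence $F^{-1}$ is a genuine inverse on $[0,F(2)]$, continuous, with $F^{-1}(u)\to0$ as $u\to0^+$. Put $t=F^{-1}(u)$, so that $u=F(t)$. Lemma~\ref{lem:small_cap_asymptotics} gives
\[
u=C(q)t^{1/\alpha}(1+\varepsilon(t)),\qquad \varepsilon(t)\to0 \text{ as } t\to0^+.
\]
Solving for $t$ gives
\[
t=\Bigl(\tfrac{u}{C(q)}\Bigr)^{\alpha}(1+\varepsilon(t))^{-\alpha}.
\]
Since $t=F^{-1}(u)\to0$ as $u\to0^+$, we have $\varepsilon(F^{-1}(u))\to0$, so $(1+\varepsilon)^{-\alpha}=1+o(1)$. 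This is the deterministic claim. One point needs care: we must show $F^{-1}(u)\to0$, and not only that $F(t)\to0$. This follows from strict monotonicity together with the two-sided bound $c_-t^{1/\alpha}\le F(t)$, which yields $F^{-1}(u)\le (u/c_-)^{\alpha}$.

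For the random version, write the deterministic statement as $F^{-1}(u)=(u/C(q))^{\alpha}(1+r(u))$ with $r(u)\to0$ as $u\to0^+$. Extend $r$ to $u\in[0,1]$ in any bounded measurable way, since $U_n$ lies in $[0,1]$ in the applications. We need $r(U_n)\to0$ in probability. Fix $\epsilon>0$ and choose $\delta>0$ with $|r(u)|<\epsilon$ for $0<u<\delta$. Then
\[
\mathbb P(|r(U_n)|\ge\epsilon)\le\mathbb P(U_n\ge\delta)\to0.
\]
This is the continuous-mapping argument applied to a function continuous at $0$ with value $0$ there. The case $U_n=0$ is harmless: both sides vanish, or we define $r(0)=0$.

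I expect no real obstacle. The only care needed is the domain of $F^{-1}$: we use the generalized inverse, but strict monotonicity makes it coincide with the true inverse. We must also make sure the $o(1)$ is indexed by $u$ rather than $t$, which the bound $F^{-1}(u)\le(u/c_-)^{\alpha}$ handles.
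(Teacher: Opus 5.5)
Your proposal is correct and is the argument the paper intends. The paper states this lemma without proof, as an immediate inversion of Lemma~\ref{lem:small_cap_asymptotics}; that is exactly your substitution $t=F^{-1}(u)$, $u=F(t)$, followed by the $\delta$--$\epsilon$ transfer to a random $U_n\to0$ in probability. One remark: you do not need the global density lower bound on $(0,2]$, which in general holds only near $t=0$ (for uniform keys $f_T(t)\propto t^{\eta}(2-t)^{\eta}$ vanishes at $t=2$ when $d\ge4$), because continuity of $F$ already gives $F(F^{-1}(u))=u$ and your bound $F^{-1}(u)\le(u/c_-)^{\alpha}$ uses only the CDF bound.
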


We use the following form of R\'enyi's representation for uniform order
statistics; see \cite[Chapter~5, Theorem~2.2]{devroye2006nonuniform}.
\begin{lemma}[R{\'e}nyi's Representation]\label{lem:Renyi representation}
Let \( U_1, U_2, \dots, U_n \) be i.i.d. \(\operatorname{Uniform}(0,1)\) random variables, and let \( U_{(1)} < U_{(2)} < \dots < U_{(n)} \) denote the associated order statistics.

Let \( E_1, E_2, \dots, E_{n+1} \) be i.i.d. exponential random variables with rate 1, i.e., \( E_i \sim \operatorname{Exp}(1) \). Define the partial sums
\[
\Gamma_k = \sum_{i=1}^k E_i, \quad k=1,2,\dots,n+1.
\]

Then
\[
\bigl( U_{(1)}, U_{(2)}, \dots, U_{(n)} \bigr) 
\stackrel{d}{=}
\left( \frac{\Gamma_1}{\Gamma_{n+1}}, \frac{\Gamma_2}{\Gamma_{n+1}}, \dots, \frac{\Gamma_n}{\Gamma_{n+1}} \right).
\]
\end{lemma}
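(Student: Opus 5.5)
We will prove the identity by computing both joint densities explicitly on the open simplex
\[
\Delta_n\coloneqq\{(u_1,\dots,u_n):0<u_1<\cdots<u_n<1\}
\]
and checking that they coincide. The proof is a change of variables followed by integrating out the total sum $\Gamma_{n+1}$; no earlier result of the paper is needed.

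\emph{Step 1: the uniform side.} Since $U_1,\dots,U_n$ are i.i.d.\ with density $\mathbf 1_{(0,1)}$, ties occur with probability zero. Each of the $n!$ orderings of the sample contributes the same product density $1$ on $\Delta_n$. Hence $(U_{(1)},\dots,U_{(n)})$ has density $n!\,\mathbf 1_{\Delta_n}(u)$.

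\emph{Step 2: the partial sums.} The vector $(E_1,\dots,E_{n+1})$ has density $\prod_i e^{-e_i}$ on $(0,\infty)^{n+1}$. The linear map $(e_i)\mapsto(s_k)$ with $s_k=\sum_{i\le k}e_i$ is lower triangular with unit diagonal, so its Jacobian is $1$. Since $\sum_i e_i=s_{n+1}$, the vector $(\Gamma_1,\dots,\Gamma_{n+1})$ has density $e^{-s_{n+1}}$ on $\{0<s_1<\cdots<s_{n+1}\}$.

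\emph{Step 3: change of variables.} Set $u_k=s_k/s_{n+1}$ for $k\le n$ and $t=s_{n+1}$. The inverse map is $s_k=tu_k$, $s_{n+1}=t$. It is a bijection from $\Delta_n\times(0,\infty)$ onto the ordered cone. Its Jacobian matrix has block form $\begin{pmatrix} t\,\Id_n & u\\ 0 & 1\end{pmatrix}$, whose determinant is $t^n$. Therefore
\[
\Bigl(\tfrac{\Gamma_1}{\Gamma_{n+1}},\dots,\tfrac{\Gamma_n}{\Gamma_{n+1}},\Gamma_{n+1}\Bigr)
\quad\text{has density}\quad
t^n e^{-t}\,\mathbf 1_{\Delta_n}(u)\,\mathbf 1_{(0,\infty)}(t).
\]

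\emph{Step 4: marginalize.} Integrating out $t$ gives $\int_0^\infty t^ne^{-t}\,\mathrm dt=\Gamma(n+1)=n!$. So the ratio vector has density $n!\,\mathbf 1_{\Delta_n}$, which matches Step 1 and proves the claimed equality in distribution.

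As a byproduct, the density factorizes. Hence the ratio vector is independent of $\Gamma_{n+1}$, and $\Gamma_{n+1}\sim\mathrm{Gamma}(n+1,1)$. This independence is often what makes the representation useful later.

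The only point requiring care is Step 3: checking that the map is a bijection between the stated domains and computing the block-triangular Jacobian. Both are routine, so there is no real obstacle.
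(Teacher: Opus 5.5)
Your proof is correct and complete. The ordered-cone density $e^{-s_{n+1}}$ (unit-Jacobian triangular map) is right, and so is the bijection $\Delta_n\times(0,\infty)\to\{0<s_1<\cdots<s_{n+1}\}$ with Jacobian $t^n$. The marginalization $\int_0^\infty t^n e^{-t}\,\mathrm dt=n!$ then matches the density $n!\,\mathbf 1_{\Delta_n}$ of the uniform order statistics.

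The paper gives no proof of this lemma; it only cites Devroye's book (Chapter 5, Theorem 2.2). The usual textbook route there works with spacings instead of partial sums. One shows, by the same $t^n$ Jacobian computation applied to $(E_1,\dots,E_n,\Gamma_{n+1})\mapsto(E_1/\Gamma_{n+1},\dots,E_n/\Gamma_{n+1},\Gamma_{n+1})$, that the normalized vector $(E_1,\dots,E_{n+1})/\Gamma_{n+1}$ is uniform on the simplex and independent of $\Gamma_{n+1}$. One then identifies this with the law of the uniform spacings $(U_{(1)},U_{(2)}-U_{(1)},\dots,1-U_{(n)})$, and recovers the partial-sum form stated here by applying the cumulative-sum map.

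Your version folds that cumulative-sum step into Step 2, so it reaches the stated form directly and makes the lemma self-contained; the two arguments are otherwise the same computation. The independence of the ratio vector from $\Gamma_{n+1}$, which you note as a byproduct, is not needed in the paper. The critical-regime proof only uses the distributional identity for each fixed $n$, realized on a single exponential sequence so that $\Gamma_{n+1}/n\to1$ almost surely, and your fixed-$n$ statement already covers that use.
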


We will repeatedly use the standard point-process formulation. Let \(\Omega\) be a locally compact Polish space. We denote by \(M_p(\Omega)\) the space of boundedly finite counting measures on \(\Omega\), equipped with the vague topology; see \cite{daley2008introduction}. A point process on \(\Omega\) can be viewed as an \(M_p(\Omega)\)-valued random element. Thus, when
\[
    \Np_n \Rightarrow \Np \qquad \text{in } M_p(\Omega),
\]
we mean weak convergence of the laws of the random counting measures
\(\Np_n\) to the law of \(\Np\) in this topology.

We will use the Laplace-functional characterization of convergence of point processes. Namely, to prove \(\Np_n \Rightarrow \Np\) in \(M_p(\Omega)\), it is enough to show that, for every \(\varphi\in C_c^+(\Omega)\),
\[
\mathbb E\left[
    \exp\left\{
        -\int_{\Omega}\varphi(y)\,\Np_n(dy)
    \right\}
\right]
\longrightarrow
\mathbb E\left[
    \exp\left\{
        -\int_{\Omega}\varphi(y)\,\Np(dy)
    \right\}
\right],
\]
where \(C_c^+(\Omega)\) denotes the set of nonnegative continuous functions on \(\Omega\) with compact support; see, e.g.,
\cite[Chapter~11]{daley2008introduction}.

\section{Supercritical Regime: Proof of \Cref{thm:intro super_critical}}\label{app:proof sup_critical}

We use the notation from Appendix~\ref{app:appendix_preliminaries}. In particular,
\[
    A_i=\frac{e^{-\beta_nT_i}}{Z_n},
    \qquad
    Z_n=\sum_{j=1}^n e^{-\beta_nT_j},
    \qquad
    T_{(1)}\le\cdots\le T_{(n)}.
\]
By \Cref{lem:small_cap_asymptotics},
\[
    F(t)=C(q)t^{1/\alpha}+o\bigl(t^{1/\alpha}\bigr),
    \qquad
    C(q)=\frac{2^{1/\alpha}\sigma_{d-2}}{d-1}\rho(q),
\]
and the density $f(t)=F'(t)$ satisfies $f(t)=\Theta(t^\eta)$ near $0$, where $\eta=1/\alpha-1$.
The largest attention weight can be written as
\begin{align*}
    A_{(1)}=\frac{e^{-\beta_nT_{(1)}}}{\sum_{k=1}^n e^{-\beta_nT_k}}
    =\frac{1}{1+S_n},
    \qquad
    S_n\coloneqq\sum_{k=2}^n e^{-\beta_n(T_{(k)}-T_{(1)})}.
\end{align*}

We first prove the following lemma for $S_n$. 
\begin{lemma}\label{lem:S_n asymptotics}
    For any $n \to \infty$ and any $\beta_n \to \infty$, we have that
        \begin{align}
            \mathbb{E}[S_n]  \approx n^2 \left( O\left( e^{-\beta_n}\right) + O\left( e^{-C'n}\right)+ \beta_n^{-1/\alpha} \mathbf{B}\left(1,n-1\right) + \beta_n^{- 1} \mathbf{B}\left(2-\alpha,n-1\right)\right).
        \end{align}
    Here, $C'$ is a positive constant depending on $\rho$ in \Cref{a:iid}, and $\mathbf{B}(\cdot,\cdot)$ is the beta function.
\end{lemma}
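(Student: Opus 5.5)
We bound $\mathbb E[S_n]$ by conditioning on the minimum and using exchangeability. Write
\[
S_n=\sum_{k\neq k^\ast} e^{-\beta_n(T_k-T_{(1)})},
\]
where $k^\ast$ is the index achieving the minimum. By symmetry,
\[
\mathbb E[S_n]=n(n-1)\,\mathbb E\bigl[e^{-\beta_n(T_2-T_1)}\mathbf 1\{T_1<T_j\ \forall j\neq 1\}\bigr].
\]
Conditioning on $T_1=s$ and on $T_2=t>s$, the remaining $n-2$ variables must exceed $s$, which contributes a factor $(1-F(s))^{n-2}$. This gives
\[
\mathbb E[S_n]=n(n-1)\int_0^2 f_T(s)(1-F(s))^{n-2}\int_s^2 e^{-\beta_n(t-s)}f_T(t)\,dt\,ds ,
\]
which accounts for the $n^2$ prefactor.

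\textbf{Inner integral.} Using $f_T(t)=\Theta(t^\eta)$ from \Cref{lem:small_cap_asymptotics}, write $t=s+u$ and use $(s+u)^\eta\approx s^\eta+u^\eta$ (valid up to constants for either sign of $\eta$ on bounded ranges, when $\eta\ge 0$; for $\eta<0$, i.e.\ $d=2$, one handles $u\lesssim s$ and $u\gtrsim s$ separately). This gives
\[
\int_s^2 e^{-\beta_n(t-s)}f_T(t)\,dt \approx s^\eta\beta_n^{-1}+\beta_n^{-1/\alpha}+O(e^{-\beta_n}).
\]
The $O(e^{-\beta_n})$ term collects the part $t-s\ge 1$ when $s\le 1$. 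For $s>1$, the outer factor $(1-F(s))^{n-2}\le(1-F(1))^{n-2}=e^{-C'n}$ is exponentially small, which produces the $O(e^{-C'n})$ term.

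\textbf{Outer integral.} Substitute $v=F(s)$, so that $f_T(s)\,ds=dv$ and, by \Cref{lem:small_cap_asymptotics} (or \Cref{lem:inverse_cdf}), $s\approx v^\alpha$. Then $s^\eta\approx v^{\alpha\eta}=v^{1-\alpha}$. The $\beta_n^{-1/\alpha}$ term yields
\[
\beta_n^{-1/\alpha}\int_0^1(1-v)^{n-2}\,dv=\beta_n^{-1/\alpha}\mathbf B(1,n-1).
\]
The $s^\eta\beta_n^{-1}$ term yields
\[
\beta_n^{-1}\int_0^1 v^{1-\alpha}(1-v)^{n-2}\,dv=\beta_n^{-1}\mathbf B(2-\alpha,n-1).
\]
Collecting terms gives the claimed two-sided comparison.

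\textbf{Main obstacle.} The delicate step is the two-sided estimate of the inner integral uniformly in $s\in[0,1]$, especially obtaining a matching lower bound. For the lower bound I would restrict to $u\in[0,\beta_n^{-1}]$, where $e^{-\beta_n u}\ge e^{-1}$, and compare $\int_s^{s+\beta_n^{-1}}t^\eta\,dt$ with $\beta_n^{-1}\max(s,\beta_n^{-1})^\eta$. One then checks that the latter is comparable to $\beta_n^{-1}s^\eta+\beta_n^{-1-\eta}$, and notes that $\beta_n^{-1-\eta}=\beta_n^{-1/\alpha}$. The same case split at $s\sim\beta_n^{-1}$ gives the upper bound. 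Constants are uniform since they depend only on $c_\pm$ and $d$.
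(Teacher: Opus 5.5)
Your argument is essentially the paper's. You use the same exchangeability reduction to $n(n-1)\int_0^2 f_T(s)(1-F(s))^{n-2}\int_s^2 e^{-\beta_n(t-s)}f_T(t)\,dt\,ds$, the same estimate $\beta_n^{-1}s^\eta+\beta_n^{-1/\alpha}+O(e^{-\beta_n})$ for the inner Laplace integral, and Beta functions for the outer integral. Your substitution $v=F(s)$ makes $(1-F(s))^{n-2}=(1-v)^{n-2}$ exact. That is cleaner than the paper's replacement of $(1-F(t))^{n-2}$ by $(1-C_1t^{\eta+1})^{n-2}$, which is only a one-sided bound, followed by two changes of variables.

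One correction to your side remark on $\eta<0$, i.e.\ $d=2$. There your own quantity $\beta_n^{-1}\max(s,\beta_n^{-1})^{\eta}$ equals $\min(\beta_n^{-1}s^\eta,\beta_n^{-1/\alpha})$, not the sum, so no case split yields the two-sided inner estimate. Consistently, $\mathbf B(2-\alpha,n-1)=\mathbf B(0,n-1)=\infty$ when $d=2$. So the lemma, like the paper's step $(s+t)^\eta\approx s^\eta+t^\eta$, is only meaningful for $d\ge3$.
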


We defer the proof of \Cref{lem:S_n asymptotics} after proving \Cref{thm:intro super_critical}.

\begin{proof}[Proof of \Cref{thm:intro super_critical}]
We show that if $\beta_n n^{-\alpha} \to \infty$, we have that
\begin{align}
    \lim_{n\to \infty} \mathbb{E}[S_n] =0 ,
\end{align}
and thus $S_n \to 0$ in probability.

By \Cref{lem:S_n asymptotics} and the classical asymptotics of the beta function \cite{olver1997asymptotics}, we have that as $n \to \infty$,
    \begin{align}
        \mathbf{B}\left(1,n-1\right) \approx n^{-1}, \text{ and } \mathbf{B}\left(2-\alpha,n-1\right) \approx n^{-2+\alpha}.
    \end{align}
Hence, we have that
    \begin{align}
        n^{2} \beta_n ^{-1/\alpha} \mathbf{B}\left(1,n-1\right) \approx \beta_n ^{-1/\alpha} n, \text{ and } n^{2} \beta_n ^{-1} \mathbf{B}\left(2-\alpha,n-1\right) \approx \beta_n ^{-1} n^{\alpha}, 
    \end{align}
both of which go to $0$ if $\beta_n n^{-\alpha} \to \infty$. Because $O\left( e^{-\beta_n}\right) + O\left( e^{-C'n}\right)$ also goes to $0$, we thus get that $\lim_{n\to \infty} \mathbb{E}[S_n] =0$.
\end{proof}

\begin{proof}[Proof of \Cref{lem:S_n asymptotics}]
Recall that $\eta = \frac{d-3}{2} =\frac{1-\alpha}{\alpha}$. Hence, by the arguments at the beginning of \Cref{app:proof sup_critical}, we see that $f(t) = \Theta(t^{\eta})$.

Let $K_n\coloneqq\{ i\in\llbracket 1 ,n \rrbracket \mid T_i = T_{(1)}\}$ and $R_n \coloneqq \#K_n$. First, we see that
\begin{align}
    \sum_{i=1}^n \bone_{i\in K_n} \sum_{j\neq i} e^{-\beta_n(T_j - T_i)} = R_n S_n \,.
\end{align}
Because $T_i$'s are i.i.d. and have continuous density, we have that for any $i \neq j$, $\mathbb{P}\left( T_i = T_j \right) = 0$. Hence, $\mathbb{P}(R_n >1) = 0$ and thus $R_n = 1$ almost surely. We have that
\begin{align}
    \mathbb{E}[S_n] = \mathbb{E}[R_n S_n] = \mathbb{E} \left[\sum_{i=1}^n \bone_{i\in K_n} \sum_{j\neq i} e^{-\beta_n(T_j - T_i)} \right] = \sum_{i=1}^n \mathbb{E} \left[ \bone_{i\in K_n} \sum_{j\neq i} e^{-\beta_n(T_j - T_i)} \right]  \,.
\end{align}
Because $T_i$'s are i.i.d., we have that
\begin{align}\label{eq:S_n ineql 1}
    \begin{split}
        \mathbb{E}[S_n] &= n \mathbb{E}\left[ \bone_{1\in K_n} \sum_{j\geq 2} e^{-\beta_n(T_j-T_1)} \right] = n \sum_{j\geq 2}\mathbb{E}\left[ \bone_{1\in K_n}  e^{-\beta_n(T_j-T_1)} \right]
        \\  &= n(n-1) \mathbb{E}\left[ \bone_{1\in K_n}  e^{-\beta_n(T_2-T_1)} \right] 
        \\  &= n(n-1) \int_0 ^2 f(t) \left(\int_{t} ^2 f(s) \de s\right)^{n-2} \left(\int_t ^2 e^{-\beta_n(s-t)} f(s)\de s\right) \de t,
    \end{split}
\end{align}
where in the last equality, we used the fact that 
    \begin{align*}
        1\in K_n \iff T_1 \leq T_j \text{ for all } j \geq 2.
    \end{align*}
We first estimate the term $ f(t) \left(\int_{t} ^2 f(s) \de s\right)^{n-2} $ for $t \in [0,2]$. We know that $C_1 t^{\eta +1} \leq F(t) \leq C_2 t^{\eta +1}$  for some $C_1,C_2$ depending on $\rho$ in \Cref{a:iid}. We pick a $\delta<1$ depending on $\rho,d$, such that when $t <\delta$, $C_2 t^{\eta +1} < 1$.
Because $\int_\delta ^2 f(s) \de s = F(2) - F(\delta) = 1-F(\delta) < 1$, we have that $\left(\int_{\delta} ^2 f(s) \de s\right)^{n-2} \approx e^{-Cn}$ for some positive $C$ depending on $\rho$ in \Cref{a:iid}. Hence, we see that 
    \begin{align}\label{eq:S_n ineql 3}
        \begin{split}
            & f(t) \left(\int_{t} ^2 f(s) \de s\right)^{n-2} \approx e^{-Cn} +  \bone_{t < \delta} f(t) \left(\int_{t} ^2 f(s) \de s\right)^{n-2} 
            \\  & = e^{-Cn} +  \bone_{t < \delta} f(t) \left(1-F(t)\right)^{n-2} \approx e^{-Cn} +  \bone_{t < \delta} t^{\eta} \left(1-C_1t^{\eta+1}\right)^{n-2}.
        \end{split}
    \end{align}
We next estimate the term $\left(\int_t ^2 e^{-\beta_n(s-t)} f(s)\de s\right)$. By \eqref{eq:S_n ineql 3}, we only need to discuss the case when $t \in [0,\delta]$ as we can use the trivial bound $1$ when $t > \delta$. Because $f(s) = \Theta(s^{\eta})$, we have that
    \begin{align}\label{eq:S_n ineql 2}
        \begin{split}
            &\int_t ^2 e^{-\beta_n(s-t)} f(s)\de s \approx \int_t ^2 e^{-\beta_n(s-t)} s^{\eta} \de s = \int_0 ^{2-t} e^{-\beta_n s} (s+t)^{\eta} \de s
            \\  &\approx \int_0 ^{2-t} e^{-\beta_n s} (s^{\eta}+t^{\eta}) \de s \approx  O\left( e^{-\beta_n}\right)+ \int_0 ^{\infty} e^{-\beta_n s} (s^{\eta}+t^{\eta}) \de s 
            \\  &\approx O\left( e^{-\beta_n}\right) + \beta_n^{-\eta - 1} + \beta_n^{-1} t^{\eta}.
        \end{split}
    \end{align}
Hence, combining \eqref{eq:S_n ineql 3} and \eqref{eq:S_n ineql 2}, we see that $\mathbb{E}[S_n]$ in \eqref{eq:S_n ineql 1} can be estimated as
    \begin{align}
        \mathbb{E}[S_n] &\approx n^2 \left(e^{-Cn} + O\left( e^{-\beta_n}\right) +\int_0 ^{\delta} \left( \beta_n^{-\eta - 1} + \beta_n^{-1} t^{\eta} \right) t^{\eta} \left(1-C_1t^{\eta+1}\right)^{n-2} \de t \right).
    \end{align}
In the above integral, we do a change of variable for $t = s C_1^{-\frac{1}{\eta+1}}$ to $s$ and by the choice of $\delta$, the integral domain for $s$ is $[0,\delta']$ for some $\delta' < 1$. Hence,
    \begin{align}
        \begin{split}
            &\mathbb{E}[S_n] \approx n^2 \left(e^{-Cn} + O\left( e^{-\beta_n}\right) +\int_0 ^{\delta '} \left( \beta_n^{-\eta - 1} + \beta_n^{-1} s^{\eta} \right) s^{\eta} \left(1-s^{\eta+1}\right)^{n-2} \de s \right)
            \\  & \approx n^2 \left(e^{-Cn} + O\left( e^{-\beta_n}\right) + O\left( e^{-C'n}\right)+\int_0 ^{1} \left( \beta_n^{-\eta - 1} + \beta_n^{-1} s^{\eta} \right) s^{\eta} \left(1-s^{\eta+1}\right)^{n-2} \de s \right).
        \end{split}
    \end{align}
The above estimate follows because when $s \in [\delta',1]$, the term $\left(1-s^{\eta+1}\right)^{n-2}$ is an $O( e^{-C'n})$ term for some $C'$ depending on $\rho$. Finally, we make a change of variable $s = r^{\frac{1}{\eta+1}}$, and we combine the two terms $e^{-Cn}$ and $O( e^{-C'n})$, we then have that
    \begin{align}
        \begin{split}
            &\mathbb{E}[S_n]  \approx n^2 \left( O\left( e^{-\beta_n}\right) + O\left( e^{-C'n}\right)+\int_0 ^{1} \left( \beta_n^{-\eta - 1} + \beta_n^{-1} r^{\frac{\eta}{\eta+1}} \right) \left(1-r\right)^{n-2} \de r \right)
            \\  & = n^2 \left( O\left( e^{-\beta_n}\right) + O\left( e^{-C'n}\right)+ \beta_n^{-\eta - 1} \mathbf{B}\left(1,n-1\right) + \beta_n^{- 1} \mathbf{B}\left(\frac{2\eta+1}{\eta+1},n-1\right)\right).
        \end{split}
    \end{align}
Because $\eta = \tfrac{1}{\alpha} - 1$, we finish the proof of \Cref{lem:S_n asymptotics}.

\end{proof}

%%%%%%%%%%%%%%%%%%%%%%%%%%%%%%
\section{Critical Regime: Proof of \Cref{thm:intro critical}}\label{app:proof critical}

We prove \Cref{thm:intro critical} in this section. 
Let \(U_i=F(T_i)\). Then \(U_i\) are i.i.d. uniform on \((0,1)\), and
\(T_{(i)}=F^{-1}(U_{(i)})\). By \Cref{lem:Renyi representation}, we may write
\[
(U_{(1)},\ldots,U_{(n)})
\stackrel{d}{=}
\left(\frac{\Gamma_1}{\Gamma_{n+1}},\ldots,
      \frac{\Gamma_n}{\Gamma_{n+1}}\right),
\]
where \(\Gamma_i=E_1+\cdots+E_i\) and \(E_i\) are i.i.d. exponential random
variables of mean one. Hence
\[
\beta_nT_{(i)}
\stackrel{d}{=}
\beta_nF^{-1}\left(\frac{\Gamma_i}{\Gamma_{n+1}}\right).
\]

Recall from \Cref{lem:inverse_cdf}, we have
\[
F^{-1}(u)=\left(\frac{u}{C(q)}\right)^\alpha(1+o(1)),
\qquad u\downarrow0.
\]
Because \(\Gamma_{n+1}/n\to1\) almost surely and
\(\beta_n n^{-\alpha}\to\gamma\), it follows that for every fixed \(i\),
\[
\beta_nT_{(i)}
\to
Y_i:=\gamma\left(\frac{\Gamma_i}{C(q)}\right)^\alpha
\qquad\text{a.s.}
\]

It remains to pass this convergence through the softmax denominator. By \Cref{lem:inverse_cdf}, there is a \(c>0\) such that \(F^{-1}(u)\ge cu^\alpha, \forall u \in [0,1]\). On the event \(\Gamma_{n+1}/n\to1\), for all large \(n\),
\[
\beta_nF^{-1}\left(\frac{\Gamma_i}{\Gamma_{n+1}}\right)
\ge c'\Gamma_i^\alpha \, ,
\]
for some $c'$ independent of $n$. We obtain the domination
\[
\exp\left\{
-\beta_nF^{-1}\left(\frac{\Gamma_i}{\Gamma_{n+1}}\right)
\right\}
\le
e^{-c'\Gamma_i^\alpha},
\qquad 1\le i\le n,
\]
for all sufficiently large \(n\). 

We now check that this dominating sequence is summable almost surely. By the
strong law of large numbers,
\[
\frac{\Gamma_i}{i}
=
\frac{E_1+\cdots+E_i}{i}
\longrightarrow 1
\qquad\text{a.s.}
\]
Therefore, almost surely, there exists \(i_0\) such that $\Gamma_i\ge \frac{i}{2}$, for all $i\ge i_0$.
Consequently, for all \(i\ge i_0\), $e^{-c'\Gamma_i^\alpha}\le e^{-c'2^{-\alpha}i^\alpha}$.
Hence
\[
\sum_{i\ge1}e^{-c'\Gamma_i^\alpha}<\infty
\qquad\text{a.s.}
\]

We may therefore apply dominated convergence for series, and get
\[
\sum_{i=1}^n
\exp\left\{
-\beta_nF^{-1}\left(\frac{\Gamma_i}{\Gamma_{n+1}}\right)
\right\}
\longrightarrow
\sum_{i=1}^\infty e^{-Y_i}
\qquad\text{a.s.}
\]
The limiting denominator is finite by the previous summability argument and is
strictly positive because \(e^{-Y_1}>0\). Therefore, for every fixed \(k\),
\[
(A_{(1)},\ldots,A_{(k)})
\Rightarrow
\left(
\frac{e^{-Y_1}}{\sum_{j\ge1}e^{-Y_j}},
\ldots,
\frac{e^{-Y_k}}{\sum_{j\ge1}e^{-Y_j}}
\right).
\]

Finally, we identify the limit with the Poisson point process: the variables \(Y_i=\gamma(\Gamma_i/C(q))^\alpha\) are the increasing
atoms of the Poisson point process on \(\mathbb R_+\) with intensity measure
\(
\Lambda([0,y])=C(q)\left(y/\gamma\right)^{1/\alpha}.
\)
This proves \Cref{thm:intro critical}.

%%%%%%%%%%%%%%%%%%%%%%%%%%%%%%

\section{Subcritical Regime: Proof of \Cref{thm:intro sub_critical 2 new}}\label{app:proof sub_critical}

This appendix proves the attention weight results in the subcritical regime. 

\subsection{Subcritical partition function}

Recall from Appendix~\ref{app:appendix_preliminaries}. We use
\[
    Z_n=\sum_{i=1}^n e^{-\beta_nT_i},
    \qquad
    A_i=\frac{e^{-\beta_nT_i}}{Z_n}.
\]

The following lemma characterizes the limit of $Z_n$ in the subcritical regime.
\begin{lemma}\label{lem:subcritical_scalar_laplace}
As \(\beta\to\infty\),
\[
    \mathbb E[e^{-\beta T_1}]
    =
    C(q)\Gamma\!\left(\frac1\alpha+1\right)
    \beta^{-1/\alpha}(1+o(1)).
\]
\end{lemma}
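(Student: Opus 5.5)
We write the Laplace transform as a Stieltjes integral against the D2Q CDF and integrate by parts. Since $F(0)=0$ and $F(2)=1$,
\[
    \mathbb E[e^{-\beta T_1}]=\int_{[0,2]}e^{-\beta t}\,\diff F(t)
    = e^{-2\beta}+\beta\int_0^2 e^{-\beta t}F(t)\,\diff t .
\]
The boundary term $e^{-2\beta}$ is exponentially small. It is therefore negligible against $\beta^{-1/\alpha}$.

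Next we substitute $t=s/\beta$. This gives
\[
    \beta^{1/\alpha}\,\beta\int_0^2 e^{-\beta t}F(t)\,\diff t
    =\int_0^{2\beta} e^{-s}\,\beta^{1/\alpha}F(s/\beta)\,\diff s .
\]
By \Cref{lem:small_cap_asymptotics}, $F(t)=C(q)t^{1/\alpha}+o(t^{1/\alpha})$ as $t\to0^+$. Hence for each fixed $s>0$ the integrand converges pointwise:
\[
    \beta^{1/\alpha}F(s/\beta)\to C(q)s^{1/\alpha}\qquad\text{as }\beta\to\infty .
\]
The same lemma gives the global bound $F(t)\le c_+t^{1/\alpha}$ on $[0,2]$. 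So the integrand is dominated by $c_+e^{-s}s^{1/\alpha}$ for all $\beta$, and this function is integrable on $(0,\infty)$. Dominated convergence then yields
\[
    \int_0^{2\beta} e^{-s}\,\beta^{1/\alpha}F(s/\beta)\,\diff s\;\longrightarrow\; C(q)\int_0^\infty e^{-s}s^{1/\alpha}\,\diff s=C(q)\,\Gamma\!\left(\tfrac1\alpha+1\right).
\]
Multiplying back by $\beta^{-1/\alpha}$ and adding the negligible boundary term proves the lemma.

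The only delicate point is justifying the limit exchange, and the uniform bound $F(t)\le c_+t^{1/\alpha}$ on all of $[0,2]$ resolves it. Without that bound, one would split the integral at a small $\delta$. On $[0,\delta]$ one uses the local asymptotics. On $[\delta,2]$ one bounds the contribution by $e^{-\beta\delta}$, which is $o(\beta^{-1/\alpha})$.

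An equivalent route works directly with the density $f_T$ and the bound $f_T(t)\le c_+t^{\eta}$. However, the pointwise asymptotics of $f_T$ itself are not stated in the lemma, only those of $F$. For that reason the integration-by-parts form is the safer choice.
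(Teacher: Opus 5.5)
Your proposal is correct and follows essentially the same route as the paper: integration by parts to get $e^{-2\beta}+\beta\int_0^2 e^{-\beta t}F(t)\,\mathrm{d}t$, the substitution $u=\beta t$, and dominated convergence using the small-cap asymptotics of $F$. The only difference is that you explicitly name the dominating function $c_+e^{-s}s^{1/\alpha}$, using the global bound $F(t)\le c_+t^{1/\alpha}$ from \Cref{lem:small_cap_asymptotics}; the paper leaves this step implicit.
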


\begin{proof}
Let \(F(t)=\mathbb P(T_1\le t)\). Integration by parts gives
\[
    \mathbb E[e^{-\beta T_1}]
    =
    e^{-2\beta}
    +
    \beta\int_0^2 e^{-\beta t}F(t)\,dt .
\]
\Cref{lem:small_cap_asymptotics} gives
\[
    F(t)=C(q)t^{1/\alpha}(1+o(1)),
    \qquad t\to0^+.
\]
After the change of variables \(u=\beta t\), dominated convergence yields
\[
    \beta\int_0^{2}e^{-\beta t}F(t)\,dt
    =
    C(q)\beta^{-1/\alpha}
    \int_0^\infty e^{-u}u^{1/\alpha}\,du\,(1+o(1)).
\]
The integral is \(\Gamma(1/\alpha+1)\), which proves the claim.
\end{proof}

\begin{corollary}[Subcritical partition function]\label{cor:subcritical_partition}
If \(\beta_n\to\infty\) and \(\beta_n n^{-\alpha}\to0\), then
\[
    \frac{Z_n}{\mathbb E[Z_n]}\xrightarrow{\mathbb P}1,
    \qquad
    \frac{Z_n}{C(q)n\beta_n^{-1/\alpha}\Gamma\!\left(\frac1\alpha+1\right)}
    \xrightarrow{\mathbb P}1.
\]
\end{corollary}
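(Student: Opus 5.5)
The plan is to use the second-moment method: compute $\mathbb E[Z_n]$ and $\operatorname{Var}(Z_n)$ from \Cref{lem:subcritical_scalar_laplace}, and then apply Chebyshev's inequality. Since $Z_n=\sum_{i=1}^n e^{-\beta_nT_i}$ is a sum of i.i.d. nonnegative terms, both moments reduce to one-point Laplace transforms of $T_1$. The first statement, $Z_n/\mathbb E[Z_n]\to1$ in probability, follows once we show
\[
\frac{\operatorname{Var}(Z_n)}{(\mathbb E[Z_n])^2}\to 0 .
\]
The second statement then follows from the first because \Cref{lem:subcritical_scalar_laplace} gives $\mathbb E[Z_n]=n\,\mathbb E[e^{-\beta_nT_1}]=C(q)\Gamma(\tfrac1\alpha+1)\,n\beta_n^{-1/\alpha}(1+o(1))$, using $\beta_n\to\infty$.

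For the variance, independence gives
\[
\operatorname{Var}(Z_n)\le n\,\mathbb E[e^{-2\beta_nT_1}] .
\]
Applying \Cref{lem:subcritical_scalar_laplace} again, now with $\beta=2\beta_n\to\infty$, we get
\[
\mathbb E[e^{-2\beta_nT_1}]=C(q)\Gamma(\tfrac1\alpha+1)(2\beta_n)^{-1/\alpha}(1+o(1)).
\]
Therefore
\[
\frac{\operatorname{Var}(Z_n)}{(\mathbb E Z_n)^2}\lesssim \frac{n\beta_n^{-1/\alpha}}{n^2\beta_n^{-2/\alpha}}=\frac{1}{n\beta_n^{-1/\alpha}}=\frac{C(q)}{m_n(q)} .
\]
The hypothesis $\beta_nn^{-\alpha}\to0$ is exactly $n\beta_n^{-1/\alpha}\to\infty$, that is, $m_n(q)\to\infty$, so the ratio tends to zero. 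Chebyshev's inequality then gives, for every $\varepsilon>0$,
\[
\mathbb P\bigl(|Z_n/\mathbb E Z_n-1|>\varepsilon\bigr)\le \varepsilon^{-2}\operatorname{Var}(Z_n)/(\mathbb E Z_n)^2\to0 .
\]

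The only point needing care is that both asymptotics from \Cref{lem:subcritical_scalar_laplace} hold along the sequences $\beta_n$ and $2\beta_n$. This is immediate because the lemma is stated as $\beta\to\infty$ along any sequence. No step is a genuine obstacle here; the main thing to track is that the constants in the $(1+o(1))$ factors are uniform. That uniformity follows from the dominated convergence already carried out in the proof of the lemma.
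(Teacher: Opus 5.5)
Your proposal is correct and follows the paper's proof essentially verbatim: you compute $\mathbb E[Z_n]$ from \Cref{lem:subcritical_scalar_laplace}, bound $\mathrm{Var}(Z_n)\le n\,\mathbb E[e^{-2\beta_nT_1}]=O(n\beta_n^{-1/\alpha})$, and note that $\beta_n n^{-\alpha}\to0$ is equivalent to $n\beta_n^{-1/\alpha}\to\infty$. Chebyshev's inequality then gives the first claim, and the deterministic asymptotic for $\mathbb E[Z_n]$ gives the second.
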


\begin{proof}
By Lemma \ref{lem:subcritical_scalar_laplace},
\[
    \mathbb E[Z_n]
    =
    C(q)n\beta_n^{-1/\alpha}\Gamma\!\left(\frac1\alpha+1\right)(1+o(1)),
    \qquad
    \mathrm{Var}(Z_n)
    \le n\mathbb E[e^{-2\beta_nT_1}]
    =
    O(n\beta_n^{-1/\alpha}).
\]
Therefore
\[
    \frac{\mathrm{Var}(Z_n)}{\mathbb E[Z_n]^2}
    =
    O\!\left((n\beta_n^{-1/\alpha})^{-1}\right)\to0,
\]
because \(\beta_n n^{-\alpha}\to0\). Chebyshev's inequality gives \(Z_n/\mathbb E[Z_n]\to1\) in probability, and the displayed deterministic asymptotic for \(\mathbb E[Z_n]\) gives the second claim.
\end{proof}

\subsection{Proof of \Cref{thm:intro sub_critical 2 new}}

The ordered-weight proof uses the quantile asymptotics from \Cref{lem:inverse_cdf} to handle the denominator $Z_n$ in $A_{(i)}$. To handle the numerator $e^{-\beta_n T_{(i)}}$ in $A_{(i)}$, we prove in the following that the best D2Q is so small that \(\beta_nT_{(1)}\) vanishes, and the \(k_n\)-th D2Q is governed by the quantile level \(k_n/(n\beta_n^{-1/\alpha})\).
\begin{lemma}\label{lem:subcritical_first_order}
If $\beta_n\to\infty$ and \(\beta_n n^{-\alpha}\to0\), then
\[
    \beta_nT_{(1)}\xrightarrow{\mathbb P}0.
\]
\end{lemma}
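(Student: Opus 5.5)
We will prove this directly from the distribution of the minimum. For any fixed $\varepsilon>0$, independence of the $T_i$ gives
\[
    \mathbb P\bigl(\beta_nT_{(1)}>\varepsilon\bigr)
    =\mathbb P\bigl(T_1>\varepsilon/\beta_n\bigr)^n
    =\bigl(1-F(\varepsilon\beta_n^{-1})\bigr)^n
    \le \exp\bigl(-nF(\varepsilon\beta_n^{-1})\bigr).
\]
The whole argument therefore reduces to showing that $nF(\varepsilon\beta_n^{-1})\to\infty$.

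Since $\beta_n\to\infty$, the argument $\varepsilon\beta_n^{-1}$ tends to $0$. This lets us apply the small-$t$ asymptotics of \Cref{lem:small_cap_asymptotics}, which give
\[
    F(\varepsilon\beta_n^{-1})=C(q)\varepsilon^{1/\alpha}\beta_n^{-1/\alpha}(1+o(1)).
\]
Alternatively, the cruder lower bound $F(t)\ge c_-t^{1/\alpha}$ from the same lemma suffices, and it holds for all $t\in[0,2]$ without passing to the limit. Either way,
\[
    nF(\varepsilon\beta_n^{-1})\gtrsim \varepsilon^{1/\alpha}\,n\beta_n^{-1/\alpha}
    =\varepsilon^{1/\alpha}\bigl(\beta_n n^{-\alpha}\bigr)^{-1/\alpha},
\]
which tends to infinity because $\beta_nn^{-\alpha}\to0$. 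Hence $\mathbb P(\beta_nT_{(1)}>\varepsilon)\to0$ for every $\varepsilon>0$. Since $T_{(1)}\ge0$, this is exactly convergence in probability to $0$.

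There is no real obstacle here. The only point needing care is the range of $t$. If we use the asymptotic form of $F$, we need $\beta_n\to\infty$ so that $\varepsilon\beta_n^{-1}\to0$. If we use the uniform bound $c_-t^{1/\alpha}$, we only need $\varepsilon\beta_n^{-1}\le2$; if instead $\varepsilon\beta_n^{-1}>2$, then $F=1$ and the probability is already $0$.

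As an alternative check, the same conclusion follows from the Rényi representation (\Cref{lem:Renyi representation}) together with \Cref{lem:inverse_cdf}:
\[
    \beta_nT_{(1)}\stackrel{d}{=}\beta_nF^{-1}(\Gamma_1/\Gamma_{n+1})\approx\beta_n(\Gamma_1/(C(q)n))^\alpha=O_{\mathbb P}(\beta_nn^{-\alpha})\to0.
\]
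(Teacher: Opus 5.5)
Your proposal is correct and follows essentially the same route as the paper: bound $\mathbb P(\beta_nT_{(1)}>\varepsilon)=(1-F(\varepsilon/\beta_n))^n\le\exp(-nF(\varepsilon/\beta_n))$ and use \Cref{lem:small_cap_asymptotics} to show $nF(\varepsilon/\beta_n)\asymp\varepsilon^{1/\alpha}n\beta_n^{-1/\alpha}\to\infty$. Your side remarks are valid additions. Using the uniform bound $c_-t^{1/\alpha}$ removes the need for $\beta_n\to\infty$, and the R\'enyi check is also valid.
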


\begin{proof}
For every \(\varepsilon>0\),
\[
    \mathbb P(\beta_nT_{(1)}>\varepsilon)
    =
    \left(1-F\!\left(\frac{\varepsilon}{\beta_n}\right)\right)^n
    \le
    \exp\!\left(-nF\!\left(\frac{\varepsilon}{\beta_n}\right)\right).
\]
By \Cref{lem:small_cap_asymptotics},
\[
    nF\!\left(\frac{\varepsilon}{\beta_n}\right)
    =
    C(q)\varepsilon^{1/\alpha}n\beta_n^{-1/\alpha}(1+o(1))\to\infty,
\]
so the probability tends to zero.
\end{proof}

\begin{lemma}\label{lem:subcritical_kth_order}
Assume \(\beta_n\to\infty\), \(\beta_n n^{-\alpha}\to0\), and
\[
    \frac{k_n}{C(q)n\beta_n^{-1/\alpha}}\to x\in[0,\infty).
\]
Then
\[
    \beta_nT_{(k_n)}\xrightarrow{\mathbb P}x^\alpha.
\]
\end{lemma}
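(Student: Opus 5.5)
Write $m_n\coloneqq C(q)n\beta_n^{-1/\alpha}$, so the hypothesis is $k_n/m_n\to x$ and $m_n\to\infty$ in the present regime. The plan is to reduce the statement to uniform order statistics via $T_{(k)}=F^{-1}(U_{(k)})$ and then apply the quantile asymptotics of \Cref{lem:inverse_cdf}. I would treat the cases $x>0$ and $x=0$ separately, since the second needs a monotonicity/sandwich argument.

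\textbf{Case $x>0$.} Then $k_n\to\infty$ and $k_n/n\approx x\beta_n^{-1/\alpha}/C(q)\to0$.
\begin{enumerate}
\item By \Cref{lem:Renyi representation}, $U_{(k_n)}\stackrel{d}{=}\Gamma_{k_n}/\Gamma_{n+1}$. Since $\mathrm{Var}(\Gamma_k)=k$, Chebyshev gives $\Gamma_{k_n}/k_n\to1$ in probability. The law of large numbers gives $\Gamma_{n+1}/n\to1$. Hence
\[
U_{(k_n)}=\frac{k_n}{n}(1+o_{\mathbb P}(1)).
\]
Alternatively, one can use the binomial count $\#\{i:U_i\le u\}$ with mean $nu$ and variance at most $nu$; because $nu\asymp k_n\to\infty$, this concentrates.
\item In particular $U_{(k_n)}\to0$ in probability. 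The random-argument form of \Cref{lem:inverse_cdf} then gives
\[
T_{(k_n)}=F^{-1}(U_{(k_n)})=\Bigl(\frac{U_{(k_n)}}{C(q)}\Bigr)^\alpha(1+o_{\mathbb P}(1))=\Bigl(\frac{k_n}{C(q)n}\Bigr)^\alpha(1+o_{\mathbb P}(1)).
\]
\item Multiplying by $\beta_n$ gives
\[
\beta_nT_{(k_n)}=\Bigl(\frac{k_n}{m_n}\Bigr)^\alpha(1+o_{\mathbb P}(1))\to x^\alpha.
\]
\end{enumerate}

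\textbf{Case $x=0$.} Here $k_n$ may stay bounded or grow more slowly than $m_n$, so the law of large numbers for $\Gamma_{k_n}$ is not available. I would use a sandwich argument instead. Fix $\varepsilon>0$. Since $T_{(k)}$ is nondecreasing in $k$, we have $\beta_nT_{(k_n)}\le\beta_nT_{(k_n')}$ for $k_n'\coloneqq\max(k_n,\lceil\varepsilon m_n\rceil)$. The sequence $k_n'/m_n\to\varepsilon$, because $k_n=o(m_n)$. So the case already proved gives $\beta_nT_{(k_n')}\to\varepsilon^\alpha$. Hence
\[
\mathbb P(\beta_nT_{(k_n)}>2\varepsilon^\alpha)\to0.
\]
Since $\beta_nT_{(k_n)}\ge0$ and $\varepsilon$ is arbitrary, $\beta_nT_{(k_n)}\to0$ in probability. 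This is also consistent with \Cref{lem:subcritical_first_order} when $k_n=1$.

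\textbf{Main obstacle.} The delicate step is justifying the uniform-quantile concentration $U_{(k_n)}/(k_n/n)\to1$ in probability when $k_n\to\infty$ but $k_n/n\to0$. The cleanest route is the Rényi representation combined with Chebyshev bounds on $\Gamma_{k_n}$ and $\Gamma_{n+1}$. One also has to confirm that the $o_{\mathbb P}(1)$ error in \Cref{lem:inverse_cdf}, applied at a random argument, is legitimate. This holds because $F^{-1}$ is monotone and the deterministic relation is uniform in $u$ near zero.
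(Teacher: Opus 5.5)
Your proof is correct. For $x>0$ it follows the paper's route: write $T_{(k_n)}\stackrel{d}{=}F^{-1}(U_{(k_n)})$, show $(n/k_n)U_{(k_n)}\to1$ in probability, and apply the random-argument form of \Cref{lem:inverse_cdf}. The only difference is how you get the concentration. You use the R\'enyi representation, with Chebyshev on $\Gamma_{k_n}$ and the law of large numbers for $\Gamma_{n+1}$. The paper instead computes the mean $k_n/(n+1)$ and the variance of the Beta-distributed $U_{(k_n)}$ directly and applies Chebyshev.

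For $x=0$ the routes genuinely differ. The paper works with the binomial count $M_n(\varepsilon)=\#\{i:T_i\le\varepsilon/\beta_n\}$. Its mean $\mu_n=nF(\varepsilon/\beta_n)\asymp\varepsilon^{1/\alpha}m_n$ eventually dominates $k_n$, which gives $\mathbb P(\beta_nT_{(k_n)}>\varepsilon)=\mathbb P(M_n(\varepsilon)<k_n)\le 4/\mu_n$.

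You instead compare with $k_n'=\max(k_n,\lceil\varepsilon m_n\rceil)$, which eventually equals $\lceil\varepsilon m_n\rceil$ and is $\le n$ since $m_n=o(n)$. You then combine the pathwise monotonicity $T_{(k_n)}\le T_{(k_n')}$ with the already-proved case $x=\varepsilon$. This sandwich avoids a second computation and shows that $x=0$ is just a limiting instance of $x>0$.

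The paper's count has its own advantages. It is self-contained, using only the CDF asymptotics of \Cref{lem:small_cap_asymptotics} and not the quantile lemma. It also yields an explicit rate $O(1/\mu_n)$. Both arguments handle bounded $k_n$ without difficulty, and you correctly identified bounded $k_n$ as the reason the $x=0$ case must be split off.
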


\begin{proof}
First suppose \(x>0\). Let \(U_{(k_n)}\) be the \(k_n\)-th order statistic of an i.i.d. uniform sample on \((0,1)\). Then
\[
    T_{(k_n)}\stackrel{d}{=}F^{-1}(U_{(k_n)}).
\]
Since \(k_n\to\infty\) and \(k_n/n\to0\), the beta distribution formulas for uniform order statistics give
\[
    \mathbb E[U_{(k_n)}]=\frac{k_n}{n+1},
    \qquad
    \mathrm{Var}(U_{(k_n)})
    =
    \frac{k_n(n-k_n+1)}{(n+1)^2(n+2)}.
\]
Therefore, 
    \begin{align*}
        \lim_{n \to \infty} \frac{\mathrm{Var}(U_{(k_n)})}{\mathbb E[U_{(k_n)}]^2} = 0.
    \end{align*}
Hence \((n/k_n)U_{(k_n)}\to1\) in probability by Chebyshev's inequality. Applying Lemma \ref{lem:inverse_cdf},
\[
\begin{split}
    \beta_nT_{(k_n)}
    &\stackrel{d}{=}
    \beta_nF^{-1}\!\left(\frac{k_n}{n}\frac{nU_{(k_n)}}{k_n}\right)  \\
    &=
    \beta_n\left(\frac{k_n}{C(q)n}\frac{nU_{(k_n)}}{k_n}\right)^\alpha
    (1+o_{\mathbb P}(1))
    \xrightarrow{\mathbb P}x^\alpha .
\end{split}
\]
A priori the convergence is in distribution, but since the limit is deterministic then $\beta_n T_{(k_n)}$ converges also in probability.
It remains to treat \(x=0\). Fix \(\varepsilon>0\) and set
\[
    M_n(\varepsilon)
    \coloneqq
    \#\left\{i:T_i\le\frac{\varepsilon}{\beta_n}\right\}.
\]
Then \(M_n(\varepsilon)\) is binomial with mean
\[
    \mu_n=nF\!\left(\frac{\varepsilon}{\beta_n}\right)
    =
    C(q)\varepsilon^{1/\alpha}n\beta_n^{-1/\alpha}(1+o(1))\to\infty.
\]
The assumption \(k_n/(C(q)n\beta_n^{-1/\alpha})\to0\) gives \(k_n/\mu_n\to0\). Hence, for all large \(n\), \(k_n\le\mu_n/2\), and Chebyshev's inequality yields
\[
    \mathbb P(\beta_nT_{(k_n)}>\varepsilon)
    =
    \mathbb P(M_n(\varepsilon)<k_n)
    \le
    \mathbb P(|M_n(\varepsilon)-\mu_n|>\mu_n/2)
    \le
    \frac{4\mathrm{Var}(M_n(\varepsilon))}{\mu_n ^2} \leq \frac{4}{\mu_n}\to0,
\]
where we used the fact that $\mathrm{Var}(M_n(\varepsilon)) \leq \mu_n$ because $M_n(\varepsilon)$ is binomial. 
Since \(\varepsilon>0\) was arbitrary, \(\beta_nT_{(k_n)}\to0\) in probability.
\end{proof}

\begin{proof}[Proof of \Cref{thm:intro sub_critical 2 new}]
First, \(A_{(i)}\le A_{(1)}\). To prove \(A_{(i)}\to0\) for fixed \(i\), it suffices to consider subsequences with either \(\beta_n\le M<\infty\) or \(\beta_n\to\infty\). If \(\beta_n\le M\), then
\[
    A_{(1)}
    \le
    \frac{1}{\sum_{j=1}^n e^{-\beta_nT_j}}
    \le
    \frac{e^{2M}}{n}\to0,
\]
since \(T_j\in[0,2]\). If \(\beta_n\to\infty\), then \Cref{cor:subcritical_partition} implies \(Z_n\to\infty\) in probability, and therefore \(A_{(1)}\le1/Z_n\to0\) in probability.

Now assume \(\beta_n\to\infty\) and
\[
    \frac{\beta_n^{1/\alpha} k_n}{C(q)n } \to x,
    \qquad x\ge0.
\]
By Lemma \ref{lem:subcritical_first_order}, and Lemma \ref{lem:subcritical_kth_order},
\[
    \beta_nT_{(1)}\xrightarrow{\mathbb P}0,
    \qquad
    \beta_nT_{(k_n)}
    \xrightarrow{\mathbb P}
    x^\alpha .
\]
Thus
\[
    \frac{A_{(k_n)}}{A_{(1)}}
    =
    \exp\!\left(-\beta_nT_{(k_n)}+\beta_nT_{(1)}\right)
    \xrightarrow{\mathbb P}
    \exp\!\left(-x^\alpha\right).
\]
For the absolute normalization,
\[
    C(q)n\beta_n^{-1/\alpha}\Gamma\!\left(\frac1\alpha+1\right)A_{(k_n)}
    =
    \frac{C(q)n\beta_n^{-1/\alpha}\Gamma\!\left(\frac1\alpha+1\right)}{Z_n}
    e^{-\beta_nT_{(k_n)}}.
\]
The first factor converges to \(1\) in probability by Corollary \ref{cor:subcritical_partition}; the second factor converges to \(\exp\{-x^\alpha\}\) by Lemma \ref{lem:subcritical_kth_order}. Slutsky's theorem gives
\[
    C(q)n\beta_n^{-1/\alpha}\Gamma\!\left(\frac1\alpha+1\right)A_{(k_n)}
    \xrightarrow{\mathbb P}
    \exp\!\left(-x^\alpha\right),
\]
which is equivalent to the stated formula.

Finally, for \eqref{eq:subcritical_cumulative_mass_limit}, by assumption, we have that \(m_n\to\infty\) and
\(
    \lim_{n \to \infty} \tfrac{k_n}{m_n(q)}  = x \geq 0.
\)
We can rewrite the sum in \eqref{eq:subcritical_cumulative_mass_limit} as:
\[
    \sum_{i=1}^{k_n}A_{(i)}
    =
    \frac{m_n}{Z_n}
    \int_0^{k_n/m_n}
    e^{-\beta_nT_{(\lceil sm_n\rceil)}}\,\mathrm{d}s .
\]

We now identify the limit of the integral. For each fixed \(s\in(0,x]\),
\(
    \frac{\lceil sm_n\rceil}{m_n}\to s .
\)
Therefore, by \Cref{lem:subcritical_kth_order},
\(
    \beta_nT_{(\lceil sm_n\rceil)}
    \xrightarrow{\mathbb P}
    s^\alpha,
\)
and hence
\(
    e^{-\beta_nT_{(\lceil sm_n\rceil)}}
    \xrightarrow{\mathbb P}
    e^{-s^\alpha}.
\)
Since the random variables are bounded by \(1\), this convergence also holds in
\(L^1\). Thus, by Fubini and dominated convergence,
\[
    \mathbb E
    \int_0^x
    \left|
        e^{-\beta_nT_{(\lceil s m_n\rceil)}}-e^{-s^\alpha}
    \right|\,\mathrm{d}s
    =
    \int_0^x
    \mathbb E
    \left|
        e^{-\beta_nT_{(\lceil s m_n\rceil)}}-e^{-s^\alpha}
    \right|\,\mathrm{d}s 
    \longrightarrow 0 .
\]
Consequently,
\[
    \int_0^x e^{-\beta_nT_{(\lceil sm_n\rceil)}}\,\mathrm{d}s
    \xrightarrow{L^1}
    \int_0^x e^{-s^\alpha}\,\mathrm{d}s,
\]
and hence also in probability. Moreover,
\[
\left|
    \int_0^{k_n/m_n} e^{-\beta_nT_{(\lceil sm_n\rceil)}}\,\mathrm{d}s
    -
    \int_0^x e^{-\beta_nT_{(\lceil sm_n\rceil)}}\,\mathrm{d}s
\right|
\le
\left|
    \frac{k_n}{m_n}-x
\right|
\to0.
\]
Therefore
\[
    \int_0^{k_n/m_n}
    e^{-\beta_nT_{(\lceil sm_n\rceil)}}\,\mathrm{d}s
    \xrightarrow{\mathbb P}
    \int_0^x e^{-s^\alpha}\,\mathrm{d}s.
\]

Furthermore, by \Cref{cor:subcritical_partition},
\(
    \frac{m_n}{Z_n}
    \xrightarrow{\mathbb P}
    \frac{1}{\Gamma\!\left(\frac1\alpha+1\right)}
\), and using the Continuous Mapping Theorem we get:
\[
    \sum_{i=1}^{k_n}A_{(i)}
    \xrightarrow{\mathbb P}
    \frac{1}{\Gamma\!\left(\frac1\alpha+1\right)}
    \int_0 ^x e^{-s^\alpha}\,\mathrm{d}s= \frac{1}{\Gamma\!\left(\frac1\alpha+1\right)}\frac1\alpha
    \int_0^{x^\alpha}e^{-u}u^{1/\alpha-1}\,\mathrm{d}u=\frac{
        \gamma_{\rm inc}\!\left(\frac1\alpha,x^\alpha\right)
    }{
        \Gamma\!\left(\frac1\alpha\right)
    }.  
\]
\end{proof}

\section{Attention Outputs in the Supercritical, Critical, and Subcritical Regimes}\label{app:attention_outputs}

This appendix proves the attention-output limits from \Cref{sec:attention_outputs}.  
The proofs of the supercritical and critical regimes first identify the relevant local point process near the query, then apply the softmax functional to that process.  
The subcritical regime is different and is treated later by local moment estimates and a triangular-array central limit theorem.

\subsection{Convergence of a marked point process}

We recall from \Cref{sec:attention_outputs},
\[
    a_n= (C(q)n)^\alpha,
    \qquad
    \proj_q= \Id-qq^\top,
\]
and, whenever \(\proj_qx_i\neq0\), define the tangent direction
\[
    \Theta_i\coloneqq \frac{\proj_qx_i}{\|\proj_qx_i\|}
    \in \mathbb S_q^{d-2},
    \qquad
    \mathbb S_q^{d-2}\coloneqq\{\theta\in T_q\mathbb S^{d-1}:\|\theta\|=1\}.
\]
On the null event \(\proj_qx_i=0\), choose \(\Theta_i\) arbitrarily.  Let \(\nu_q\) denote the uniform probability measure on \(\mathbb S_q^{d-2}\).

We use the following marked version of the point-process limit.  
It is the only local extreme-value input needed in \Cref{thm:output_supercritical} and \Cref{thm:output_critical}.

\begin{lemma}[Local marked point process]\label{lem:output_marked_ppp}\label{lem:critical_marked_ppp}
Let \(b_n>0\) satisfy \(b_n n^{-\alpha}\to\lambda\in(0,\infty)\), and define
\[
    \mathcal M_n^{(b)}\coloneqq
    \sum_{i=1}^n \delta_{(b_nT_i,\Theta_i)}
    \qquad\text{on }[0,\infty)\times \mathbb S_q^{d-2}.
\]
Then
\[
    \mathcal M_n^{(b)}\Rightarrow \mathcal M_\lambda,
\]
where \(\mathcal M_\lambda\) is a Poisson point process with intensity measure
\[
    \Lambda_\lambda([0,y]\times B)
    =
    C(q)\left(\frac{y}{\lambda}\right)^{1/\alpha}\nu_q(B)
\]
for every \(y\ge0\) and every \(\nu_q\)-continuity set \(B\subset\mathbb S_q^{d-2}\).
\end{lemma}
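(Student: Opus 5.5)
We prove the lemma through the Laplace-functional criterion recalled in Appendix~\ref{app:appendix_preliminaries}, on $\Omega=[0,\infty)\times\mathbb S_q^{d-2}$. The points $(b_nT_i,\Theta_i)$ are i.i.d. For $\varphi\in C_c^+(\Omega)$, independence gives
\[
\mathbb E\Bigl[e^{-\int\varphi\,d\mathcal M_n^{(b)}}\Bigr]=\Bigl(1-\mathbb E\bigl[1-e^{-\varphi(b_nT_1,\Theta_1)}\bigr]\Bigr)^n .
\]
It therefore suffices to show
\[
n\,\mathbb E\bigl[1-e^{-\varphi(b_nT_1,\Theta_1)}\bigr]\to\int_\Omega(1-e^{-\varphi})\,d\Lambda_\lambda .
\]
The right-hand side is the log-Laplace functional of a Poisson process with intensity $\Lambda_\lambda$. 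The convergence $(1-p_n)^n\to e^{-L}$ is elementary once $np_n\to L$ and $p_n\to0$. Since $1-e^{-\varphi}$ is bounded, continuous and supported in some $[0,Y]\times\mathbb S_q^{d-2}$, the whole problem reduces to vague convergence of the mean measures:
\[
\mu_n(\cdot):=n\,\mathbb P\bigl((b_nT_1,\Theta_1)\in\cdot\bigr)\to\Lambda_\lambda
\]
on sets $[0,y]\times B$, together with uniform boundedness of $\mu_n([0,Y]\times\mathbb S_q^{d-2})$.

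To compute $\mu_n$, use geodesic polar coordinates about $q$. Write $x=\cos r\,q+\sin r\,\theta$ with $r\in[0,\pi]$ and $\theta\in\mathbb S_q^{d-2}$. Then $d\sigma=\sin^{d-2}r\,dr\,d\theta$, $T=1-\cos r$, and $\Theta=\theta$ for $r\in(0,\pi)$. Hence
\[
\mathbb P(T_1\le t,\ \Theta_1\in B)=\int_B\int_0^{\arccos(1-t)}\rho(\cos r\,q+\sin r\,\theta)\sin^{d-2}r\,dr\,d\theta .
\]
Continuity of $\rho$ at $q$ makes the integrand equal to $\rho(q)(1+o(1))\,r^{d-2}$, uniformly in $\theta$ as $t\to0$. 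Together with $\arccos(1-t)\sim\sqrt{2t}$, this gives
\[
\mathbb P(T_1\le t,\ \Theta_1\in B)=\rho(q)\,\sigma_{d-2}\,\nu_q(B)\,\frac{(2t)^{(d-1)/2}}{d-1}\,(1+o(1))=C(q)\,t^{1/\alpha}\,\nu_q(B)\,(1+o(1)).
\]
The constant matches $C(q)$ in Lemma~\ref{lem:small_cap_asymptotics}. The estimate is uniform over Borel $B$, since the $o(1)$ term does not depend on $\theta$.

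Substituting $t=y/b_n$ and using $n\,b_n^{-1/\alpha}\to\lambda^{-1/\alpha}$ gives
\[
\mu_n([0,y]\times B)\to C(q)\,(y/\lambda)^{1/\alpha}\,\nu_q(B)=\Lambda_\lambda([0,y]\times B).
\]
Product sets of this form generate the topology and the limit is a product measure. So the convergence extends to $\int g\,d\mu_n\to\int g\,d\Lambda_\lambda$ for every bounded continuous compactly supported $g$. One way to see this is to approximate $g$ uniformly by step functions on rectangles $[y_j,y_{j+1})\times B_k$ with $\nu_q(\partial B_k)=0$. Another is to note that $\mu_n$ restricted to $[0,Y]\times\mathbb S_q^{d-2}$ has density converging uniformly to that of $\Lambda_\lambda$ after rescaling.

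Setting $p_n=\mathbb E[1-e^{-\varphi(b_nT_1,\Theta_1)}]\le\mathbb P(b_nT_1\le Y)=O(1/n)$ then completes the argument.

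The main obstacle is the uniform-in-$\theta$ joint asymptotic for the pair (radius, direction). It requires continuity of $\rho$ at $q$ to hold uniformly on shrinking caps, which is immediate from continuity at a point. It also requires handling the antipode, where $\Theta$ is undefined. That region is far from $T=0$ and lies outside the support of $\varphi(b_n\cdot,\cdot)$ for large $n$, so it does not contribute.
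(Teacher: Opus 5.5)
Your proof is correct and follows essentially the same route as the paper. Both arguments use the i.i.d.\ Laplace functional $\bigl(1-\tfrac1n\int(1-e^{-\varphi})\,\mathrm{d}\mu_n\bigr)^n$ to reduce the claim to vague convergence of the one-particle mean measure, which is then computed in geodesic polar coordinates about $q$ using only continuity of $\rho$ at $q$. The one difference is cosmetic: the paper substitutes $y=b_n(1-\cos r)$ in the integral of a test function and applies dominated convergence directly, while you first obtain the asymptotics on rectangles $[0,y]\times B$ and then extend by step-function approximation, which is equally valid.
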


\begin{proof}
Let
\[
    \mu_n(A)
    \coloneqq
    n\,\mathbb P\bigl((b_nT_1,\Theta_1)\in A\bigr)
\]
be the one-particle mean measure.  We first show \(\mu_n\to\Lambda_\lambda\) vaguely.  Let \(g\in C_c([0,\infty)\times\mathbb S_q^{d-2})\), with support contained in \([0,R]\times\mathbb S_q^{d-2}\).  In geodesic polar coordinates around \(q\),
\[
    x=(\cos r)q+(\sin r)\theta,
    \qquad
    T(x)=1-\cos r,
    \qquad
    \mathrm{d}\sigma(x)=(\sin r)^{d-2}\mathrm{d}r\,\mathrm{d}\sigma_q(\theta),
\]
where \(\mathrm{d}\sigma_q\) is surface measure on \(\mathbb S_q^{d-2}\).  With the change of variables \(y=b_n(1-\cos r)\),
\[
    (\sin r)^{d-2}\mathrm{d}r
    =
    b_n^{-1/\alpha}
    \left(2y-\frac{y^2}{b_n}\right)^{1/\alpha-1}\mathrm{d}y.
\]
Therefore
\[
\begin{aligned}
    \int g\,\mathrm{d}\mu_n
    &=
    n b_n^{-1/\alpha}
    \int_{\mathbb S_q^{d-2}}\int_0^R
    g(y,\theta)
    \rho\left(
        \left(1-\frac y{b_n}\right)q
        +b_n^{-1/2}\left(2y-\frac{y^2}{b_n}\right)^{1/2}\theta
    \right) \\
    &\hspace{4.6cm}\times
    \left(2y-\frac{y^2}{b_n}\right)^{1/\alpha-1}
    \mathrm{d}y\,\mathrm{d}\sigma_q(\theta).
\end{aligned}
\]
Since \(n b_n^{-1/\alpha}\to\lambda^{-1/\alpha}\), \(\rho\) is continuous at \(q\), and \(y^{1/\alpha-1}\) is integrable at zero, dominated convergence gives
\[
    \int g\,\mathrm{d}\mu_n
    \longrightarrow
    \frac{C(q)}{\alpha\lambda^{1/\alpha}}
    \int_{\mathbb S_q^{d-2}}\int_0^R
    g(y,\theta)y^{1/\alpha-1}\mathrm{d}y\,\nu_q(\mathrm{d}\theta).
\]
Here we used
\[
    C(q)=\frac{2^{1/\alpha}\sigma_{d-2}}{d-1}\rho(q),
    \qquad
    \alpha=\frac{2}{d-1},
    \qquad
    \mathrm{d}\sigma_q=\sigma_{d-2}\,\mathrm{d}\nu_q.
\]
Thus \(\mu_n\to\Lambda_\lambda\) vaguely.

Now take \(\varphi\in C_c([0,\infty)\times\mathbb S_q^{d-2})\), \(\varphi\ge0\).  Since the marked pairs \((b_nT_i,\Theta_i)\) are i.i.d.,
\[
\begin{aligned}
    \mathbb E\exp\left\{-\int\varphi\,\mathrm{d}\mathcal M_n^{(b)}\right\}
    &=
    \left(\mathbb E e^{-\varphi(b_nT_1,\Theta_1)}\right)^n \\
    &=
    \left(
        1-\frac1n\int(1-e^{-\varphi})\,\mathrm{d}\mu_n
    \right)^n.
\end{aligned}
\]
By vague convergence of \(\mu_n\), the integral converges to
\(
\int(1-e^{-\varphi})\,\mathrm{d}\Lambda_\lambda
\).
Hence the Laplace functionals converge to those of a Poisson point process with intensity \(\Lambda_\lambda\).
\end{proof}

\subsection{Supercritical output: proof of \Cref{thm:output_supercritical}}

In the supercritical regime, the softmax output is asymptotically the nearest token itself.  The only extra point beyond \Cref{thm:intro super_critical} is that this replacement is valid at the finer scale \(a_n^{-1/2}\)
, where we recall the definition $a_n = (C(q)n)^\alpha$.

Let
\[
    I_n\coloneqq\arg\min_{1\le i\le n}T_i,
    \qquad
    T_n^\star\coloneqq T_{I_n}=T_{(1)},
    \qquad
    x_n^\star\coloneqq x_{I_n},
    \qquad
    \Theta_n^\star\coloneqq \Theta_{I_n}.
\]
The minimizer is unique almost surely because the law of \(T_i\) is continuous.

\begin{lemma}[Nearest marked token]\label{lem:app_nearest_defect}
Under \Cref{a:iid},
\[
    (a_nT_n^\star,\Theta_n^\star)
    \Rightarrow
    (R_1,\Theta_1),
\]
where
\[
    \mathbb P(R_1>r)=e^{-r^{1/\alpha}},\qquad r\ge0,
\]
and \(\Theta_1\sim\nu_q\) is independent of \(R_1\).
\end{lemma}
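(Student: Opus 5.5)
The plan is to read the statement off the marked point process limit in \Cref{lem:output_marked_ppp}, taking $b_n=a_n=(C(q)n)^\alpha$. Then $b_nn^{-\alpha}\to\lambda:=C(q)^\alpha\in(0,\infty)$. The lemma gives
\[
\mathcal M_n^{(a)}=\sum_i\delta_{(a_nT_i,\Theta_i)}\Rightarrow\mathcal M_\lambda,
\]
with intensity
\[
C(q)(y/\lambda)^{1/\alpha}\nu_q(B)=y^{1/\alpha}\nu_q(B).
\]
The pair $(a_nT_n^\star,\Theta_n^\star)$ is exactly the atom of $\mathcal M_n^{(a)}$ with the smallest first coordinate. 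So it suffices to show that the first atom of a converging point process converges in law to the first atom of the limit. For the limit $\mathcal M_\lambda$, the first atom $(R_1,\Theta_1)$ satisfies
\[
\mathbb P(R_1>r)=\mathbb P\bigl(\mathcal M_\lambda([0,r]\times\mathbb S_q^{d-2})=0\bigr)=e^{-r^{1/\alpha}}.
\]
Because the intensity is a product measure, the mark of the first atom is $\nu_q$-distributed and independent of its location. This is the standard marking theorem for Poisson processes with independent i.i.d.\ marks.

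Rather than invoke a general continuous-mapping statement, which needs care because ``first atom'' is only a.s.\ continuous in the vague topology, I would compute directly. Fix $r>0$ and a $\nu_q$-continuity set $B$. By exchangeability and uniqueness of the minimizer,
\[
\mathbb P(a_nT_n^\star\le r,\ \Theta_n^\star\in B)=n\,\mathbb P\bigl(a_nT_1\le r,\ \Theta_1\in B,\ T_j>T_1\ \forall j\ge2\bigr).
\]
Conditioning on $T_1=t$ turns the event $T_j>T_1$ for all $j\ge2$ into the factor $(1-F(t))^{n-1}$. This gives
\[
\int_{\{a_nt\le r\}\times B}(1-F(t))^{n-1}\,n\,\mathrm d P_{(T_1,\Theta_1)},
\]
which equals $\int_{[0,r]\times B}(1-F(y/a_n))^{n-1}\,\mu_n(\mathrm dy,\mathrm d\theta)$ with the one-particle measure $\mu_n$ from the proof of \Cref{lem:output_marked_ppp}.

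Two inputs then finish the computation. First, by \Cref{lem:small_cap_asymptotics}, $F(y/a_n)=C(q)(y/a_n)^{1/\alpha}(1+o(1))=y^{1/\alpha}/n\,(1+o(1))$ uniformly for $y\in[0,r]$, so $(1-F(y/a_n))^{n-1}\to e^{-y^{1/\alpha}}$ uniformly on $[0,r]$. Second, $\mu_n\to\Lambda_\lambda$ on $[0,r]\times B$; this follows from the explicit density computation in the proof of \Cref{lem:output_marked_ppp}, since dominated convergence applies to indicators of continuity sets as well. Combining the two gives the limit
\[
\int_0^r e^{-y^{1/\alpha}}\,\mathrm d(y^{1/\alpha})\cdot\nu_q(B)=(1-e^{-r^{1/\alpha}})\,\nu_q(B),
\]
which is the joint law of $(R_1,\Theta_1)$ with $R_1$ and $\Theta_1$ independent.

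The main obstacle is the uniformity needed to pass the limit inside the integral against a converging measure. I would handle it by noting that the integrand converges uniformly and is bounded by $1$, while the total masses $\mu_n([0,r]\times\mathbb S_q^{d-2})$ stay bounded, converging to $r^{1/\alpha}$. An alternative route is to write the event $\{a_nT_n^\star>r\}$ as the void event $\mathcal M_n^{(a)}([0,r]\times\mathbb S_q^{d-2})=0$. Using this for the radius, together with the same integral for the mark, avoids any continuity issues at the boundary $y=r$, since the limit intensity has no atoms.
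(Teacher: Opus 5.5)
Your proposal is correct. After the shared first step it takes a different route from the paper.

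Both arguments start from \Cref{lem:output_marked_ppp} with $b_n=a_n$, $\lambda=C(q)^\alpha$, and limiting intensity $\Lambda_\lambda([0,y]\times B)=y^{1/\alpha}\nu_q(B)$.

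\textbf{The paper's route.} It asserts that the first-atom map is almost surely continuous at $\mathcal M_\lambda$, since the limit is simple with diffuse $y$-marginal. It then applies the continuous mapping theorem. It reads off the law of $(R_1,\Theta_1)$ from the void probability $\exp\{-\Lambda_\lambda([0,r]\times\mathbb S_q^{d-2})\}$ and the product form of $\Lambda_\lambda$.

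\textbf{Your route.} You bypass the point-process topology. The exact identity $\mathbb P(a_nT_n^\star\le r,\ \Theta_n^\star\in B)=\int_{[0,r]\times B}(1-F(y/a_n))^{n-1}\,\mu_n(\mathrm dy,\mathrm d\theta)$ reduces everything to \Cref{lem:small_cap_asymptotics} and the one-particle measure $\mu_n$. So the Laplace-functional part of the lemma is never used. Independence of the limiting mark comes out of the computation rather than from a marking argument.

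\textbf{Trade-off.} The paper's argument is shorter and extends verbatim to the first $k$ marked atoms, which is the mechanism used at criticality. It does, however, rest on a continuity claim for the first-atom functional that is only sketched. Yours is elementary and self-contained, but it needs slightly more than vague convergence of $\mu_n$.

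\textbf{Points to tighten.}
\begin{enumerate}
\item Your uniformity remark only controls $\int_{[0,r]\times B}(h_n-h)\,\mathrm d\mu_n$, where $h_n(y)=(1-F(y/a_n))^{n-1}$ and $h(y)=e^{-y^{1/\alpha}}$. You also need $\int_{[0,r]\times B}h\,\mathrm d\mu_n\to\int_{[0,r]\times B}h\,\mathrm d\Lambda_\lambda$.
\item That second limit does follow from the polar-coordinate density of $\mu_n$ in the proof of \Cref{lem:output_marked_ppp}. On $[0,r]\times\mathbb S_q^{d-2}$ this density converges pointwise and is dominated by a constant times $y^{1/\alpha-1}$, so it converges in $L^1$. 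Hence $\mu_n\to\Lambda_\lambda$ in total variation there, for every Borel $B$.
\item To conclude weak convergence of the pair from convergence on the sets $[0,r]\times B$, you need the standard convergence-determining-class argument. These sets and their differences $(r_1,r_2]\times B$, with $B$ ranging over $\nu_q$-continuity sets, form a $\pi$-system. Every open set is a countable union of members of this $\pi$-system.
\end{enumerate}
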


\begin{proof}
Apply \Cref{lem:output_marked_ppp} with \(b_n=a_n\).  Since \(a_n n^{-\alpha}=C(q)^\alpha\), the limiting intensity is
\[
    \Lambda_\star(\mathrm{d}y,\mathrm{d}\theta)=\frac1\alpha y^{1/\alpha-1}\mathrm{d}y\,\nu_q(\mathrm{d}\theta).
\]
The first atom map is continuous at the limiting process almost surely, because the process is simple and its \(y\)-marginal is diffuse.  Thus the first marked atom converges.  Its radial tail is
\[
    \mathbb P(R_1>r)=\exp\{-\Lambda_\star([0,r]\times\mathbb S_q^{d-2})\}=e^{-r^{1/\alpha}},
\]
and the product form of \(\Lambda_\star\) gives independence of \(R_1\) and \(\Theta_1\).
\end{proof}

\begin{lemma}[Nearest-token reduction]\label{lem:supercritical_nearest_reduction}
If \(\beta_n n^{-\alpha}\to\infty\), then
\[
    \sqrt{a_n}\left\|\sum_{i=1}^n A_i x_i-x_n^\star\right\|
    \xrightarrow{\mathbb P}0.
\]
\end{lemma}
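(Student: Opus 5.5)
We need to show $\sqrt{a_n}\|\sum_i A_i x_i - x_n^\star\| \to 0$ in probability when $\beta_n n^{-\alpha}\to\infty$.

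Since $\sum_i A_i=1$, we can write
\[
\sum_i A_i x_i - x_n^\star=\sum_{i\neq I_n}A_i(x_i-x_n^\star),
\]
so
\[
\Bigl\|\sum_i A_i x_i - x_n^\star\Bigr\|\le \sum_{i\ne I_n}A_i\|x_i-x_n^\star\|
\le \sum_{i\ne I_n}e^{-\beta_n(T_i-T_{(1)})}\|x_i-x_n^\star\|,
\]
using $A_i\le e^{-\beta_n(T_i-T_{(1)})}$. This weakens the denominator to $e^{-\beta_nT_{(1)}}$. The triangle inequality through $q$ gives $\|x_i-x_n^\star\|\le \sqrt{2T_i}+\sqrt{2T_{(1)}}$, because $\|x-q\|^2=2T$. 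It therefore suffices to show that
\[
\sqrt{a_n}\sum_{i\ne I_n}e^{-\beta_n(T_i-T_{(1)})}\bigl(\sqrt{T_i}+\sqrt{T_{(1)}}\bigr)\to0
\]
in probability.

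\textbf{The $\sqrt{T_{(1)}}$ part.} This term equals $\sqrt{a_nT_{(1)}}\,S_n$. By \Cref{lem:app_nearest_defect}, $a_nT_{(1)}$ is tight. The proof of \Cref{thm:intro super_critical} shows $\mathbb E S_n\to0$, so $S_n\to0$ in probability, and the product vanishes.

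\textbf{The $\sqrt{T_i}$ part.} Write $T_i=T_{(1)}+(T_i-T_{(1)})$ and use $\sqrt{a+b}\le\sqrt a+\sqrt b$. The $\sqrt{T_{(1)}}$ piece is handled exactly as above. What remains is
\[
\sqrt{a_n}\sum_{i\ne I_n}e^{-\beta_n D_i}\sqrt{D_i},\qquad D_i=T_i-T_{(1)}.
\]
Using $\sqrt{D}\,e^{-\beta_n D}\le \beta_n^{-1/2}\,c\,e^{-\beta_n D/2}$, this is at most
\[
c\sqrt{a_n/\beta_n}\;\tilde S_n,\qquad \tilde S_n=\sum_{k\ge2}e^{-(\beta_n/2)(T_{(k)}-T_{(1)})}.
\]
Here $a_n/\beta_n\asymp n^\alpha/\beta_n\to0$. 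Moreover $\beta_n/2$ still satisfies $(\beta_n/2)n^{-\alpha}\to\infty$, so \Cref{lem:S_n asymptotics} gives $\mathbb E\tilde S_n\to0$. Hence this term tends to $0$ in $L^1$, and therefore in probability.

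\textbf{Main obstacle.} The only delicate point is making sure \Cref{lem:S_n asymptotics} applies with $\beta_n$ replaced by $\beta_n/2$, which requires only $\beta_n\to\infty$. It also helps to notice that we do not even need $\tilde S_n\to0$: boundedness of $\mathbb E\tilde S_n$ suffices, since the prefactor $\sqrt{a_n/\beta_n}$ already tends to zero. If one wants to avoid relying on that lemma's constants, one could instead directly bound
\[
\mathbb E\Bigl[\sum_{i\ne I_n}e^{-\beta_nD_i}\sqrt{D_i}\Bigr]
\]
via the same order-statistics integral used in its proof. The extra factor $s^{1/2}$ contributes an additional $\beta_n^{-1/2}$, giving a bound of order $n\beta_n^{-1/\alpha-1/2}+n^{\alpha}\beta_n^{-3/2}$. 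Multiplied by $\sqrt{a_n}\asymp n^{\alpha/2}$, this tends to $0$ under $\beta_n\gg n^\alpha$.
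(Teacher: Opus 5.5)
Your proof is correct, with one caveat about a citation at $d=2$ (below). After the shared first step, though, it closes the argument differently from the paper.

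\textbf{What both proofs share.} Both start from $\sum_{i\ne I_n}e^{-\beta_n(T_i-T_{(1)})}\sqrt{a_n}\|x_i-x_n^\star\|$ together with $\|x_i-x_n^\star\|\le\sqrt{2T_i}+\sqrt{2T_{(1)}}$.

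\textbf{The paper's route.} It restricts to $E_{n,M}=\{a_nT_{(1)}\le M\}$ and conditions on $T_{(1)}=t$, so the other D2Qs have law $\mathrm{d}F(s)/(1-F(t))$ on $[t,2]$. It then computes the first moment directly in the rescaled variables $t=y/a_n$, $s=t+v/a_n$. This reduces everything to $\sup_{y\le M}\int_0^\infty e^{-\lambda_n v}(\sqrt{y+v}+\sqrt y)(y+v)^{1/\alpha-1}\,\mathrm{d}v\to0$ with $\lambda_n=\beta_n/a_n$. Markov's inequality and $M\to\infty$ finish.

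\textbf{Your route.} You instead recycle the weight-level quantity $S_n$. The $\sqrt{T_{(1)}}$ pieces become $\sqrt{a_nT_{(1)}}\,S_n$, which is tight times $o_{\mathbb P}(1)$; this does the job of the truncation in $M$. The $\sqrt{D_i}$ piece is absorbed through $\sqrt u\,e^{-u}\le e^{-1/2}e^{-u/2}$ into $\sqrt{a_n/\beta_n}\,\tilde S_n$.

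\textbf{What each buys.} Your argument is shorter, and it makes visible that the output-level reduction is a corollary of single-key collapse plus tightness of the nearest D2Q. The paper's argument does not pass through \Cref{lem:S_n asymptotics}. Beyond tightness of $a_nT_{(1)}$, it uses only the density bound $F'(s)\le Cs^{1/\alpha-1}$, and it covers $d=2$ explicitly.

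\textbf{The $d=2$ caveat.} For $d=2$ one has $\alpha=2$, so the term $\mathbf B(2-\alpha,n-1)$ in \Cref{lem:S_n asymptotics} is infinite. That lemma then bounds neither $\mathbb E S_n$ nor $\mathbb E\tilde S_n$. You only use $S_n\to0$ and $\tilde S_n=O_{\mathbb P}(1)$. It is therefore cleaner to take both from $A_{(1)}=1/(1+S_n)\to1$, i.e.\ from the statement of \Cref{thm:intro super_critical} at inverse temperatures $\beta_n$ and $\beta_n/2$. That statement does hold for $d=2$. To see it, bound $(s+t)^{\eta}\le\min(s^{\eta},t^{\eta})$ instead of $s^{\eta}+t^{\eta}$ in the proof of \Cref{lem:S_n asymptotics}. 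This gives a main term of order $n^2\beta_n^{-1}\log(\beta_n n^{-2})$, which still tends to zero.
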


\begin{proof}
Set \(\lambda_n\coloneqq\beta_n/a_n\), so \(\lambda_n\to\infty\).  Since the denominator of the softmax is at least \(e^{-\beta_nT_n^\star}\),
\[
    \sqrt{a_n}\left\|\sum_{i=1}^n A_i x_i-x_n^\star\right\|
    \le
    D_n
    \coloneqq
    \sum_{i\ne I_n} e^{-\beta_n(T_i-T_n^\star)}
    \sqrt{a_n}\|x_i-x_n^\star\|.
\]
Fix \(M>0\) and write \(E_{n,M}=\{a_nT_n^\star\le M\}\).  By \Cref{lem:app_nearest_defect},
\[
    \lim_{n\to\infty}\mathbb P(E_{n,M}^c)=e^{-M^{1/\alpha}}.
\]
We show that \(\mathbb E[D_n\mathbf 1_{E_{n,M}}]\to0\).  Conditionally on \(T_n^\star=t\), the remaining D2Qs have law \(\mathrm{d}F(s)/(1-F(t))\) on \([t,2]\).  Also
\[
    \|x_i-x_n^\star\|
    \le \|x_i-q\|+\|x_n^\star-q\|
    =\sqrt{2T_i}+\sqrt{2t}.
\]
For \(t\le M/a_n\), and for all large \(n\), \(1-F(t)\ge1/2\).  Using the local density bound \(F'(s)\le Cs^{1/\alpha-1}\) in \Cref{lem:small_cap_asymptotics}, $\mathbb E[D_n\mathbf 1_{E_{n,M}}]$ is bounded by
\[
\begin{aligned}
    C n\sqrt{a_n}
    \int_t ^{2} e^{-\beta_n(s-t)}(\sqrt s+\sqrt t)s^{1/\alpha-1}\,\mathrm{d}s.
\end{aligned}
\]
Put \(t=y/a_n\), \(s=t+v/a_n\).  Since \(0\le y\le M\) and \(n a_n^{-1/\alpha}=C(q)^{-1}\), the last display is at most a constant times
\[
    \sup_{0\le y\le M}
    \int_0^\infty
    e^{-\lambda_n v}
    \bigl(\sqrt{y+v}+\sqrt y\bigr)(y+v)^{1/\alpha-1}\,\mathrm{d}v.
\]
This supremum tends to zero as \(\lambda_n\to\infty\): for \(d=2\) the integrand is bounded by a constant, and for \(d\ge3\) it is bounded on \(0\le y\le M\) by \(C_M(1+v^{1/\alpha-1/2})\).  Hence \(\mathbb E[D_n\mathbf 1_{E_{n,M}}]\to0\).

By Markov's inequality,
\[
    \limsup_{n\to\infty}\mathbb P(D_n>\varepsilon)
    \le e^{-M^{1/\alpha}},
\]
for every \(\varepsilon>0\).  Letting \(M\to\infty\) proves \(D_n\to0\) in probability.
\end{proof}

\begin{proof}[Proof of \Cref{thm:output_supercritical}]
Write
\[
    x_n^\star=(\cos r_n^\star)q+(\sin r_n^\star)\Theta_n^\star,
    \qquad
    T_n^\star=1-\cos r_n^\star.
\]
Then
\[
    x_n^\star-q
    =
    -T_n^\star q+
    \sqrt{2T_n^\star-(T_n^\star)^2}\,\Theta_n^\star.
\]
By \Cref{lem:app_nearest_defect}, \(a_nT_n^\star\) is tight, and therefore
\[
    \sqrt{a_n}(x_n^\star-q)
    =
    \sqrt{2a_nT_n^\star}\,\Theta_n^\star+o_{\mathbb P}(1)
    \Rightarrow
    \sqrt{2R_1}\,\Theta_1.
\]
By \Cref{lem:supercritical_nearest_reduction},
\[
    \sqrt{a_n}\sum_{i=1}^n A_i(x_i-q)
    =
    \sqrt{a_n}(x_n^\star-q)+o_{\mathbb P}(1).
\]
Applying the fixed linear map \(V\) gives
\[
    \sqrt{a_n}\,\mathcal Y_n(q)
    \Rightarrow
    V(\sqrt{2R_1}\Theta_1)=V\Phi.
\]
Since \(a_n\to\infty\), this also implies \(\mathcal Y_n(q)\to0\) in probability and hence \(\att(n)\to Vq\) in probability.
\end{proof}

\subsection{Critical output: proof of \Cref{thm:output_critical}}

In the critical regime, the output is no longer determined by a single nearest token.  Instead, the finitely many near-query competitors surviving in the Poisson limit all contribute through the same softmax functional as in Appendix~\ref{app:proof critical}.

Assume \(\beta_n n^{-\alpha}\to\gamma\in(0,\infty)\).  Let
\[
    \mathcal M_n\coloneqq\sum_{i=1}^n\delta_{(\beta_nT_i,\Theta_i)}.
\]
By \Cref{lem:output_marked_ppp} with \(b_n=\beta_n\),
\[
    \mathcal M_n\Rightarrow\mathcal M_\gamma,
    \qquad
    \Lambda_\gamma(\mathrm{d}y,\mathrm{d}\theta)
    =
    \frac{C(q)}{\alpha\gamma^{1/\alpha}}y^{1/\alpha-1}\,\mathrm{d}y\,\nu_q(\mathrm{d}\theta).
\]
Define
\[
    H_n\coloneqq \int_{0} ^{\infty} \int_{\mathbb S_q^{d-2}} e^{-y}\sqrt{2y}\,\theta\,\mathcal M_n(\mathrm{d}y,\mathrm{d}\theta),
    \qquad
    Z_n\coloneqq\int_{0} ^{\infty} \int_{\mathbb S_q^{d-2}} e^{-y}\,\mathcal M_n(\mathrm{d}y,\mathrm{d}\theta),
\]
and define \(H\) and \(Z\) by the same formulas with \(\mathcal M_\gamma\) in place of \(\mathcal M_n\).

\begin{lemma}[Critical softmax functional]\label{lem:critical_output_softmax_functional}
We have
\[
    (H_n,Z_n)\Rightarrow(H,Z),
    \qquad
    0<Z<\infty\quad\text{almost surely}.
\]
Consequently,
\[
    \frac{H_n}{Z_n}\Rightarrow \frac{H}{Z}.
\]
\end{lemma}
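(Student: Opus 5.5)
We prove the joint convergence $(H_n,Z_n)\Rightarrow(H,Z)$ by truncation and the continuous mapping theorem applied to \Cref{lem:output_marked_ppp}; the ratio then follows from a second application of continuous mapping. Fix $R>0$ and let $\chi_R\in C_c([0,\infty))$ with $\bone_{[0,R]}\le\chi_R\le\bone_{[0,R+1]}$. Set
\[
H_n^R\coloneqq\int e^{-y}\sqrt{2y}\,\theta\,\chi_R(y)\,\mathcal M_n(\mathrm{d}y,\mathrm{d}\theta),
\qquad
Z_n^R\coloneqq\int e^{-y}\chi_R(y)\,\mathcal M_n(\mathrm{d}y,\mathrm{d}\theta),
\]
and define $H^R,Z^R$ in the same way with $\mathcal M_\gamma$. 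Each coordinate of these is $\int\varphi\,\mathrm{d}\mathcal M$ for a continuous, compactly supported (not necessarily nonnegative) function $\varphi$ on $[0,\infty)\times\mathbb S_q^{d-2}$. The map $\mu\mapsto\int\varphi\,\mathrm{d}\mu$ is vaguely continuous on $M_p$. So \Cref{lem:output_marked_ppp} with $b_n=\beta_n$ (and Cramér--Wold) gives $(H_n^R,Z_n^R)\Rightarrow(H^R,Z^R)$ for each fixed $R$.

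Next we control the tails uniformly, in the sense of Billingsley's approximation theorem (Theorem 3.2 of \emph{Convergence of Probability Measures}). It suffices to show
\[
\lim_{R\to\infty}\limsup_{n\to\infty}\mathbb E\Bigl[\|H_n-H_n^R\|+|Z_n-Z_n^R|\Bigr]=0,
\]
together with $H^R\to H$ and $Z^R\to Z$ almost surely. By the first-moment formula, the expectation is at most
\[
n\,\mathbb E\Bigl[e^{-\beta_nT_1}\bigl(1+\sqrt{2\beta_nT_1}\bigr)\bone_{\beta_nT_1>R}\Bigr].
\]
Using the density bound $f_T(t)\le c_+t^{\eta}$ from \Cref{lem:small_cap_asymptotics} and substituting $y=\beta_nt$, this is bounded by
\[
c_+\,n\beta_n^{-1/\alpha}\int_R^\infty e^{-y}\bigl(1+\sqrt{2y}\bigr)y^{1/\alpha-1}\,\mathrm{d}y .
\]
Since $n\beta_n^{-1/\alpha}\to\gamma^{-1/\alpha}$, the $\limsup$ is a convergent-integral tail that vanishes as $R\to\infty$. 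The bound on $[0,2]$ is global, so no separate treatment of the far region is needed. The same computation with the intensity $\Lambda_\gamma$ (Campbell's formula) gives $\mathbb E\int e^{-y}(1+\sqrt{2y})\,\mathcal M_\gamma<\infty$. Hence $H$ and $Z$ are almost surely finite, and $H^R\to H$, $Z^R\to Z$ almost surely by dominated convergence.

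Positivity of $Z$ is immediate. $\Lambda_\gamma([0,\infty)\times\mathbb S_q^{d-2})=\infty$, so $\mathcal M_\gamma$ almost surely has at least one atom, which contributes $e^{-y}>0$. Hence $0<Z<\infty$ almost surely. The map $(h,z)\mapsto h/z$ is continuous on $\{z>0\}$, a set of full limiting measure, so the continuous mapping theorem gives $H_n/Z_n\Rightarrow H/Z$.

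The only delicate step is the uniform tail estimate. The observable $\sqrt{2y}\,\theta\,e^{-y}$ is unbounded in support, so vague convergence alone does not suffice. The explicit first-moment bound from the density estimate together with the critical scaling $n\beta_n^{-1/\alpha}=O(1)$ closes that gap.
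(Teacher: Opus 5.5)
Your proof is correct and follows the same route as the paper. You truncate, apply continuous mapping to the marked Poisson limit of \Cref{lem:output_marked_ppp}, remove the truncation with a uniform first-moment tail bound (Campbell's formula on the limit side), get $Z>0$ from the a.s.\ existence of an atom, and conclude by continuous mapping for the ratio. Your smooth cutoff $\chi_R$ and direct integral estimate are only cosmetic variants of the paper's sharp cutoff and layer-sum bound.

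One small caution concerns the density bound. \Cref{lem:small_cap_asymptotics} states $f_T(t)\le c_+t^{\eta}$ on all of $[0,2]$, but this is only reliable near $t=0$: for $d=2$ and $\rho(-q)>0$ one has $f_T(t)\asymp(2-t)^{-1/2}$ as $t\to2$. It is therefore safer to split off $[t_0,2]$ as the paper does; that piece contributes at most $n(1+2\sqrt{\beta_n})e^{-\beta_n t_0}\to0$.
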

\begin{proof}
For fixed \(R<\infty\), the truncated maps
\[
    \mu\mapsto\int_{0} ^R \int_{\mathbb S_q^{d-2}} e^{-y}\sqrt{2y}\,\theta\,\mu(\mathrm{d}y,\mathrm{d}\theta),
    \qquad
    \mu\mapsto\int_{0} ^R \int_{\mathbb S_q^{d-2}} e^{-y}\,\mu(\mathrm{d}y,\mathrm{d}\theta)
\]
are continuous at every point measure with no atom on \(\{R\}\times\mathbb S_q^{d-2}\).  Since the limiting intensity has diffuse \(y\)-marginal, the continuous mapping theorem gives convergence of the truncated pair.

It remains to remove the truncation.  The required tail bound is
\begin{equation}\label{eq:critical_marked_tail_bound}
    \lim_{R\to\infty}\limsup_{n\to\infty}
    \mathbb E\int_{R} ^{\infty} \int_{\mathbb S_q^{d-2}}e^{-y}(1+\sqrt y)\,\mathcal M_n(\mathrm{d}y,\mathrm{d}\theta)=0,
\end{equation}
and the same bound holds with \(\mathcal M_\gamma\) in place of \(\mathcal M_n\).  Indeed,
\[
\begin{aligned}
    \mathbb E\int_{R} ^{\infty} \int_{\mathbb S_q^{d-2}}e^{-y}(1+\sqrt y)\,\mathcal M_n(\mathrm{d}y,\mathrm{d}\theta)
    &=
    n\mathbb E\bigl[e^{-\beta_nT_1}(1+\sqrt{\beta_nT_1})
        \mathbf 1_{\{\beta_nT_1>R\}}\bigr] \\
    &\le
    \sum_{\ell=\lfloor R\rfloor}^\infty
    e^{-\ell}(1+\sqrt{\ell+1})
    n\mathbb P(\beta_nT_1\le \ell+1).
\end{aligned}
\]
For \(\ell+1\le t_0\beta_n\), the small-cap bound gives
\[
    n\mathbb P(\beta_nT_1\le\ell+1)
    \le C\gamma^{-1/\alpha}(\ell+1)^{1/\alpha},
\]
while the remaining range \(\ell +1>t_0\beta_n\) is exponentially small.  Hence the limsup is bounded by the tail of a convergent exponential series.  The limiting version follows directly from Campbell's theorem:
\[
    \mathbb E\int_R ^{\infty} \int_{\mathbb S_q^{d-2}}e^{-y}(1+\sqrt y)\,\mathcal M_\gamma(\mathrm{d}y,\mathrm{d}\theta)
    =
    \frac{C(q)}{\alpha\gamma^{1/\alpha}}
    \int_R^\infty e^{-y}(1+\sqrt y)y^{1/\alpha-1}\,\mathrm{d}y\to0.
\]
The same \(\varepsilon/3\) truncation argument used in Appendix~\ref{app:proof critical} now yields \((H_n,Z_n)\Rightarrow(H,Z)\).

Finally, \(Z<\infty\) almost surely by Campbell's theorem.  Also, for every \(R>0\),
\[
    \mathbb P(Z=0)
    \le
    \mathbb P\bigl(\mathcal M_\gamma([0,R]\times\mathbb S_q^{d-2})=0\bigr)
    =
    \exp\left\{-C(q)\left(\frac R\gamma\right)^{1/\alpha}\right\},
\]
which tends to zero as \(R\to\infty\).  Thus \(Z>0\) almost surely, and the ratio convergence follows from the continuous mapping theorem.
\end{proof}

\begin{proof}[Proof of \Cref{thm:output_critical}]
The ideal tangent numerator is \(H_n\), but the true centered numerator is
\[
    \widehat H_n
    \coloneqq
    \sum_{i=1}^n e^{-\beta_nT_i}\sqrt{\beta_n}(x_i-q),
    \qquad
    \sqrt{\beta_n}\sum_{i=1}^n A_i(x_i-q)
    =
    \frac{\widehat H_n}{Z_n}.
\]
We first show that
\begin{equation}\label{eq:critical_true_ideal_numerator}
    \widehat H_n-H_n\xrightarrow{\mathbb P}0.
\end{equation}
For an atom with \(y_i=\beta_nT_i\), write
\[
    x_i=
    \left(1-\frac{y_i}{\beta_n}\right)q
    +
    \left(\frac{2y_i}{\beta_n}-\frac{y_i^2}{\beta_n^2}\right)^{1/2}\Theta_i.
\]
Therefore, uniformly over \(0\le y_i\le R\),
\[
    \left\|
        \sqrt{\beta_n}(x_i-q)-\sqrt{2y_i}\,\Theta_i
    \right\|	=o_{\mathbb P}(1).
\]
The contribution from \(\{y_i\le R\}\) is thus \(o_{\mathbb P}(1)\), because \(\mathcal M_n([0,R]\times\mathbb S_q^{d-2})\) is tight.  On the tail \(\{y_i>R\}\),
\[
\begin{aligned}
    \left\|
        \sum_{y_i>R}e^{-y_i}
        \bigl(\sqrt{\beta_n}(x_i-q)-\sqrt{2y_i}\Theta_i\bigr)
    \right\|
    \le
    2\sqrt2\int_R ^{\infty} \int_{\mathbb S_q^{d-2}}  e^{-y}\sqrt y\,\mathcal M_n(\mathrm{d}y,\mathrm{d}\theta),
\end{aligned}
\]
and the expectation of the right-hand side vanishes after \(n\to\infty\) and then \(R\to\infty\) by \eqref{eq:critical_marked_tail_bound}.  This proves \eqref{eq:critical_true_ideal_numerator}.

By \Cref{lem:critical_output_softmax_functional},
\[
    \sqrt{\beta_n}\sum_{i=1}^n A_i(x_i-q)
    =
    \frac{\widehat H_n}{Z_n}
    \Rightarrow
    \frac{H}{Z}.
\]
Applying \(V\) gives
\[
    \sqrt{\beta_n}\,\mathcal Y_n(q)
    \Rightarrow
    V\frac{\displaystyle\int e^{-y}\sqrt{2y}\,\theta\,\mathcal M_\gamma(\mathrm{d}y,\mathrm{d}\theta)}
          {\displaystyle\int e^{-y}\,\mathcal M_\gamma(\mathrm{d}y,\mathrm{d}\theta)}
    =V\Xi.
\]
Since \(\beta_n\to\infty\), this also implies \(\mathcal Y_n(q)\to0\) in probability and hence \(\att(n)\to Vq\) in probability.
\end{proof}

\subsection{Local moment estimates for the attention output}

The attention output theorem is different from the ordered-weight theorem, since it requires the vector numerator, not just the scalar denominator. Write
\[
    B_n(q)\coloneqq \sum_{i=1}^n (x_i-q)e^{-\beta_nT_i},
    \qquad
    Z_n=\sum_{i=1}^n e^{-\beta_nT_i}.
\]
Then, when \(V=\Id\)
\[
    \mathcal Y_n(q)=\frac{B_n(q)}{Z_n}
\]
The following local estimates identify the deterministic bias and covariance of \(B_n(q)\). 
In this subsection, 
\(
    \proj_q\coloneqq \Id-qq^\top
\)
denotes the orthogonal projection onto \(T_q\S\), and \(\nabla_{\S}\) denotes the spherical gradient. We also recall the two deterministic quantities used in \Cref{thm:intro sub_critical 3}:
\[
    \mathcal Y(q)
    =\nabla_{\S}\log\rho(q)-\frac{d-1}{2}q,
    \qquad
    \Sigma(q)=\frac{c_d}{\rho(q)}\proj_q .
\]
The proof below uses the additional \(C^2\)-regularity of \(\rho\) near \(q\) assumed in \Cref{thm:intro sub_critical 3}.

\begin{lemma}[Local output moments]\label{lem:subcritical_output_moments}
As \(\beta\to\infty\),
\begin{align}
    \mathbb E[(x_1-q)e^{-\beta T_1}]
    &=
    C(q)\Gamma\!\left(\frac1\alpha+1\right)
    \beta^{-1/\alpha-1} \mathcal Y(q)(1+o(1)),
    \label{eq:subcritical_moment1}\\
    \mathbb E[(x_1-q)(x_1-q)^\top e^{-2\beta T_1}]
    &=
    C(q)\Gamma\!\left(\frac1\alpha+1\right)
    (2\beta)^{-1/\alpha-1}\,\proj_q(1+o(1)),
    \label{eq:subcritical_moment2}\\
    \mathbb E[\|x_1-q\|^3e^{-3\beta T_1}]
    &=
    O\!\left(\beta^{-1/\alpha-3/2}\right).
    \label{eq:subcritical_moment3}
\end{align}
\end{lemma}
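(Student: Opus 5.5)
The plan is to compute all three moments exactly in geodesic polar coordinates around $q$, as in the proof of \Cref{lem:output_marked_ppp}, and then apply Laplace's method. Fix a small $r_0>0$ and split the expectation into the cap $\{r\le r_0\}$ and its complement. On the complement $T\ge 1-\cos r_0>0$, so that part contributes $O(e^{-c\beta})$. This is negligible against every polynomial rate claimed.

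On the cap, write $x=(\cos r)q+(\sin r)\theta$ and $\mathrm d\sigma=(\sin r)^{d-2}\mathrm dr\,\mathrm d\sigma_q(\theta)$. Then
\[
x-q=-Tq+(\sin r)\theta,\qquad T=1-\cos r .
\]
Substitute $y=\beta T$, which gives $(\sin r)^{d-2}\mathrm dr=\beta^{-1/\alpha}(2y-y^2/\beta)^{1/\alpha-1}\mathrm dy$ and $\sin r=\beta^{-1/2}(2y-y^2/\beta)^{1/2}$. Next, use the $C^2$ assumption to expand the density, with $\nabla\rho(q)$ understood as the tangential (spherical) gradient:
\[
\rho(x)=\rho(q)+(\sin r)\langle\nabla\rho(q),\theta\rangle+O(r^2).
\]
All remainders are controlled uniformly by dominated convergence against $e^{-y}y^{k}$ weights, since $y\le\beta(1-\cos r_0)$ on the cap.

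For \eqref{eq:subcritical_moment1} I will treat the normal and tangential parts separately. For the normal part, only $\rho(q)$ matters at leading order. The integrand is $-Tq\,e^{-\beta T}$, and integrating gives
\[
-q\,\beta^{-1/\alpha-1}\rho(q)\sigma_{d-2}\,2^{1/\alpha-1}\Gamma(1/\alpha+1)(1+o(1)).
\]
Using $C(q)=2^{1/\alpha}\sigma_{d-2}\rho(q)/(d-1)$, this equals $-\tfrac{d-1}{2}\,C(q)\Gamma(1/\alpha+1)\beta^{-1/\alpha-1}q$.

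For the tangential part, the constant term $\rho(q)(\sin r)\theta$ integrates to zero by symmetry of $\nu_q$. The gradient term gives $(\sin r)^2\langle\nabla\rho(q),\theta\rangle\theta$, and I will use $\int\theta\theta^\top\mathrm d\nu_q=\proj_q/(d-1)$ together with $\sin^2 r\approx 2y/\beta$. The result is
\[
\beta^{-1/\alpha-1}\sigma_{d-2}\,2^{1/\alpha}\Gamma(1/\alpha+1)\,\nabla\rho(q)/(d-1)=C(q)\Gamma(1/\alpha+1)\beta^{-1/\alpha-1}\,\nabla_{\S}\log\rho(q).
\]
The $O(r^2)$ remainder of $\rho$, multiplied by $\sin r$, contributes an extra factor $\beta^{-3/2}$ relative to $\beta^{-1/\alpha}$. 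That is $o(\beta^{-1/\alpha-1})$. Adding the two parts yields $\mathcal Y(q)$.

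For \eqref{eq:subcritical_moment2}, replace $\beta$ by $2\beta$ in the substitution. The leading term is $\rho(q)\sin^2 r\,\theta\theta^\top$. The cross terms with $Tq$ are $O(\beta^{-1/2})$ relative to it, and the gradient correction is also lower order. Integrating gives $\rho(q)\sigma_{d-2}(2\beta)^{-1/\alpha-1}2^{1/\alpha}\Gamma(1/\alpha+1)\proj_q/(d-1)$, which is the claimed expression.

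For \eqref{eq:subcritical_moment3}, I only need an upper bound. Use $\|x-q\|=\sqrt{2T}$, the density bound $f_T(t)\le c_+t^{\eta}$ from \Cref{lem:small_cap_asymptotics}, and the scaling $\int_0^\infty t^{3/2}t^{1/\alpha-1}e^{-3\beta t}\,\mathrm dt\asymp\beta^{-1/\alpha-3/2}$.

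The main obstacle is the tangential part of \eqref{eq:subcritical_moment1}. Its order-$\beta^{-1/\alpha-1/2}$ leading contribution cancels by symmetry, so the answer comes from the first-order term in the expansion of $\rho$. This requires the $C^2$ remainder to be controlled uniformly on the cap, so that it is genuinely $o(\beta^{-1/\alpha-1})$. Care is also needed to identify the projection of the ambient gradient with $\nabla_{\S}\rho(q)$.
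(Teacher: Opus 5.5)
Your proposal is correct and follows essentially the paper's argument. You localize to a cap around $q$ (the complement is exponentially small), expand $\rho$ to first order, let the constant-density tangential term vanish by symmetry, and read off the normal drift $-\tfrac{d-1}{2}q$ from the $-Tq$ part of $x-q$ and the tangential drift from the gradient term paired with the linear displacement, with the second moment coming from the tangent--tangent block. The differences are bookkeeping only: you use geodesic polar coordinates, where $T$, $x-q=-Tq+(\sin r)\theta$ and the volume element are exact so the constant matches $C(q)$ directly through $\sigma_{d-2}$, whereas the paper uses the graph chart $\Psi(v)=(v,\sqrt{1-\|v\|^2})$ with Gaussian integrals and the identity $C(q)\Gamma(1/\alpha+1)=\rho(q)(2\pi)^{1/\alpha}$; and your third-moment bound via $\|x-q\|=\sqrt{2T}$ and the bound on $f_T$ is a slightly shorter route to the same $O(\beta^{-1/\alpha-3/2})$ estimate.
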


\begin{proof}
Rotate coordinates so that \(q=e_d\), and put \(m=d-1=2/\alpha\). We parametrize a neighborhood of \(q\) by tangent coordinates \(v\in\mathbb R^m\):
\[
    \Psi(v)=\bigl(v,\sqrt{1-\|v\|^2}\bigr),
    \qquad \|v\|<\delta .
\]
In this chart,
\[
    T(\Psi(v))
    =
    1-\langle q,\Psi(v)\rangle
    =
    1-\sqrt{1-\|v\|^2}
    =
    \frac{\|v\|^2}{2}+O(\|v\|^4),
\]
\[
    \mathrm{d}\sigma(\Psi(v))
    =
    (1-\|v\|^2)^{-1/2}\,dv
    =
    (1+O(\|v\|^2))\,dv,
\]
and
\[
    \Psi(v)-q
    =
    \bigl(v,-\|v\|^2/2\bigr)+O(\|v\|^4),
\]
where \(\mathrm{d}\sigma\) denotes surface measure on \(\S\),
Moreover, since \(\rho\) is \(C^2\) near \(q\),
\[
    \rho(\Psi(v))
    =
    \rho(q)+\langle \nabla_{\S}\rho(q),v\rangle+O(\|v\|^2),
\]
where \(T_q\S\) is identified with \(\mathbb R^m\times\{0\}\). The contribution from the complement of this coordinate patch is exponentially small because \(T\) is bounded below there by a positive constant.

For the first moment \eqref{eq:subcritical_moment1}, the tangent component comes from multiplying the linear term \(v\) in \(\Psi(v)-q\) by the linear term \(\langle\nabla_{\S}\rho(q),v\rangle\) in the density expansion:
\[
    \int_{\mathbb R^m}v\,\langle\nabla_{\S}\rho(q),v\rangle \, e^{-\beta\|v\|^2/2}\,\mathrm{d}v
    =
    (2\pi)^{m/2}\beta^{-m/2-1}\nabla_{\S}\rho(q).
\]
The normal component comes from \(-\|v\|^2q/2\), giving
\[
    -\frac{\rho(q)}{2}q
    \int_{\mathbb R^m}\|v\|^2 e^{-\beta\|v\|^2/2}\,\mathrm{d}v
    =
    -\frac{m}{2}\rho(q)(2\pi)^{m/2}\beta^{-m/2-1}q.
\]
All other terms are lower order: odd Gaussian integrals vanish, and the remaining nonzero terms contain at least three tangent powers or the factor \(\beta\|v\|^4\) from the exponential expansion. Thus
\[
    \mathbb E[(x_1-q)e^{-\beta T_1}]
    =
    \rho(q)(2\pi)^{m/2}
    \left(\nabla_{\S}\log\rho(q)-\frac{m}{2}q\right)
    \beta^{-m/2-1}(1+o(1)).
\]
Since 
\[
    C(q)\Gamma\!\left(\frac1\alpha+1\right)=\rho(q)(2\pi)^{1/\alpha},
\]
this proves \eqref{eq:subcritical_moment1}.

For the second moment \eqref{eq:subcritical_moment2} with \(e^{-2\beta T_1}\), the leading contribution is tangent--tangent:
\[
    \rho(q)\int_{\mathbb R^m}vv^\top e^{-\beta\|v\|^2}\,\mathrm{d}v
    =
    \rho(q)(2\pi)^{m/2}(2\beta)^{-m/2-1}\Id_m.
\]
Embedding the tangent covariance into \(\mathbb R^d\) gives \(\proj_q\). 
The normal--normal part is order \(\beta^{-m/2-2}\), and the tangent--normal mixed part is lower order by parity or by degree. This proves \eqref{eq:subcritical_moment2}.

Finally, for the third moment \eqref{eq:subcritical_moment3}, in the same chart, \(\|x-q\|\asymp\|v\|\) and \(T(\Psi(v))\asymp\|v\|^2\). Hence
\[
    \|x-q\|^3e^{-3\beta T(x)}
    \le
    C\|v\|^3e^{-c\beta\|v\|^2}
\]
locally, and so the local contribution is bounded by
\[
    C\int_{\mathbb R^m}\|v\|^3e^{-c\beta\|v\|^2}\,\mathrm{d}v
    =
    O(\beta^{-(m+3)/2})
    =
    O(\beta^{-1/\alpha-3/2}).
\]
The contribution away from the chart is exponentially small, proving \eqref{eq:subcritical_moment3}.
\end{proof}

\subsection{Subcritical output: proof of \Cref{thm:intro sub_critical 3}}

We now prove the output theorem. The proof has two reductions. The first replaces the random denominator \(Z_n\) by its expectation using \Cref{cor:subcritical_partition}. The second applies a triangular-array central limit theorem to the numerator \(B_n(q)\).

\begin{lemma}[Denominator reduction]\label{lem:subcritical_ratio_reduction}
Assume \(V=\Id\). If \(r_n>0\) and
\[
    r_n\frac{B_n(q)}{\mathbb E[Z_n]}
    \Rightarrow Z,
\]
then
\[
    r_n\mathcal Y_n(q)\Rightarrow Z.
\]
The same implication holds with convergence in probability in place of convergence in distribution.
\end{lemma}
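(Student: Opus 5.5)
The plan is to factor $\mathcal Y_n(q)$ as the ratio of a normalized numerator and a normalized partition function, and then apply Slutsky's theorem. With $V=\Id$ we have $\mathcal Y_n(q)=B_n(q)/Z_n$, so
\[
    r_n\mathcal Y_n(q)
    =
    r_n\frac{B_n(q)}{\mathbb E[Z_n]}\cdot\frac{\mathbb E[Z_n]}{Z_n}.
\]
The statement sits in the subcritical output setting, where $\beta_n\to\infty$ and $\beta_n n^{-\alpha}\to0$. Under these hypotheses, \Cref{cor:subcritical_partition} gives $Z_n/\mathbb E[Z_n]\to1$ in probability.

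The first step is to note that $\mathbb E[Z_n]>0$ deterministically. Also $Z_n>0$ almost surely, since every summand $e^{-\beta_nT_i}$ is positive. Hence the ratio $\mathbb E[Z_n]/Z_n$ is well defined. By the continuous mapping theorem applied to $u\mapsto1/u$, which is continuous at $1$, we get $\mathbb E[Z_n]/Z_n\to1$ in probability.

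The second step is Slutsky's theorem applied to the displayed product. The first factor converges in distribution to $Z$ by hypothesis, and the scalar second factor converges in probability to the constant $1$. Therefore the product converges in distribution to $1\cdot Z=Z$.

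For the convergence-in-probability version, let $Z$ be the limit in probability. Write
\[
    r_n\mathcal Y_n(q)-Z
    =
    \Bigl(r_n\frac{B_n(q)}{\mathbb E[Z_n]}-Z\Bigr)\frac{\mathbb E[Z_n]}{Z_n}
    +Z\Bigl(\frac{\mathbb E[Z_n]}{Z_n}-1\Bigr).
\]
In the first term, the bracket is $o_{\mathbb P}(1)$ and the ratio $\mathbb E[Z_n]/Z_n$ is $O_{\mathbb P}(1)$, so their product is $o_{\mathbb P}(1)$. In the second term, $Z$ is a fixed random vector and hence tight, and the bracket is $o_{\mathbb P}(1)$, so this term is also $o_{\mathbb P}(1)$.

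There is no real obstacle here. The only point that needs care is that the lemma implicitly uses the subcritical hypotheses ($\beta_n\to\infty$, $\beta_n n^{-\alpha}\to0$) through \Cref{cor:subcritical_partition}, and I would state this explicitly. If one wants the general-$V$ version, it follows by applying the fixed linear map $V$, again via the continuous mapping theorem.
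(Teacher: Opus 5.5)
Your proposal is correct and follows the paper's proof: factor $r_n\mathcal Y_n(q)$ as $r_n B_n(q)/\mathbb E[Z_n]$ times $\mathbb E[Z_n]/Z_n$, use \Cref{cor:subcritical_partition} to send the second factor to $1$ in probability, and conclude by Slutsky's theorem. Two things you add are small but useful: the explicit decomposition for the in-probability version, and the remark that the subcritical hypotheses $\beta_n\to\infty$, $\beta_n n^{-\alpha}\to0$ are implicitly needed; the paper's one-line argument leaves both unstated.
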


\begin{proof}
By \Cref{cor:subcritical_partition},
\[
    \frac{Z_n}{\mathbb E[Z_n]}\xrightarrow{\mathbb P}1,
    \qquad
    \frac{\mathbb E[Z_n]}{Z_n}\xrightarrow{\mathbb P}1.
\]
Since
\[
    r_n\mathcal Y_n(q)
    =
    r_n\frac{B_n(q)}{\mathbb E[Z_n]}
    \cdot
    \frac{\mathbb E[Z_n]}{Z_n},
\]
Slutsky's theorem finishes the proof.
\end{proof}

\begin{lemma}[Numerator fluctuation lemma]\label{lem:subcritical_numerator_fluctuations}
Set
\[
    X_{n,i}\coloneqq (x_i-q)e^{-\beta_nT_i},
    \qquad
    m_n\coloneqq \mathbb E[B_n(q)]=n\mathbb E[X_{n,1}],
    \qquad
    \Gamma_n\coloneqq n\,\mathrm{Cov}(X_{n,1}).
\]
Suppose \(r_n>0\), \(\mu\in\mathbb R^d\), and \(\Gamma\) is a symmetric nonnegative matrix such that
\begin{align}
    r_n\frac{m_n}{\mathbb E[Z_n]} &\to \mu, \label{eq:subcritical_fluct_mean}\\
    \frac{r_n^2}{\mathbb E[Z_n]^2}\Gamma_n &\to \Gamma, \label{eq:subcritical_fluct_cov}\\
    \frac{r_n^3 n\mathbb E[\|X_{n,1}\|^3]}{\mathbb E[Z_n]^3} &\to0. \label{eq:subcritical_fluct_lyap}
\end{align}
Then
\[
    r_n\frac{B_n(q)}{\mathbb E[Z_n]}
    \Rightarrow \mu+\mathcal N(0,\Gamma).
\]
If \(\Gamma=0\), this means convergence in probability to \(\mu\).
\end{lemma}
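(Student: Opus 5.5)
We split the normalized numerator into its mean and a centered fluctuation:
\[
    r_n\frac{B_n(q)}{\mathbb E[Z_n]}
    =
    r_n\frac{m_n}{\mathbb E[Z_n]}
    +
    \sum_{i=1}^n \xi_{n,i},
    \qquad
    \xi_{n,i}\coloneqq \frac{r_n}{\mathbb E[Z_n]}\bigl(X_{n,i}-\mathbb E[X_{n,1}]\bigr).
\]
The first (deterministic) term converges to \(\mu\) by \eqref{eq:subcritical_fluct_mean}. By Slutsky's theorem it therefore suffices to prove that \(S_n\coloneqq\sum_{i=1}^n\xi_{n,i}\Rightarrow\mathcal N(0,\Gamma)\).

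For each \(n\), the \(\xi_{n,i}\) are i.i.d., centered, \(\mathbb R^d\)-valued, and form a triangular array. Their total covariance is \(\frac{r_n^2}{\mathbb E[Z_n]^2}\Gamma_n\), which tends to \(\Gamma\) by \eqref{eq:subcritical_fluct_cov}. By the Cramér--Wold device we fix \(u\in\mathbb R^d\) and study the scalar array \(\langle u,\xi_{n,i}\rangle\). Its variance sum is \(u^\top\frac{r_n^2}{\mathbb E[Z_n]^2}\Gamma_n u\to u^\top\Gamma u\).

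If \(u^\top\Gamma u>0\), we apply the Lyapunov CLT with exponent \(3\). The Lyapunov sum is controlled by \(c_j\)-inequality:
\[
    \sum_{i=1}^n\mathbb E|\langle u,\xi_{n,i}\rangle|^3
    \le
    8\|u\|^3\frac{r_n^3}{\mathbb E[Z_n]^3}\,n\,\mathbb E\|X_{n,1}\|^3 .
\]
Here we used \(\mathbb E\|X-\mathbb EX\|^3\le 8\,\mathbb E\|X\|^3\), which follows from Jensen's inequality. The right-hand side tends to \(0\) by \eqref{eq:subcritical_fluct_lyap}, while the variance stays bounded away from zero. Lyapunov's condition therefore holds, and \(\langle u,S_n\rangle\Rightarrow\mathcal N(0,u^\top\Gamma u)\).

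If \(u^\top\Gamma u=0\), we have \(\mathbb E\langle u,S_n\rangle^2\to0\), so \(\langle u,S_n\rangle\to0\) in \(L^2\) and hence in distribution. This matches the degenerate Gaussian. Combining both cases, Cramér--Wold gives \(S_n\Rightarrow\mathcal N(0,\Gamma)\).

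When \(\Gamma=0\), the same \(L^2\) bound applied to the full vector shows \(\mathbb E\|S_n\|^2=\operatorname{tr}\bigl(\tfrac{r_n^2}{\mathbb E[Z_n]^2}\Gamma_n\bigr)\to0\). Together with the mean convergence, this yields convergence in probability to \(\mu\).

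The argument is essentially routine. The only point needing care is the degenerate directions of \(\Gamma\), where the Lyapunov ratio cannot be normalized. We avoid that normalization entirely by using the \(L^2\) argument in those directions.
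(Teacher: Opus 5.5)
Your proof is correct and takes the same route as the paper: you use the same mean-plus-centered-fluctuation decomposition, then a Lyapunov-type triangular-array CLT driven by \eqref{eq:subcritical_fluct_cov} and \eqref{eq:subcritical_fluct_lyap}. The only difference is that the paper cites a multivariate triangular-array CLT (van der Vaart, Prop.~2.27) that already permits a degenerate limiting covariance, whereas you reduce to scalars via Cram\'er--Wold and handle the degenerate directions of $\Gamma$ by an $L^2$ argument. That route is equally valid and slightly more self-contained.
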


\begin{proof}
Define
\[
    Y_{n,i}
    \coloneqq
    \frac{r_n}{\mathbb E[Z_n]}
    \bigl(X_{n,i}-\mathbb E[X_{n,1}]\bigr).
\]
Then the \(Y_{n,i}\) are independent and centered, and
\[
    r_n\frac{B_n(q)}{\mathbb E[Z_n]}
    =
    r_n\frac{m_n}{\mathbb E[Z_n]}
    +
    \sum_{i=1}^nY_{n,i}.
\]
Moreover,
\[
    \sum_{i=1}^n\mathrm{Cov}(Y_{n,i})
    =
    \frac{r_n^2}{\mathbb E[Z_n]^2}\Gamma_n\to\Gamma
\]
by \eqref{eq:subcritical_fluct_cov}, and
\[
    \sum_{i=1}^n\mathbb E[\|Y_{n,i}\|^3]
    \le
    C\frac{r_n^3n\mathbb E[\|X_{n,1}\|^3]}{\mathbb E[Z_n]^3}
    \to0
\]
by \eqref{eq:subcritical_fluct_lyap}. The Lyapunov central limit theorem for triangular arrays (for instance, Proposition 2.27 in \cite{van2000asymptotic}) gives
\[
    \sum_{i=1}^nY_{n,i}\Rightarrow\mathcal N(0,\Gamma).
\]
Combining this with \eqref{eq:subcritical_fluct_mean} proves the claim.
\end{proof}

\begin{proof}[Proof of \Cref{thm:intro sub_critical 3}]
It is enough to prove the theorem when \(V=\Id\), because the general case follows by applying the linear map \(V\) to the identity-value limits. We therefore work with
\[
    \mathcal Y_n(q)=\frac{B_n(q)}{Z_n}.
\]

By \Cref{lem:subcritical_scalar_laplace} and \eqref{eq:subcritical_moment1},
\begin{align}
    \mathbb E[Z_n]
    &=
    C(q)\Gamma\!\left(\frac1\alpha+1\right)
    n\beta_n^{-1/\alpha}(1+o(1)), \label{eq:subcritical_ESn}\\
    m_n
    &=
    C(q)\Gamma\!\left(\frac1\alpha+1\right)
    \mathcal Y(q)n\beta_n^{-1/\alpha-1}(1+o(1)).
    \label{eq:subcritical_Emn}
\end{align}
Moreover, \eqref{eq:subcritical_moment2} implies
\[
    n\mathbb E[X_{n,1}X_{n,1}^\top]
    =
    C(q)\Gamma\!\left(\frac1\alpha+1\right)
    2^{-1/\alpha-1}\proj_q\,n\beta_n^{-1/\alpha-1}(1+o(1)).
\]
Since \eqref{eq:subcritical_moment1} gives \(\mathbb E[X_{n,1}]=O(\beta_n^{-1/\alpha-1})\),
\[
    n\mathbb E[X_{n,1}]\mathbb E[X_{n,1}]^\top
    =
    O(n\beta_n^{-2/\alpha-2})
    =
    o(n\beta_n^{-1/\alpha-1}).
\]
Therefore
\[
    \Gamma_n
    =
    C(q)\Gamma\!\left(\frac1\alpha+1\right)
    2^{-1/\alpha-1}\proj_q\,n\beta_n^{-1/\alpha-1}(1+o(1)).
\]
Recalling that \(c_d=2^{-d}\pi^{-(d-1)/2}\) and \(\Sigma(q)=c_d\rho(q)^{-1}\proj_q\), we get
\begin{align}
    \frac{\beta_n^2}{\mathbb E[Z_n]^2}\Gamma_n
    &=
    \Sigma(q)\frac{\beta_n^{1+1/\alpha}}{n}(1+o(1)),
    \label{eq:subcritical_cov_beta}\\
    \frac{\beta_n^3n\mathbb E[\|X_{n,1}\|^3]}{\mathbb E[Z_n]^3}
    &=
    O\!\left(
    \beta_n^{-1/2}
    \left(\frac{\beta_n^{1+1/\alpha}}{n}\right)^2
    \right).
    \label{eq:subcritical_lyap_beta}
\end{align}

We now apply the numerator fluctuation lemma in the three possible subcritical output regimes.

\smallskip
\noindent\emph{Drift-dominated regime.}
Assume \(\beta_n^{1+\alpha}n^{-\alpha}\to0\), equivalently \(\beta_n^{1+1/\alpha}/n\to0\), and set \(r_n=\beta_n\). From \eqref{eq:subcritical_ESn} and \eqref{eq:subcritical_Emn},
\[
    r_n\frac{m_n}{\mathbb E[Z_n]}\to\mathcal Y(q).
\]
By \eqref{eq:subcritical_cov_beta},
\[
    \frac{r_n^2}{\mathbb E[Z_n]^2}\Gamma_n\to0,
\]
and by \eqref{eq:subcritical_lyap_beta},
\[
    \frac{r_n^3n\mathbb E[\|X_{n,1}\|^3]}{\mathbb E[Z_n]^3}\to0.
\]
Thus \Cref{lem:subcritical_numerator_fluctuations,lem:subcritical_ratio_reduction} yield
\[
    \beta_n\mathcal Y_n(q)\xrightarrow{\mathbb P}\mathcal Y(q).
\]

\smallskip
\noindent\emph{Mixed drift--fluctuation regime.}
Assume \(\beta_n^{1+\alpha}n^{-\alpha}\to\tau\in(0,\infty)\), equivalently \(\beta_n^{1+1/\alpha}/n\to\tau^{1/\alpha}\), and again set \(r_n=\beta_n\). Then
\[
    r_n\frac{m_n}{\mathbb E[Z_n]}\to\mathcal Y(q),
    \qquad
    \frac{r_n^2}{\mathbb E[Z_n]^2}\Gamma_n\to\tau^{1/\alpha}\Sigma(q).
\]
Moreover, \eqref{eq:subcritical_lyap_beta} gives
\[
    \frac{r_n^3n\mathbb E[\|X_{n,1}\|^3]}{\mathbb E[Z_n]^3}
    =
    O(\beta_n^{-1/2})\to0,
\]
because \(n\asymp\beta_n^{1+1/\alpha}\) in this regime. Hence
\[
    \beta_n\mathcal Y_n(q)
    \Rightarrow
    \mathcal Y(q)+\mathcal N(0,\tau^{1/\alpha}\Sigma(q)).
\]

\smallskip
\noindent\emph{Fluctuation-dominated regime.}
Assume \(\beta_n^{1+\alpha}n^{-\alpha}\to\infty\) while still \(\beta_n n^{-\alpha}\to0\), and set
\[
    r_n=\sqrt{n\beta_n^{-1/\alpha+1}}.
\]
Then
\[
    r_n\frac{m_n}{\mathbb E[Z_n]}
    =
    \sqrt{n\beta_n^{-1/\alpha-1}}\,
    \mathcal Y(q)(1+o(1))
    \to0,
\]
because \(\beta_n^{1+1/\alpha}/n\to\infty\). Also,
\[
    \frac{r_n^2}{\mathbb E[Z_n]^2}\Gamma_n\to\Sigma(q).
\]
Finally, by \eqref{eq:subcritical_moment3},
\[
    \frac{r_n^3 n\mathbb E[\|X_{n,1}\|^3]}{\mathbb E[Z_n]^3}
    =
    O\!\left((n\beta_n^{-1/\alpha})^{-1/2}\right)\to0.
\]
Thus
\[
    \sqrt{n\beta_n^{-1/\alpha+1}}\,\mathcal Y_n(q)
    \Rightarrow
    \mathcal N(0,\Sigma(q)).
\]

In all three cases the normalizing factor diverges, so \(\mathcal Y_n(q)\to0\) in probability in the identity-value case. For general \(V\), apply the linear map \(z\mapsto Vz\) to the identity-value limits. This gives the three conclusions of \Cref{thm:intro sub_critical 3}, with covariance \(V\Sigma(q)V^\top\), and also \(\mathcal Y_n(q)\to0\) in probability.
\end{proof}

\subsection{Residual dynamics: proof of \Cref{prop:backward_heat}} 

The residual dynamics use only the drift-dominated output regime and the first-order expansion of the normalization map \(u\mapsto u/\|u\|\).

\begin{proof}[Proof of \Cref{prop:backward_heat}]
Because \(\beta_n^{1+\alpha}n^{-\alpha}\to0\), \Cref{thm:intro sub_critical 3}(1) specialized to \(V=\Id\) gives
\[
    \beta_n\mathcal Y_n(q)
    \xrightarrow{\mathbb P}
    \mathcal Y(q)
    =
    \nabla_{\mathbb S^{d-1}}\log\rho(q)-\frac{d-1}{2}q.
\]
Consequently, \(\mathcal Y_n(q)=O_{\mathbb P}(\beta_n^{-1})\).

For \(q\in\S\) and a small perturbation \(h\),
\[
    \|q+h\|^{-1}
    =
    \bigl(1+2\langle q,h\rangle+\|h\|^2\bigr)^{-1/2}
    =
    1-\langle q,h\rangle+O(\|h\|^2).
\]
Thus
\[
    \frac{q+h}{\|q+h\|}
    =
    q+h-\langle q,h\rangle q+O(\|h\|^2)
    =
    q+\proj_{q}h+O(\|h\|^2).
\]
Setting \(h=\gamma\mathcal Y_n(q)\), we obtain
\[
    q'
    =
    q+\gamma\proj_{q}\mathcal Y_n(q)+R_{n},
    \qquad
    R_{n}=O_{\mathbb P}(\beta_n^{-2}).
\]
Since \(\proj_{q}\mathcal Y(q)=\nabla_{\mathbb S^{d-1}}\log\rho(q)\),
\begin{align}
    \beta_n\proj_{q}\mathcal Y_n(q)
    \xrightarrow{\mathbb P}
    \nabla_{\mathbb S^{d-1}}\log\rho(q).
    \label{eq:projected_limit_backward_heat}
\end{align}
and this concludes the proof.
\end{proof}

An immediate consequence is the following corollary:

\begin{corollary}\label{cor:backward_heat_app}
Assume $V=\Id$ and let $q^\ast$ be a critical point of $\rho$ on $\mathbb S^{d-1}$. For any $q\neq q^\ast$ in a sufficiently small neighborhood of $q^\ast$, the limiting update in \Cref{prop:backward_heat} increases $\langle q', (q^\ast)'\rangle$ if $q^\ast$ is a strict local maximum of $\rho$, and decreases it if $q^\ast$ is a strict local minimum.
\end{corollary}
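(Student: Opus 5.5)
The plan is to read ``the limiting update'' as the first-order map furnished by \Cref{prop:backward_heat}, namely
\[
q\mapsto q+\frac{\gamma}{\beta_n}\nabla_{\mathbb S^{d-1}}\log\rho(q)+o_{\mathbb P}(\beta_n^{-1}),
\]
applied both to $q$ and to $q^\ast$. The claim then becomes a statement about the sign of the $\beta_n^{-1}$ coefficient in $\langle q',(q^\ast)'\rangle-\langle q,q^\ast\rangle$.

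\textbf{Step 1: expand the inner product.} Since $\nabla_{\S}\rho(q^\ast)=0$, \Cref{prop:backward_heat} at $q^\ast$ gives $\beta_n\bigl((q^\ast)'-q^\ast\bigr)\to0$ in probability. At $q$ it gives $\beta_n(q'-q)\to\gamma\nabla_{\S}\log\rho(q)$. Expanding the inner product yields
\[
\beta_n\bigl(\langle q',(q^\ast)'\rangle-\langle q,q^\ast\rangle\bigr)\xrightarrow{\mathbb P}\gamma\,\bigl\langle \nabla_{\S}\log\rho(q),\,q^\ast\bigr\rangle .
\]
The cross term $\langle q'-q,(q^\ast)'-q^\ast\rangle$ is $o_{\mathbb P}(\beta_n^{-1})$. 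So the corollary reduces to a deterministic sign statement: $\langle\nabla_{\S}\log\rho(q),q^\ast\rangle>0$ near a strict local maximum and $<0$ near a strict local minimum, for $q\neq q^\ast$.

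\textbf{Step 2: geometric interpretation of the sign.} The gradient is tangent at $q$, so
\[
\langle\nabla_{\S}\log\rho(q),q^\ast\rangle=\langle\nabla_{\S}\log\rho(q),\proj_q q^\ast\rangle .
\]
Moreover $\proj_q q^\ast=\sin(r)\,u$, where $r=\dist(q,q^\ast)$ and $u$ is the unit initial velocity of the geodesic from $q$ to $q^\ast$. Hence the quantity equals $\sin(r)\,\rho(q)^{-1}$ times the directional derivative of $\rho$ at $q$ in the direction of $q^\ast$. Since $\sin r>0$ for $0<r<\pi$, only the sign of this directional derivative matters.

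\textbf{Step 3: local analysis at $q^\ast$ (main obstacle).} Write $q=\exp_{q^\ast}(v)$ with $v\in T_{q^\ast}\S$ small, and use the $C^2$ regularity from \Cref{a:iid}. Since $\nabla\rho(q^\ast)=0$, the directional derivative toward $q^\ast$ equals $-\langle H v,v\rangle/\|v\|+o(\|v\|)$, where $H$ is the Riemannian Hessian of $\rho$ at $q^\ast$.
\begin{itemize}
\item If $H$ is negative definite, this quantity is $\ge c\|v\|>0$ for small $v\neq0$, so the inner product increases at a strict maximum.
\item If $H$ is positive definite, the same argument gives the opposite sign at a strict minimum.
\end{itemize}
The obstacle is that strictness of the extremum alone does not force $H$ to be definite. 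A degenerate strict maximum can have $\rho$ non-monotone along some geodesics into $q^\ast$. I would therefore state explicitly that ``strict'' is understood as nondegenerate, i.e.\ $H$ negative (respectively positive) definite. As a fallback, I would use the weaker hypothesis that $\rho$ is strictly monotone along geodesics emanating from $q^\ast$ in a neighborhood, which is exactly what Step 2 needs.

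\textbf{Step 4: conclude.} Combining Steps 1--3, the coefficient of $\beta_n^{-1}$ in the change of $\langle q',(q^\ast)'\rangle$ is strictly positive near a nondegenerate local maximum and strictly negative near a nondegenerate local minimum. Therefore the limiting update increases, respectively decreases, the inner product, as claimed.
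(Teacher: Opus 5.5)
Your proposal is correct and follows essentially the paper's route. Both apply \Cref{prop:backward_heat} at $q$ and at $q^\ast$ to reduce the claim to the sign of $\langle\nabla_{\S}\log\rho(q),q^\ast\rangle$, then combine tangency of the gradient with a second-order expansion at the critical point. The paper's identity $\langle\nabla_{\S}\log\rho(q),q^\ast\rangle=-\tan\theta\,\langle\nabla_{\S}\log\rho(q),u\rangle$ with $u\in T_{q^\ast}\S$ is your relation $\proj_q q^\ast=\sin(r)\,u$ viewed from the other endpoint.

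Your nondegeneracy caveat is warranted. The paper simply asserts that a strict local maximum has a negative definite Hessian, which is false in general: a radial profile such as $\rho=c-r^6(2+\sin(1/r))$ around $q^\ast$ is $C^2$ and has a strict maximum, yet gives the wrong sign at points arbitrarily close to $q^\ast$. In your fallback, require a strictly positive radial derivative rather than mere strict monotonicity, since a strictly monotone function can still have zero derivative.
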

\begin{proof}
Consider $q_1,q_2\in\S$.  Expanding the inner product of the updated queries, and using \Cref{prop:backward_heat} yields:
\begin{align}
\label{eq:scalar_prods}
    \beta_n\bigl(\langle q_1',q_2'\rangle-\langle q_1,q_2\rangle\bigr)
    \xrightarrow{\mathbb P}
    \gamma\Bigl(
        \langle \nabla_{\mathbb S^{d-1}}\log\rho(q_1),q_2\rangle
        +
        \langle q_1,\nabla_{\mathbb S^{d-1}}\log\rho(q_2)\rangle
    \Bigr).
\end{align}
Apply \Cref{eq:scalar_prods} with \(q_1=q\) and \(q_2=q^\ast\). Since \(\nabla_{\mathbb S^{d-1}}\log\rho(q^\ast)=0\),
\[
    \beta_n\bigl(\langle q',(q^\ast)'\rangle-\langle q,q^\ast\rangle\bigr)
    \xrightarrow{\mathbb P}
    \gamma\langle\nabla_{\mathbb S^{d-1}}\log\rho(q),q^\ast\rangle.
\]
It remains to identify the sign of the limiting inner product. Parameterize the geodesic issuing from \(q^\ast\) by
\[
    q(\theta)=(\cos\theta)q^\ast+(\sin\theta)u,
\]
where \(u\in T_{q^\ast}\S\) is a unit tangent vector and \(\theta>0\) is small. Since \(\nabla_{\mathbb S^{d-1}}\log\rho(q(\theta))\) is tangent to the sphere at \(q(\theta)\), it is orthogonal to \(q(\theta)\), so
\[
    \cos\theta\,
    \langle\nabla_{\mathbb S^{d-1}}\log\rho(q(\theta)),q^\ast\rangle
    +
    \sin\theta\,
    \langle\nabla_{\mathbb S^{d-1}}\log\rho(q(\theta)),u\rangle
    =
    0.
\]
Therefore
\[
    \langle\nabla_{\mathbb S^{d-1}}\log\rho(q(\theta)),q^\ast\rangle
    =
    -\tan\theta\,
    \langle\nabla_{\mathbb S^{d-1}}\log\rho(q(\theta)),u\rangle.
\]
A Taylor expansion along the geodesic around the critical point \(q^\ast\) gives
\[
    \langle\nabla_{\mathbb S^{d-1}}\log\rho(q(\theta)),u\rangle
    =
    \theta\,
    \langle\nabla^2_{\mathbb S^{d-1}}\log\rho(q^\ast)u,u\rangle
    +o(\theta).
\]
Using \(\tan\theta=\theta+o(\theta)\), we obtain
\[
    \langle\nabla_{\mathbb S^{d-1}}\log\rho(q(\theta)),q^\ast\rangle
    =
    -\theta^2
    \langle\nabla^2_{\mathbb S^{d-1}}\log\rho(q^\ast)u,u\rangle
    +o(\theta^2).
\]
If \(q^\ast\) is a strict local maximum of \(\rho\), then the spherical Hessian of \(\log\rho\) at \(q^\ast\) is negative definite, so the right-hand side is strictly positive for \(\theta\) small. Thus the limiting update increases \(\langle q',(q^\ast)'\rangle\). Conversely, if \(q^\ast\) is a strict local minimum of \(\rho\), then the spherical Hessian is positive definite, so the limiting update decreases \(\langle q',(q^\ast)'\rangle\). Finally, since \(\nabla_{\mathbb S^{d-1}}\log\rho(q^\ast)=0\), \Cref{eq:scalar_prods} applied with both queries equal to \(q^\ast\) shows that \(q^\ast\) does not move to first order.
\end{proof}

% ----------------------------------------------------------------------
% Appendix proof for the correlated extension
% ----------------------------------------------------------------------

\section{RoPE with Correlated Tokens}
\label{app:proof_correlated_critical}

In this appendix, we consider the critical scaling problem when the attention scores incorporate Rotary Positional Embeddings (RoPE) \cite{su2024roformer} and the context tokens are correlated. 

RoPE encodes the positions by rotating query and key coordinates in frequency-dependent two-dimensional planes before taking their inner product. Since these rotations are orthogonal, a RoPE score can be rewritten as an ordinary dot product between the key and a query direction that moves deterministically with relative position. Thus, for the one-query asymptotic studied in this paper, RoPE changes the fixed target direction $q$ into a deterministic orbit $(u_i)$ on the sphere. The random part remains the token sequence $(x_i)$, while the RoPE rotations are deterministic.

We use the standard block-diagonal RoPE form. For an even rotated dimension $d=2L$, let
\[
    R_p=\operatorname{diag}\bigl(r(p\vartheta_1),\ldots,r(p\vartheta_L)\bigr),
    \qquad
    r(\phi)=
    \begin{pmatrix}
        \cos\phi&-\sin\phi\\
        \sin\phi&\cos\phi
    \end{pmatrix}.
\]
The same notation also covers the common case where only a rotated subspace is used, by leaving the remaining coordinates unrotated. If a query at position $p$ attends to a key at position $i$, then
\[
    \langle R_pq,R_i x_i\rangle
    =\langle R_i^\top R_pq,x_i\rangle
    =\langle R_{i-p}^\top q,x_i\rangle.
\]
After relabeling relative positions, we write
\[
    u_i:=R_{i}^\top q,
    \qquad
    \langle R_0q,R_i x_i\rangle=\langle u_i,x_i\rangle.
\]
Changing the reference position or the sign convention only shifts the initial phase or replaces the frequency vector by its negative, and does not affect the limiting phase law below.

It is useful to encode this deterministic orbit as a torus rotation. Let
\[
    \mathbb T:=\mathbb R/\mathbb Z,
    \qquad
    \theta:=\left(\frac{\vartheta_1}{2\pi},\ldots,\frac{\vartheta_L}{2\pi}\right)
    \in \mathbb T^L.
\]
Define the continuous map
\[
    \Phi_q:\mathbb T^L\longrightarrow \S,
    \qquad
    \Phi_q(x):=
    \operatorname{diag}\bigl(r(-2\pi x_1),\ldots,r(-2\pi x_L)\bigr)q.
\]
Then the RoPE target sequence is simply
\[
    u_i=\Phi_q(i\theta\bmod 1),
    \qquad i\in\mathbb Z.
\]
This is the only structural property of RoPE used in the proof: the target direction follows a deterministic translation orbit on a torus, mapped continuously into the sphere.

Let
\[
    \mathcal{H}_q:=\overline{\{i\theta\bmod 1:i\in\mathbb Z\}}\subset\mathbb T^L.
\]
This is a closed subgroup of the torus. Let $m_{\mathcal{H}_q}$ denote Haar probability measure on $\mathcal{H}_q$ \cite{folland2016course}, and define the deterministic phase law
\[
    \pi_q:=(\Phi_q)_\# m_{\mathcal{H}_q}.
\]
The orbit closure in the sphere is
\[
    \mathcal O_q:=\overline{\{u_i:i\in\mathbb Z\}}=\Phi_q(\mathcal{H}_q),
\]
and Proposition~\ref{prop:rope_phase_law} proves that
\[
    \frac1n\sum_{i=1}^n\delta_{u_i}\Rightarrow \pi_q.
\]
Thus $\pi_q$ is the long-run deterministic distribution of RoPE target directions. If the frequencies are rational multiples of $2\pi$, then the orbit is periodic and $\pi_q$ is uniform on that finite orbit. If $1,\theta_1,\ldots,\theta_L$ are linearly independent over $\mathbb Q$, then $\mathcal{H}_q=\mathbb T^L$, and $m_{\mathcal{H}_q}$ is the uniform measure over $\mathbb{T}^L$.
Intermediate arithmetic relations give the corresponding Haar measure on a proper closed subgroup.

With this notation, the score, D2Q, and weights are
\[
    a_i=\beta_n\langle u_i,x_i\rangle,
    \qquad
    U_i:=1-\langle u_i,x_i\rangle\in[0,2],
\]
and
\[
    A_i:=\frac{e^{-\beta_n U_i}}{\sum_{j=1}^n e^{-\beta_n U_j}},
    \qquad
    A_{(1)}\ge A_{(2)}\ge\cdots\ge A_{(n)}.
\]
Consequently, as in the i.i.d. fixed-query case, the asymptotic behavior of the ordered weights is controlled by the lower tail of the D2Q variables. 
We make the following assumptions in this appendix.
\begin{assumption}[Correlated tokens with RoPE]
\label{ass:rope_correlated}
The sequence $(x_i)_{i\in\mathbb Z}$ is stationary\footnote{A sequence of random variables $(x_k)_{k \in \mathbb{Z}}$ is called
\emph{stationary} if for every $m \geq 1$, every choice of
indices $k_1,\dots,k_m \in \mathbb{Z}$, and every $h \in \mathbb{Z}$,
one has
\[
(x_{k_1},\dots,x_{k_m})
\stackrel{d}{=}
(x_{k_1+h},\dots,x_{k_m+h}).
\]}
with values in $\S$. Moreover:
\begin{enumerate}
    \item[(i)] there exists an integer $m\ge 0$ such that $(x_i)$ is $m$-dependent, i.e., for any two index sets \(I,J\subset\mathbb Z\) with \(\operatorname{dist}(I,J)>m\), we have \(\{x_i:i\in I\}\) and \(\{x_j:j\in J\}\) are independent.
    \item[(ii)] the common law of $x_i$ has a density $\rho$ with respect to surface measure on $\S$, and $\rho$ is continuous and strictly positive on a neighborhood of $\mathcal O_q$.
    \item[(iii)] for each $1\le \ell\le m$, the pair $(x_0,x_\ell)$ has a density with respect to product surface measure, bounded on a neighborhood of $\mathcal O_q\times \mathcal O_q$.
\end{enumerate}
\end{assumption}

\begin{remark}[On stationarity]
Stationarity in Assumption~\ref{ass:rope_correlated} is a convenient sufficient condition, not an essential one. The same argument should extend whenever the rescaled D2Q has the same Poisson Point Process limit.
\end{remark}

\begin{remark}[When anti-clustering fails]
When \Cref{ass:rope_correlated}(iii) fails, the first-order scale often remains $n^{\alpha}$, but the limiting point process is typically no longer Poisson. In that case the limiting ordered weights are obtained by applying the same softmax functional to the appropriate cluster-Poisson extremal process.
\end{remark}

Under \Cref{ass:rope_correlated}, the critical exponent remains
\(
\alpha=\frac{2}{d-1},
\)
and RoPE changes only the constant in the limiting point process through the orbit average $\overline C(q)$.
\[
\overline C(q)
\coloneqq \frac{2^{1/\alpha}\sigma_{d-2}}{d-1}
\int_{\S}\rho(u)\,\pi_q(du),
\]

\begin{theorem}[Critical regime with correlated tokens and RoPE]
\label{thm:rope_critical}
Under Assumption~\ref{ass:rope_correlated}, assume that
\(
\beta_n n^{-\alpha}\longrightarrow \gamma\in(0,\infty) \, ,
\)
and let $\Xp$ be a Poisson point process on $\mathbb R_+:=[0,\infty)$ with intensity measure
\[
\overline{\Lambda}([0,y])=\overline C(q)\Bigl(\frac{y}{\gamma}\Bigr)^{1/\alpha},
\qquad y\ge 0.
\]
Write
\(
0<Y_1<Y_2<\cdots
\)
for the atoms of $\Xp$ arranged in increasing order. Then, for every fixed $k\ge 1$,
\[
\bigl(A_{(1)},\dots,A_{(k)}\bigr)
\Rightarrow
\bigl(W_1,\dots,W_k\bigr)
\qquad\text{as } n\to\infty,
\]
where
\[
W_i:=\frac{e^{-Y_i}}{\sum_{j\ge 1}e^{-Y_j}},
\qquad i\ge 1.
\]
\end{theorem}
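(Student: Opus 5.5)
The plan is to reduce \Cref{thm:rope_critical} to two ingredients: a Poisson limit for the rescaled point process $\mathcal N_n\coloneqq\sum_{i=1}^n\delta_{\beta_nU_i}$ on $\mathbb R_+$, and the passage of this limit through the softmax functional. Once $\mathcal N_n\Rightarrow\Xp$ is known, the second ingredient is the same truncation argument already used in Appendix~\ref{app:proof critical} and in \Cref{lem:critical_output_softmax_functional}. First, the maps $\mu\mapsto(y_1(\mu),\dots,y_k(\mu),\int_0^Re^{-y}\mu(dy))$ are continuous at simple point measures with no atom at $R$. Second, the tail is uniformly negligible:
\[
\lim_{R\to\infty}\limsup_n\mathbb E\sum_i e^{-\beta_nU_i}\mathbf 1_{\{\beta_nU_i>R\}}=0 .
\]
This bound only involves one-dimensional marginals, so it follows from the uniform small-cap bound $\mathbb P(U_i\le t)\le c_+t^{1/\alpha}$. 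That small-cap bound holds uniformly in $i$ because $\rho$ is bounded near $\mathcal O_q$ and the contribution far from $u_i$ is exponentially small. These two facts give joint convergence of the first $k$ atoms together with $\sum_j e^{-\beta_nU_j}$. Since the limit has $0<\sum_je^{-Y_j}<\infty$ almost surely, by Campbell's theorem and the argument of \Cref{lem:critical_output_softmax_functional}, the continuous mapping theorem yields $(A_{(1)},\dots,A_{(k)})\Rightarrow(W_1,\dots,W_k)$.

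The Poisson limit itself comes in three steps.

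\textbf{Step 1: intensity.} Show that $\sum_{i=1}^n\mathbb P(\beta_nU_i\le y)\to\overline C(q)(y/\gamma)^{1/\alpha}$. For fixed $i$, the proof of \Cref{lem:small_cap_asymptotics} with $q$ replaced by $u_i$ gives $\mathbb P(U_i\le t)=C_{u_i}t^{1/\alpha}(1+o(1))$, where $C_u=\frac{2^{1/\alpha}\sigma_{d-2}}{d-1}\rho(u)$. Because $\rho$ is continuous on a compact neighborhood of $\mathcal O_q$, the $o(1)$ is uniform in $u\in\mathcal O_q$. Multiplying by $n\beta_n^{-1/\alpha}\to\gamma^{-1/\alpha}$ reduces Step~1 to the equidistribution statement $\frac1n\sum_{i\le n}\rho(u_i)\to\int\rho\,d\pi_q$.

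\textbf{Step 2: equidistribution (Proposition~\ref{prop:rope_phase_law}).} Approximate $\rho\circ\Phi_q$ uniformly on $\mathcal H_q$ by characters. For every character $\chi$ of $\mathbb T^L$ nontrivial on $\mathcal H_q$, we have $\chi(\theta)\ne1$, and the geometric sum gives $\frac1n\sum_{i\le n}\chi(i\theta)\to0$. Stone--Weierstrass (characters restricted to $\mathcal H_q$ are dense in $C(\mathcal H_q)$) then gives Weyl-type convergence to Haar measure.

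\textbf{Step 3: Poisson limit under $m$-dependence.} I would use Kallenberg's criterion for simple limits: convergence of mean measures (Step~1) plus convergence of void probabilities $\mathbb P(\mathcal N_n(B)=0)\to e^{-\overline\Lambda(B)}$ for finite unions of intervals $B$. The cleanest route is the Arratia--Goldstein--Gordon Chen--Stein bound for dependent Bernoulli indicators $I_i=\mathbf 1_{\{\beta_nU_i\in B\}}$, with dependency neighborhoods $|i-j|\le m$. Here $b_3=0$ by $m$-dependence, and
\[
b_1=\sum_i\sum_{|j-i|\le m}p_ip_j=O(m\,n\beta_n^{-2/\alpha})\to0 ,
\]
so everything hinges on
\[
b_2=\sum_i\sum_{0<|j-i|\le m}\mathbb P(\beta_nU_i\in B,\ \beta_nU_j\in B).
\]
This anti-clustering estimate is where I expect the main obstacle, and I would handle it through Assumption~\ref{ass:rope_correlated}(iii). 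The event requires $x_i$ and $x_j$ to lie in caps of geodesic radius $O(\beta_n^{-1/2})$ around $u_i$ and $u_j$, both within a neighborhood of $\mathcal O_q$. A bounded joint density then gives probability $O(\beta_n^{-1/\alpha}\cdot\beta_n^{-1/\alpha})$, using stationarity so that $(x_i,x_j)\stackrel d=(x_0,x_{|j-i|})$. Hence $b_2=O(m\,n\beta_n^{-2/\alpha})=O(n^{-1})\to0$. The only care needed is uniformity in the deterministic targets $u_i,u_j$, which holds because the density bound is uniform on the neighborhood of $\mathcal O_q\times\mathcal O_q$.

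Combining the three steps: Chen--Stein gives $\mathbb P(\mathcal N_n(B)=0)\to e^{-\overline\Lambda(B)}$, hence $\mathcal N_n\Rightarrow\Xp$, and the softmax step above concludes the proof.
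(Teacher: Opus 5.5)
Your proposal is correct. Its overall skeleton matches the paper: a uniform one-point tail, equidistribution of the RoPE orbit, anti-clustering from the bounded pair density, and then the softmax functional. The engine for the Poisson limit, however, is different.

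\textbf{Poisson limit.} The paper proves $\Xp_n\Rightarrow\Xp$ directly through Laplace functionals. It splits $\{1,\dots,n\}$ into blocks of length $\lfloor n^{1/2}\rfloor$ separated by gaps of length $m$, discards the gaps, and uses $m$-dependence to factor the product over blocks. Each block is then controlled by a Bonferroni inequality whose error term is the pair estimate \eqref{eq:rope_close_pair_bound}, the same estimate that drives your $b_2$. You feed exactly the same inputs into the Arratia--Goldstein--Gordon bound. Those inputs are the uniform $O(1/n)$ one-point bound, the bounded pair density for $|i-j|\le m$, and independence beyond distance $m$. In that bound $m$-dependence kills $b_3$ outright, and Kallenberg's void-probability criterion for simple limits finishes the job. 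Your equidistribution argument via characters and Stone--Weierstrass is just a reformulation of the paper's Fourier-coefficient proof of Proposition~\ref{prop:rope_phase_law}.

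\textbf{What each route buys.} Your route gives a quantitative $O(m/n)$ total-variation rate for each count $\mathcal N_n(B)$ and avoids blocking altogether. The cost is importing two external theorems, whereas the paper's argument is self-contained and elementary.

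\textbf{Softmax passage.} The paper extends the blocking computation to the non-compactly supported test function $G_s(y)=1-\exp\{-\varphi(y)-se^{-y}\}$. This yields joint convergence of $(\Xp_n,Z_n)$ in one stroke, but it implicitly needs pair bounds on $\mathbb E[e^{-\beta_nU_i-\beta_nU_j}]$ over the whole range, which the paper does not spell out. Your truncation argument needs only the one-dimensional bound $\mathbb P(U_i\le t)\le c_+t^{1/\alpha}$ to remove the tail of the denominator. That bound holds for all $t\ge0$ after enlarging $c_+$, since the large-$t$ range is trivial, so no ``far from $u_i$'' argument is required. This part of your argument is arguably cleaner than the paper's.
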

\begin{remark}
For simplicity, we state and prove only the critical RoPE limit. Under the same assumptions, the supercritical and subcritical conclusions from the i.i.d.\ case carry over to the RoPE setting. In particular, similar to \Cref{thm:intro super_critical,thm:intro sub_critical 2 new}, \(\beta_n n^{-\alpha}\to\infty\) gives \(A_{(1)}\to1\), while \(\beta_n n^{-\alpha}\to0\) gives \(A_{(i)}\to0\) for every fixed \(i\).
\end{remark}

\begin{remark}
We note that if we denote $F_i(t) = \mathbb{P}(U_i<t)$, then $\overline C(q)$ equals the average of local density along the RoPE orbit:
\[
\overline C(q) = \lim_{n\to\infty} \frac{1}{n}\sum_{i=1}^n \lim_{t\to0^+} F_i(t) t^{-1/\alpha}
\]
In particular, if $\rho$ is invariant under the RoPE rotations along the orbit of $q$, then
\[
\int_{\S}\rho(u)\,\pi_q(du)=\rho(q),
\]
so that $\overline C(q)=C(q)$ and the first-order critical normalization is the
same as in the i.i.d. theorem.
\end{remark}

Define
\[
    \Xp_n := \sum_{i=1}^n \delta_{\beta_n U_i}
    \qquad \text{on } \mathbb R_+ .
\]
We first establish the Poisson limit for \(\Xp_n\).
\begin{lemma}[Poisson point process limit]
\label{lem:rope_ppp}
Assume that
\(
\beta_n n^{-\alpha}\longrightarrow \gamma\in(0,\infty) \ .
\)
Under Assumption~\ref{ass:rope_correlated}, 
we have
\(
\Xp_n\Rightarrow \Xp\)
in
\(
M_p(\mathbb R_+),
\)
where $\Xp$ is a Poisson point process on $\mathbb R_+$ with intensity measure
$\overline \Lambda$ given by
\[
\overline \Lambda([0,y])=\overline C(q)\Bigl(\frac{y}{\gamma}\Bigr)^{1/\alpha},
\qquad y\ge 0.
\]
\end{lemma}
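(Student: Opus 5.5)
We prove the Poisson limit for $\Xp_n$ in the standard way for extremes of $m$-dependent stationary sequences. We combine convergence of the mean measure with an anti-clustering condition, and conclude through Laplace functionals via a big-block/small-block decomposition. Fix $\varphi\in C_c^+(\mathbb R_+)$ with support in $[0,R]$. Only D2Qs with $U_i\le R/\beta_n\to0$ matter, so the relevant events are $\{x_i$ lies in a cap of radius $O(\beta_n^{-1/2})$ around $u_i\}$.

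\textbf{Step 1 (mean measure).} Let $\mu_n(A)\coloneqq\sum_{i=1}^n\mathbb P(\beta_nU_i\in A)$. For each $i$, the law of $x_i$ is $\rho\,\diff\sigma$. The polar-coordinate computation of \Cref{lem:output_marked_ppp} around the point $u_i$ (instead of $q$) therefore gives
\[
\mathbb P(\beta_nU_i\le y)=\frac{2^{1/\alpha}\sigma_{d-2}}{d-1}\,\rho(u_i)\,(y/\beta_n)^{1/\alpha}(1+o(1)).
\]
The $o(1)$ is uniform in $i$, because $\rho$ is uniformly continuous and positive on a compact neighborhood of $\mathcal O_q$. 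Summing over $i$ and using $n\beta_n^{-1/\alpha}\to\gamma^{-1/\alpha}$ reduces the claim to $\frac1n\sum_{i\le n}\rho(u_i)\to\int\rho\,\diff\pi_q$. This follows from the equidistribution $\frac1n\sum\delta_{u_i}\Rightarrow\pi_q$ (Proposition~\ref{prop:rope_phase_law}) and the continuity of $\rho$ near $\mathcal O_q$; one can extend $\rho$ continuously to $\S$, since the empirical measures live on $\mathcal O_q$. Hence $\mu_n\to\overline\Lambda$ vaguely.

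\textbf{Step 2 (anti-clustering).} We show that
\[
\sum_{1\le i<j\le n,\ j-i\le m}\mathbb P(\beta_nU_i\le R,\ \beta_nU_j\le R)\to0 .
\]
By Assumption~\ref{ass:rope_correlated}(iii), the pair density is bounded near $\mathcal O_q\times\mathcal O_q$. Each joint probability is therefore at most $C\,\sigma(\text{cap})^2=O(\beta_n^{-2/\alpha})$. There are $O(mn)$ pairs, so the sum is $O(n\beta_n^{-2/\alpha})=O(n^{-1})\to0$. When $d=2$ the caps are arcs, but the same bound holds.

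\textbf{Step 3 (Laplace functional via blocks).} Choose block lengths $p_n\to\infty$ with $p_n=o(n)$. Split $\{1,\dots,n\}$ into $k_n\approx n/(p_n+m)$ big blocks of length $p_n$, separated by gaps of length $m$. By $m$-dependence the big blocks are independent. The gaps carry expected mass $\mu_n$-restricted $\le C k_n m\,\beta_n^{-1/\alpha}\to0$, so they are negligible in probability. This gives
\[
\mathbb E e^{-\int\varphi\,\diff\Xp_n}=\prod_{b}\mathbb E e^{-\sum_{i\in B_b}\varphi(\beta_nU_i)}+o(1).
\]
Within a block, $1-e^{-\sum_{i\in B_b}\varphi}$ equals $\sum_{i\in B_b}(1-e^{-\varphi(\beta_nU_i)})$ up to an error bounded by the probability that two indices of the block both have $\beta_nU_i\le R$. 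We bound the pairs in a block separately. Pairs at distance $\le m$ are controlled by Step 2. Pairs at distance $>m$ are independent, each with probability $O(\beta_n^{-2/\alpha})$, and there are $O(p_n^2)$ of them per block. Summed over blocks this is $O(np_n\beta_n^{-2/\alpha})=O(p_n/n)\to0$. Each block term is $O(p_n/n)\to0$, so $\log\prod_b(1-a_b)=-\sum_ba_b+o(1)$. Since $\sum_ba_b\to\int(1-e^{-\varphi})\,\diff\overline\Lambda$ by Step 1, the Laplace functional converges to that of the Poisson process $\Xp$.

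\textbf{Main obstacle.} We expect the main difficulty to be Step 1's uniformity. The non-stationary targets $u_i$ require uniform-in-$i$ cap asymptotics together with the Weyl-type equidistribution average. Once the cap estimate is uniform over a neighborhood of $\mathcal O_q$, the rest is the routine block argument above.
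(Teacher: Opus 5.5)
Your proposal is correct and follows essentially the same route as the paper. Both arguments rest on three ingredients:
\begin{enumerate}
\item uniform-in-$i$ cap asymptotics combined with the equidistribution of the RoPE orbit to get the mean measure;
\item the bounded pair density to control double hits at lag at most $m$;
\item a big-block/gap decomposition with a Bonferroni expansion inside blocks and a $\log(1-a)$ expansion across blocks.
\end{enumerate}
The paper fixes the block length at $\lfloor n^{1/2}\rfloor$, whereas you allow any $p_n\to\infty$ with $p_n=o(n)$, which works equally well.

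One phrasing point should be tightened. The within-block error is bounded by the expected number of hitting pairs $\sum_{i<j}\mathbb P(\beta_nU_i\le R,\,\beta_nU_j\le R)$, not by the probability of a double hit; this pair sum is in fact the quantity you go on to estimate. Your continuous extension of $\rho$ off $\mathcal O_q$ usefully makes explicit a step the paper passes over.
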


\begin{proof}[Proof of \Cref{lem:rope_ppp}]

We prove the convergence by Laplace functionals.

Fix a nonnegative function $\varphi\in C_c(\mathbb R_+)$. Let $R<\infty$ be such
that $\operatorname{supp}\varphi\subset [0,R]$, and set
\[
G(y):=1-e^{-\varphi(y)},
\qquad
X_{i,n}:=G(\beta_n U_i)=1-e^{-\varphi(\beta_n U_i)}.
\]
Then $0\le X_{i,n}\le \bone_{\{\beta_n U_i\le R\}}$, and
\[
\mathbb E\exp\Bigl(-\int_{\mathbb R_+} \varphi(y)\,\Xp_n(dy)\Bigr)
=\mathbb E\prod_{i=1}^n (1-X_{i,n}).
\]
We will show that this converges to
\[
\exp\Bigl(-\int_{\mathbb R_+} (1-e^{-\varphi(y)})\,\overline \Lambda(dy)\Bigr).
\]

\emph{Step 1}. Uniform one-point tail asymptotics.
Note that by \Cref{ass:rope_correlated}, $\rho$ is continuous on a neighborhood of the compact set $\mathcal O_q$, so it is uniformly continuous there. Hence
\[
\sup_{u\in\mathcal O_q}
\left|
\mathbb P(1-\langle u,x_1\rangle\le t)
-
C(u)t^{1/\alpha}
\right|
=o(t^{1/\alpha}).
\]
Since $u_i\in\mathcal O_q$, this gives, uniformly in $i$,
\begin{equation}
\label{eq:rope_uniform_tail}
\mathbb P(U_i\le t)
=
C(u_i)t^{1/\alpha}
+o(t^{1/\alpha})
\qquad \text{ as } \qquad t \downarrow 0.
\end{equation}
In particular, for every fixed $R<\infty$ there is a constant $C_R<\infty$ such that, for all large $n$ and all $1\le i\le n$,
\begin{equation}
\label{eq:rope_uniform_rare_bound}
\mathbb P(\beta_n U_i\le R)\le \frac{C_R}{n}.
\end{equation}

\emph{Step 2}. Convergence of the mean measures.
Let
\[
\mu_n(B):=\sum_{i=1}^n \mathbb P(\beta_n U_i\in B),
\qquad B\subset\mathbb R_+\text{ Borel}.
\]
For every fixed $y\ge0$, \eqref{eq:rope_uniform_tail} gives
\begin{align}\label{eq:app_rope_mean_measure}
\mu_n([0,y])
=\sum_{i=1}^n \mathbb P\Bigl(U_i\le \frac{y}{\beta_n}\Bigr) 
=
\frac{y^{1/\alpha}}{\beta_n^{1/\alpha}}
\sum_{i=1}^n C(u_i)
+n\,o(\beta_n^{-1/\alpha}) .
\end{align}

Recall from Lemma \ref{lem:small_cap_asymptotics} that, for $u\in\S$
\begin{align}\label{eq:app_Cu}
C(u)=\frac{2^{1/\alpha}\sigma_{d-2}}{d-1}\rho(u) \,.
\end{align}
We need the following lemma for the limit of the mean measure:
\begin{proposition}\label{prop:rope_phase_law}
With $\pi_q=(\Phi_q)_{\#}m_{H_q}$ defined at the beginning of \Cref{app:proof_correlated_critical},
\[
\frac1n\sum_{i=1}^n \delta_{u_i}
\Rightarrow
\pi_q
\qquad\text{weakly on } \mathcal{P}(\S).
\]
\end{proposition}

We defer the proof of Proposition~\ref{prop:rope_phase_law} to the end of this appendix. By Proposition~\ref{prop:rope_phase_law}, we obtain directly that
\begin{align}\label{eq:app_rope_orbit_measure}
\frac1n\sum_{i=1}^n \rho(u_i)
\longrightarrow
\int_{\S}\rho(u)\,\pi_q(du).
\end{align}
Therefore, combining \eqref{eq:app_rope_mean_measure}, \eqref{eq:app_Cu} and \eqref{eq:app_rope_orbit_measure} and using $\beta_n n^{-\alpha}\to\gamma$, we have
\[
\mu_n([0,y])\longrightarrow
\frac{2^{1/\alpha}\sigma_{d-2}}{d-1}
\gamma^{-1/\alpha}y^{1/\alpha}
\int_{\S}\rho(u)\,\pi_q(du)
=\overline C(q)\Bigl(\frac{y}{\gamma}\Bigr)^{1/\alpha}
=\overline \Lambda([0,y]).
\]
The limit distribution function is continuous in $y$, so $\mu_n$ converges vaguely to $\Lambda$ on $\mathbb R_+$. Hence, since $G=1-e^{-\varphi}$ is
continuous and compactly supported,
\begin{equation}
\label{eq:rope_laplace_first_moment}
L_n:=\sum_{i=1}^n \mathbb E X_{i,n}
=\int G(y)\,\mu_n(dy)
\longrightarrow
L:=\int G(y)\,\overline\Lambda(dy).
\end{equation}

\emph{Step 3}. Short-range two-hit bound.
Fix $1\le \ell\le m$. By Assumption~\ref{ass:rope_correlated}(iii), the pair $(x_i,x_{i+\ell})$ has a density bounded on a neighborhood of $\mathcal O_q\times\mathcal O_q$. Therefore, for all sufficiently small $t$,
\[
\mathbb P(U_i\le t,\,U_{i+\ell}\le t)
\le C't^{2/\alpha},
\]
uniformly in $i$ and $1\le \ell\le m$. Taking $t=R/\beta_n$ yields
\begin{equation}
\label{eq:rope_close_pair_bound}
\mathbb E[X_{i,n}X_{i+\ell,n}]
\le
\mathbb P(\beta_nU_i\le R,\,\beta_nU_{i+\ell}\le R)
\le \frac{C_R}{n^2}
\end{equation}
for all large $n$, uniformly in $i$ and $1\le \ell\le m$. 

On the other hand, if $|i-j|>m$, then $m$-dependence in Assumption~\ref{ass:rope_correlated}(i) gives independence, and \eqref{eq:rope_uniform_rare_bound} gives
\begin{equation}
\label{eq:rope_far_pair_bound}
\mathbb E[X_{i,n}X_{j,n}]
=\mathbb E X_{i,n}\,\mathbb E X_{j,n}
\le \frac{C_R}{n^2}.
\end{equation}

\emph{Step 4}. Blocking reduction to independent blocks.
Let $b_n:=\lfloor n^{1/2}\rfloor$. Partition $\{1,\dots,n\}$ into consecutive
large blocks of length $b_n$, separated by gaps of length $m$. More explicitly, set the block
\[
B_{r,n}:=\{(r-1)(b_n+m)+1,\dots,(r-1)(b_n+m)+b_n\}
\]
and we let $\mathcal{I}_n\subset\{1,\dots,n\}$ collect those $r$ such that $B_{r,n} \subset \{1,\dots,n\}$. Let $\mathcal B_n = \bigcup_{r\in\mathcal{I}_n} B_{r,n}$ be the union of these large blocks; see \Cref{fig:app_blocks} for an illustration.

\begin{figure}[htb]
  \centering
\begin{tikzpicture}[
  x=0.40cm,y=0.84cm,
  >=Latex,
  every node/.style={font=\small},
  block/.style={fill=blue!14,draw=blue!65!black,line width=0.55pt,rounded corners=1.0pt},
  gap/.style={fill=orange!25,draw=orange!80!black,line width=0.55pt,rounded corners=1.0pt},
  rem/.style={fill=orange!35,draw=orange!90!black,densely dashed,line width=0.55pt,rounded corners=1.0pt},
  brace/.style={decorate,decoration={brace,amplitude=4pt},line width=0.45pt},
  indextick/.style={gray!70,line width=0.25pt},
  note/.style={font=\scriptsize,align=center},
  legend/.style={font=\scriptsize,inner sep=2pt},
]

% Uniform-index schematic: every integer i is placed at x=i.
% The toy values b_n=5, m=2, n=29 are used only for the drawing.
\def\hh{0.31}

% Intervals on the same equally spaced index line.
\filldraw[block] (0.5,-\hh) rectangle (5.5,\hh);     % B_1
\filldraw[gap]   (5.5,-\hh) rectangle (7.5,\hh);     % gap length m
\filldraw[block] (7.5,-\hh) rectangle (12.5,\hh);    % B_2
\filldraw[gap]   (12.5,-\hh) rectangle (14.5,\hh);   % gap length m
\filldraw[block] (14.5,-\hh) rectangle (19.5,\hh);   % B_r
\filldraw[gap]   (19.5,-\hh) rectangle (21.5,\hh);   % gap length m
\filldraw[block] (21.5,-\hh) rectangle (26.5,\hh);   % last full block
\filldraw[rem]   (26.5,-\hh) rectangle (29.5,\hh);   % final remainder

% Axis and equally spaced integer ticks.
\draw[->,line width=0.55pt] (0.25,0) -- (30.0,0) node[right] {$i$};
\foreach \i in {1,...,29}{
  \draw[indextick] (\i,-\hh) -- (\i,\hh);
  \node[font=\tiny,below=4pt] at (\i,-\hh) {\i};
}

% Labels for blocks and gaps.
\node[above=5pt] at (3,\hh) {$B_{1,n}$};
\node[above=5pt] at (10,\hh) {$B_{2,n}$};
\node[above=5pt] at (17,\hh) {$B_{r,n}$};
\node[above=5pt] at (24,\hh) {$B_{|\mathcal I_n|,n}$};

\node[note,text=orange!80!black,above=5pt] at (6.5,\hh) {gap};
\node[note,text=orange!80!black,above=5pt] at (13.5,\hh) {gap};
\node[note,text=orange!80!black,above=5pt] at (20.5,\hh) {gap};
\node[note,text=orange!90!black,above=5pt] at (28,\hh) {final\\remainder};

% Braces showing the exact lengths in the uniform index coordinate.
\draw[brace] (14.5,1.05) -- (19.5,1.05)
  node[midway,above=5pt] {$b_n$};
\draw[brace] (19.5,1.05) -- (21.5,1.05)
  node[midway,above=5pt] {$m$};

% Membership callouts.
\draw[-{Latex[length=2.1mm]},blue!70!black,line width=0.5pt]
  (16.0,-1.10) -- (17.0,-0.35);
\node[note,text=blue!70!black] at (16.0,-1.38)
  {$i\in B_{r,n}\subset \mathcal B_n$};

\draw[-{Latex[length=2.1mm]},orange!90!black,line width=0.5pt]
  (22.3,-1.10) -- (20.5,-0.35);
\node[note,text=orange!90!black] at (22.3,-1.38)
  {$i\in \mathcal G_n$};

\end{tikzpicture}
  \caption{Large blocks are separated by gaps of length $m$, so the corresponding block variables are independent under $m$-dependence.}
  \label{fig:app_blocks}
\end{figure}

Let $\mathcal G_n:=\{1,\dots,n\}\setminus\mathcal B_n$ be the union of the gaps and the final remainder. Then
\(
|\mathcal G_n|=O\Bigl(\frac{n}{b_n}+b_n\Bigr)=o(n).
\)
Since $0\le X_{i,n}\le1$,
\[
\left|
\prod_{i=1}^n(1-X_{i,n})-
\prod_{i\in\mathcal B_n}(1-X_{i,n})
\right|
\le \sum_{i\in\mathcal G_n}X_{i,n}.
\]
Using \eqref{eq:rope_uniform_rare_bound},
\begin{align}\label{eq:app_neglible_gap}
\mathbb E\sum_{i\in\mathcal G_n}X_{i,n}
\le \frac{C_R|\mathcal G_n|}{n}\longrightarrow0.
\end{align}
Therefore
\begin{equation}
\label{eq:rope_gap_removal}
\mathbb E\prod_{i=1}^n(1-X_{i,n})
=
\mathbb E\prod_{i\in\mathcal B_n}(1-X_{i,n})+o(1).
\end{equation}
The large blocks are separated by more than $m$ indices, so the corresponding
collections of variables are independent. Hence
\[
\mathbb E\prod_{i\in\mathcal B_n}(1-X_{i,n})
=
\prod_r Q_{r,n},
\qquad
Q_{r,n}:=\mathbb E\prod_{i\in B_{r,n}}(1-X_{i,n}).
\]
For each block, define
\(
a_{r,n}:=\sum_{i\in B_{r,n}}\mathbb E X_{i,n}.
\)
The standard Bonferroni bound
\[
\left|\prod_{i\in B}(1-x_i)-\Bigl(1-\sum_{i\in B}x_i\Bigr)\right|
\le \sum_{\substack{i<j\\ i,j\in B}}x_ix_j,
\qquad 0\le x_i\le1,
\]
gives
\[
Q_{r,n}=1-a_{r,n}+\varepsilon_{r,n},
\qquad
|\varepsilon_{r,n}|
\le
\sum_{\substack{i<j\\ i,j\in B_{r,n}}}\mathbb E[X_{i,n}X_{j,n}].
\]
By \eqref{eq:rope_close_pair_bound} and \eqref{eq:rope_far_pair_bound},
\begin{align}\label{eq:app_epsrn_bound}
|\varepsilon_{r,n}|
\le C_R b_n^2 / n^2.
\end{align}
Since the number of large blocks is $O(n/b_n)$,
\begin{equation}
\label{eq:rope_block_error_sum}
\sum_{r\in\mathcal{I}_n} |\varepsilon_{r,n}|
\le C_R\frac{b_n}{n}\longrightarrow0.
\end{equation}
Moreover, by \eqref{eq:rope_uniform_rare_bound},
\begin{align}\label{eq:app_arn_bound}
\max_{r\in\mathcal{I}_n} a_{r,n}\le C_R\frac{b_n}{n}\longrightarrow0,
\end{align}
and by \eqref{eq:rope_laplace_first_moment} together with \eqref{eq:app_neglible_gap},
\[
\sum_{r\in\mathcal{I}_n} a_{r,n}
=\sum_{i\in\mathcal B_n}\mathbb E X_{i,n}
=L_n+o(1)
\longrightarrow L.
\]
By \eqref{eq:app_epsrn_bound} and \eqref{eq:app_arn_bound}, for all large $n$, $| -a_{r,n}+\varepsilon_{r,n}|\le1/2$ uniformly in $r$, and therefore
\[
\sum_{r\in\mathcal{I}_n} \log Q_{r,n}
=
\sum_{r\in\mathcal{I}_n} \log(1-a_{r,n}+\varepsilon_{r,n})
= -\sum_{r\in\mathcal{I}_n} a_{r,n}
+\sum_{r\in\mathcal{I}_n}\varepsilon_{r,n}
+O\Bigl(\sum_{r\in\mathcal{I}_n}(a_{r,n} ^2+\varepsilon_{r,n} ^2)\Bigr).
\]
The last error term tends to zero because
\[
\sum_{r\in\mathcal{I}_n} a_{r,n}^2\le (\max_r a_{r,n})\sum_{r\in\mathcal{I}_n} a_{r,n}\to0,
\qquad
\sum_{r\in\mathcal{I}_n} \varepsilon_{r,n}^2\le (\max_{r\in\mathcal{I}_n}|\varepsilon_{r,n}|)
\sum_{r\in\mathcal{I}_n}|\varepsilon_{r,n}|\to0.
\]
Combining this with
\eqref{eq:rope_block_error_sum} gives
\[
\sum_{r\in\mathcal{I}_n} \log Q_{r,n}\longrightarrow -L.
\]
Consequently,
\begin{equation}
\label{eq:rope_block_product_limit}
\mathbb E\prod_{i\in\mathcal B_n}(1-X_{i,n})
=\prod_{r\in\mathcal{I}_n} Q_{r,n}
\longrightarrow e^{-L}.
\end{equation}

Finally, combining \eqref{eq:rope_gap_removal} and \eqref{eq:rope_block_product_limit}, we get
\[
\mathbb E\exp\Bigl(-\int_{\mathbb R_+} \varphi(y)\,\Np_n(dy)\Bigr)
\longrightarrow
\exp\Bigl(-\int_{\mathbb R_+}(1-e^{-\varphi(y)})\,\overline\Lambda(dy)\Bigr) ,
\]
which implies the point process convergence
\(
\Xp_n\Rightarrow \Xp
\)
in \(M_p(\mathbb R_+)\).
% This proves the lemma.
\end{proof}

\begin{proof}[Proof of \Cref{prop:rope_phase_law}]

Define the empirical measures on $\mathbb T^L$ by
\[
\eta_n:=\frac1n\sum_{j=1}^n \delta_{j\theta\bmod 1}.
\]
For $k\in\mathbb Z^L$, their Fourier coefficients are
\[
\widehat\eta_n(k)
:=\int_{\mathbb T^L} e^{2\pi \mathrm{i} k\cdot x}\,\eta_n(dx)
=\frac1n\sum_{j=1}^n e^{2\pi \mathrm{i} j k\cdot \theta}.
\]
If $k\cdot\theta\in\mathbb Z$, then every term equals $1$, so $\widehat\eta_n(k)=1$. If $k\cdot\theta\notin\mathbb Z$, then the right-hand side is a normalized geometric sum and therefore converges to $0$. Hence
\[
\widehat\eta_n(k)\longrightarrow \bone_{\{k\cdot\theta\in\mathbb Z\}}.
\]
These are exactly the Fourier coefficients of Haar probability measure $m_{\mathcal{H}_q}$ on $\mathcal{H}_q$ \cite{rudin2017fourier}. Therefore $\eta_n\Rightarrow m_{\mathcal{H}_q}$ on $\mathbb T^L$. Pushing forward by the continuous map $\Phi_q$ yields
\[
\frac1n\sum_{j=1}^n \delta_{u_j}
=(\Phi_q)_{\#}\eta_n
\Rightarrow
(\Phi_q)_{\#}m_{\mathcal{H}_q}
=\pi_q.
\]
This proves the claim.
\end{proof}

\begin{proof}[Proof of \Cref{thm:rope_critical}]
To pass from the point-process limit to the softmax denominator, we use a strengthened Laplace-functional convergence, which essentially follows the proof of \Cref{lem:rope_ppp}. Let
\[
    Z_n:=\int e^{-y}\Xp_n(\mathrm{d}y), \qquad
    Z:=\int e^{-y}\Xp(\mathrm{d}y).
\]
For every \(\varphi\in C_c(\mathbb R_+)\), $\varphi \geq 0$ 
and every \(s\ge0\), set
\[
    G_s(y):=1-\exp\{-\varphi(y)-s e^{-y}\}.
\]
Then
\[
    0\le G_s(y)\le 1-e^{-\varphi(y)}+s e^{-y},
\]
and the right-hand side is integrable against the limiting intensity, since
\(
    \int_0^\infty e^{-y}y^{1/\alpha-1}\,dy<\infty.
\)
The same blocking argument used for the compactly supported Laplace functional in the proof of \Cref{lem:rope_ppp} therefore gives
\[
\begin{aligned}
    \mathbb E\exp\left\{
        -\int \varphi(y)\Xp_n(\mathrm{d}y)-sZ_n
    \right\}
    &=
    \mathbb E\prod_i\bigl(1-G_s(\beta_nU_i)\bigr)  \\
    &\longrightarrow
    \exp\left\{
        -\int G_s(y)\Lambda(\mathrm{d}y)
    \right\}                                      \\
    &=
    \mathbb E\exp\left\{
        -\int \varphi(y)\Xp(\mathrm{d}y)-sZ
    \right\}.
\end{aligned}
\]
This is exactly convergence of the joint Laplace transform of $\left(\Xp_n,\int e^{-y}\Xp_n(\mathrm{d}y)\right)$.
Varying \(\varphi\) over a convergence-determining class in \(C_c(\mathbb R_+)\)
and \(s\ge0\), this gives
\[
    \left(\Xp_n,\int e^{-y}\Xp_n(\mathrm{d}y)\right)
    \Rightarrow
    \left(\Xp,\int e^{-y}\Xp(\mathrm{d}y)\right).
\]

It remains to pass from the point process and its denominator to the ordered
softmax weights. Let
\[
0<Y_{(1)}^n<Y_{(2)}^n<\cdots<Y_{(n)}^n
\]
denote the atoms of \(\Xp_n\) in increasing order, so that
\(Y_{(i)}^n=\beta_n U_{(i)}\) and
\(
A_{(i)}=\tfrac{e^{-Y_{(i)}^n}}{Z_n}.
\)
The limiting intensity measure \(\overline{\Lambda}\) is diffuse, and
\(\overline{\Lambda}([0,R])\to\infty\) as \(R\to\infty\). Hence the limiting Poisson
point process \(\Xp\) is simple, has infinitely many atoms, and its first \(k\)
atoms \(0<Y_{(1)}<\cdots<Y_{(k)}\) are finite almost surely. Moreover
\[
Z=\int e^{-y}\Xp(\mathrm{d}y)=\sum_{j\ge 1}e^{-Y_{(j)}}
\]
is almost surely finite and strictly positive: finiteness follows from
\(\int_0^\infty e^{-y}\overline{\Lambda}(\mathrm{d}y)<\infty\), while positivity follows from the
existence of atoms. Therefore the map
\[
(\mu,z)\longmapsto
\left(
\frac{e^{-y_{(1)}(\mu)}}{z},\ldots,
\frac{e^{-y_{(k)}(\mu)}}{z}
\right),
\]
where \(y_{(i)}(\mu)\) is the \(i\)-th atom of \(\mu\), is continuous at
\((\Xp,Z)\) almost surely. Applying the continuous mapping theorem to the joint
convergence above gives
\[
(A_{(1)},\ldots,A_{(k)})
=
\left(
\frac{e^{-Y_{(1)}^n}}{Z_n},\ldots,
\frac{e^{-Y_{(k)}^n}}{Z_n}
\right)
\Rightarrow
\left(
\frac{e^{-Y_{(1)}}}{\sum_{j\ge1}e^{-Y_{(j)}}},\ldots,
\frac{e^{-Y_{(k)}}}{\sum_{j\ge1}e^{-Y_{(j)}}}
\right).
\]
This is exactly the claimed convergence to \((W_1,\ldots,W_k)\), and proves
Theorem F.3.
\end{proof}

% \end{appendix}

\end{appendix}

\newpage

\addtocontents{toc}{\protect\setcounter{tocdepth}{2}}
\bibliographystyle{alpha}
\bibliography{references}

% \addtocontents{toc}{\protect\setcounter{tocdepth}{-1}}

\end{document}